\documentclass{article} 
\usepackage{iclr2021_conference,times}


\usepackage{amsmath,amsfonts,bm}

















\def\1{\bm{1}}








\def\va{{\bm{a}}}
\def\vb{{\bm{b}}}

\def\ve{{\bm{e}}}

\def\vq{{\bm{q}}}
\def\vr{{\bm{r}}}
\def\vs{{\bm{s}}}

\def\vu{{\bm{u}}}
\def\vv{{\bm{v}}}
\def\vw{{\bm{w}}}
\def\vx{{\bm{x}}}
\def\vy{{\bm{y}}}
\def\vz{{\bm{z}}}


\def\mA{{\bm{A}}}
\def\mB{{\bm{B}}}

\def\mF{{\bm{F}}}

\def\mI{{\bm{I}}}

\def\mM{{\bm{M}}}

\def\mS{{\bm{S}}}
\def\mT{{\bm{T}}}
\def\mU{{\bm{U}}}

\def\mW{{\bm{W}}}
\def\mX{{\bm{X}}}

\DeclareMathAlphabet{\mathsfit}{\encodingdefault}{\sfdefault}{m}{sl}
\SetMathAlphabet{\mathsfit}{bold}{\encodingdefault}{\sfdefault}{bx}{n}
\newcommand{\tens}[1]{\bm{\mathsfit{#1}}}
\def\tA{{\tens{A}}}

\def\tH{{\tens{H}}}

\def\tM{{\tens{M}}}

\def\tT{{\tens{T}}}











\newcommand{\R}{\mathbb{R}}



\DeclareMathOperator{\sign}{sign}

\usepackage{cy}
\usepackage{nicefrac}
\usepackage{caption}
\usepackage{subcaption}

\newcommand{\vbeta}{{\bm{\beta}}}
\newcommand{\vbetafc}{\vbeta_{\rm fc}}
\newcommand{\vbetadiag}{\vbeta_{\rm diag}}
\newcommand{\vbetaconv}{\vbeta_{\rm conv}}
\newcommand{\bTheta}{\bm{\Theta}}
\newcommand{\bThetafc}{\bTheta_{\rm fc}}
\newcommand{\bThetadiag}{\bTheta_{\rm diag}}
\newcommand{\bThetaconv}{\bTheta_{\rm conv}}
\newcommand{\ffc}{f_{\rm fc}}
\newcommand{\fdiag}{f_{\rm diag}}
\newcommand{\fconv}{f_{\rm conv}}
\newcommand{\tMfc}{\tM_{\rm fc}}
\newcommand{\tMdiag}{\tM_{\rm diag}}
\newcommand{\tMconv}{\tM_{\rm conv}}
\newcommand{\vrho}{{\bm{\rho}}}
\newcommand{\veta}{{\bm{\eta}}}
\newcommand{\vnu}{{\bm{\nu}}}

\allowdisplaybreaks


\renewcommand{\Re}{\mathop{\rm Re}}

\newcommand{\Arg}{\mathop{\rm arg}}

\usepackage{enumitem}

\usepackage{hyperref}
\usepackage{url}

\usepackage{color,colortbl}
\definecolor{darkblue}{rgb}{0.0,0.0,0.65}
\definecolor{darkred}{rgb}{0.65,0.0,0.0}
\hypersetup{
  colorlinks = true,
  citecolor  = darkblue,
  linkcolor  = darkred,
  filecolor  = darkblue,
  urlcolor   = darkblue,
}


\title{A unifying view on implicit bias in training linear neural networks}


\author{Chulhee Yun\thanks{Based on work performed during internship at Google Research} \\
MIT\\
\texttt{chulheey@mit.edu} \\
\And
Shankar Krishnan\\
Google Research \\
\texttt{skrishnan@google.com} \\
\And
Hossein Mobahi \\
Google Research \\
\texttt{hmobahi@google.com}
}

%

\iclrfinalcopy 
\begin{document}

\maketitle

\begin{abstract}

We study the implicit bias of gradient flow (i.e., gradient descent with infinitesimal step size) on linear neural network training. We propose a tensor formulation of neural networks that includes fully-connected, diagonal, and convolutional networks as special cases, and investigate the linear version of the formulation called linear tensor networks. With this formulation, we can characterize the convergence direction of the network parameters as singular vectors of a tensor defined by the network. For $L$-layer linear tensor networks that are orthogonally decomposable, we show that gradient flow on separable classification finds a stationary point of the $\ell_{2/L}$ max-margin problem in a ``transformed'' input space defined by the network. For underdetermined regression, we prove that gradient flow finds a global minimum which minimizes a norm-like function that interpolates between weighted $\ell_1$ and $\ell_2$ norms in the transformed input space. Our theorems subsume existing results in the literature while removing standard convergence assumptions. We also provide experiments that corroborate our analysis.


\end{abstract}

\section{Introduction}
Overparameterized neural networks have infinitely many solutions that achieve zero training error, and such global minima have different generalization performance. Moreover, training a neural network is a high-dimensional nonconvex problem, which is typically intractable to solve.
However, the success of deep learning indicates that first-order methods such as gradient descent or stochastic gradient descent (GD/SGD) not only (a) succeed in finding global minima, but also (b) are biased towards solutions that generalize well, which largely has remained a mystery in the literature.

To explain part (a) of the phenomenon, there is a growing literature studying the convergence of GD/SGD on overparameterized neural networks (e.g., \citet{du2018gradientB, du2018gradientA, allen2018convergence, zou2018stochastic,jacot2018neural,oymak2020towards}, and many more). 
There are also convergence results that focus on linear networks, without nonlinear activations \citep{bartlett2018gradient,arora2019convergence,wu2019global,du2019width,hu2020provable}.
These results typically focus on the convergence of loss, hence do not address \emph{which} of the many global minima is reached.

Another line of results tackles part (b), by studying the implicit bias or regularization of gradient-based methods on neural networks or related problems \citep{gunasekar2017implicit,gunasekar2018characterizing,gunasekar2018implicit,arora2018optimization,soudry2018implicit,ji2019gradient,arora2019implicit,woodworth2020kernel,chizat2020implicit,gissin2020implicit}. These results have shown interesting progress that even without explicit regularization terms in the training objective, algorithms such as GD applied on neural networks have an \emph{implicit bias} towards certain solutions among the many global minima. 
However, the results along this line are still in the preliminary steps, most of them pertaining only to multilinear models such as linear models, linear neural networks and matrix/tensor decomposition.

Our paper is motivated from two limitations that are common in the implicit bias literature. First, most analyses of implicit bias are done in a \emph{case-by-case} manner. A given theorem on a specific network does not provide useful insights on other architectures, which calls for a unifying framework that can incorporate different architectures in a single formulation.
Next, in proving implicit bias results, many existing theorems rely on \emph{convergence assumptions} such as global convergence of loss to zero and/or directional convergence of parameters and gradients. Ideally, such convergence assumptions should be removed because they cannot be tested \emph{a priori} and there are known examples where optimization algorithms do not converge to global minima under certain initializations \citep{bartlett2018gradient, arora2019convergence}.

\subsection{Summary of our contributions}
We study the implicit bias of gradient flow (GD with infinitesimal step size) on linear neural networks.
Following recent progress on this topic, we consider classification and regression problems that have multiple solutions attaining zero training error.
In light of the limitations discussed above, we provide theorems on a \textbf{general tensor framework} of networks that yield corollaries on specific architecture to recover known results.
We also make significant efforts to \textbf{remove convergence assumptions}; our theorems rely on less assumptions (if any) compared to the existing results in the literature.
Some key contributions are summarized below.
\begin{list}{\tiny{$\bullet$}}{\leftmargin=1.8em}
\setlength\itemsep{0em}
\item We propose a general \emph{tensor formulation} of nonlinear neural networks which includes many network architectures considered in the literature. For the purpose of implicit bias analysis, we focus on the linear version of this formulation (i.e., no nonlinear activations), called \emph{linear tensor networks}.
\item For linearly separable classification, we prove that linear tensor network parameters converge in direction to \emph{singular vectors} of a tensor defined by the network. 
As a \emph{corollary}, we show that linear fully-connected networks converge to the $\ell_2$ max-margin solution~\citep{ji2020directional}.
\item For separable classification, we further show that if the linear tensor network is orthogonally decomposable (Assumption~\ref{assm:complex-diag-decomp}), the gradient flow finds the $\ell_{2/{\rm depth}}$ max-margin solution in the singular value space, leading the parameters to converge to the \emph{top singular vectors} of the tensor when ${\rm depth} = 2$. This theorem subsumes known results on linear convolutional networks and diagonal networks proved in \citet{gunasekar2018implicit}, under fewer convergence assumptions.
\item For underdetermined linear regression, we characterize the limit points of gradient flow on orthogonally decomposable networks (Assumption~\ref{assm:complex-diag-decomp}). Proven without convergence assumptions, this theorem covers results on deep matrix factorization~\citep{arora2019implicit} as a special case, and extends a recent result \citep{woodworth2020kernel} to a broader class of networks.
\item For underdetermined linear regression with deep linear fully-connected networks, we prove that the network converges to the minimum $\ell_2$ norm solutions as we scale the initialization to zero.
\item Lastly, we present simple experiments that corroborate our theoretical analysis. Figure~\ref{fig:reg} shows that our predictions of limit points match with solutions found by GD. 
\end{list}



\begin{figure}[t]
     \centering
    
    \begin{subfigure}[t]{0.48\textwidth}
         \centering
         \includegraphics[width=\textwidth]{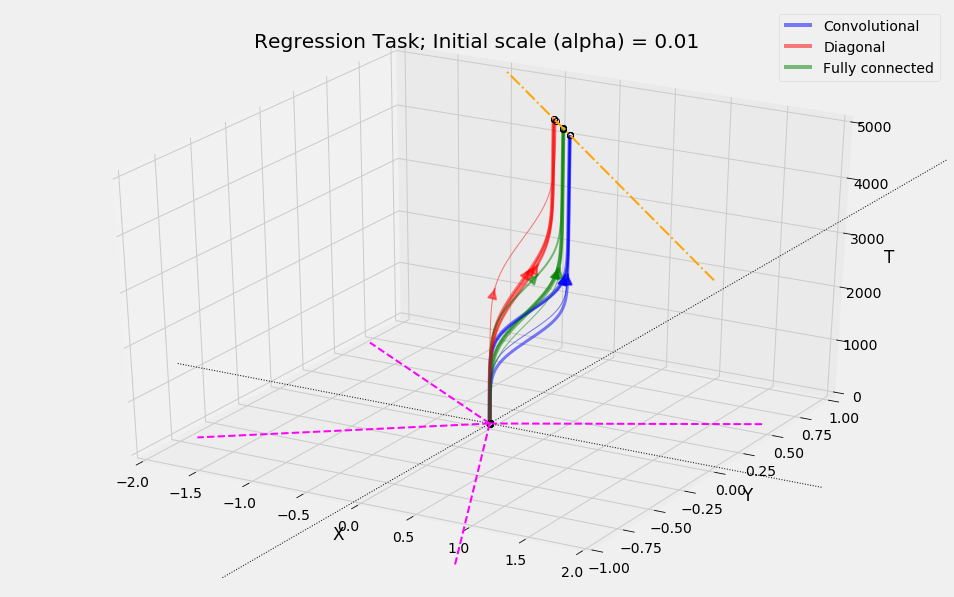}
     \end{subfigure}
     ~
     \begin{subfigure}[t]{0.48\textwidth}
         \centering
         \includegraphics[width=\textwidth]{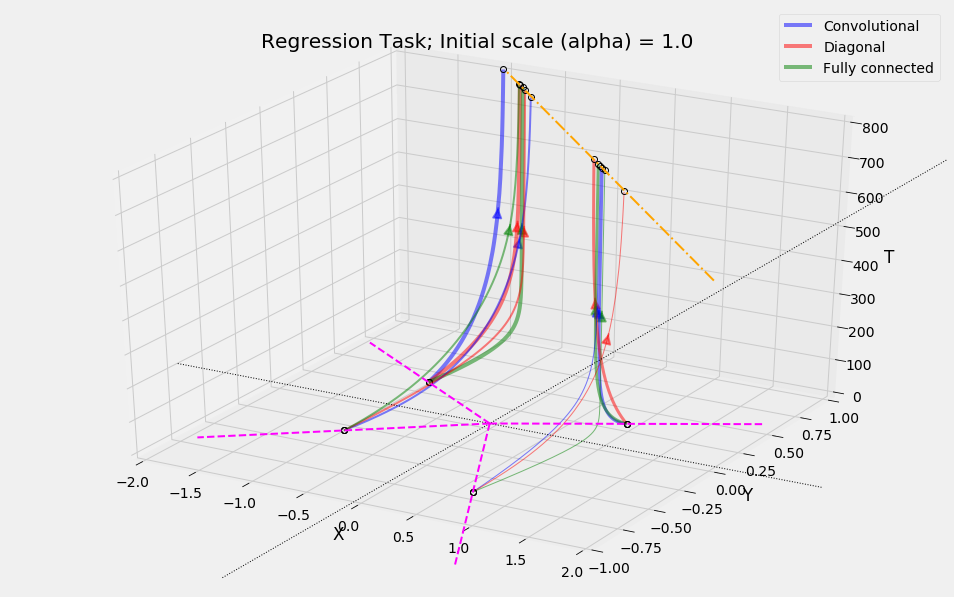}
     \end{subfigure}
    
    
    \caption{Gradient descent trajectories of linear coefficients of linear fully-connected, diagonal, and convolutional networks on a regression task, initialized with different initial scales $\alpha = 0.01, 1$.
    Networks are initialized at the same coefficients (circles on purple lines), but follow different trajectories due to implicit biases of networks induced from their architecture. The figures show that our theoretical predictions on limit points (circles on yellow line, the set of global minima) agree with the solution found by GD. For details of the experimental setup, see Section~\ref{sec:exp}. }
    \label{fig:reg}
\end{figure}
\section{Problem settings and related works}
We first define notation used in the paper.
Given a positive integer $a$, let $[a]\defeq \{1, \dots, a\}$.
We use $\mI_d$ to denote the $d \times d$ identity matrix.
Given a matrix $\mA$, we use $\vect(\mA)$ to denote its vectorization, i.e., the concatenation of all columns of $\mA$.
For two vectors $\va$ and $\vb$, let $\va \otimes \vb$ be their tensor product, $\va \odot \vb$ be their element-wise product, and $\va^{\odot k}$ be the element-wise $k$-th power of $\va$.
Given an order-$L$ tensor $\tA \in \reals^{k_1 \times \cdots \times k_L}$, we use $[\tA]_{j_1, \dots, j_L}$ to denote the $(j_1, j_2, \dots, j_L)$-th element of $\tA$, where $j_l \in [k_l]$ for all $l \in [L]$.
In element indexing, we use $\cdot$ to denote all indices in the corresponding dimension, and $a:b$ to denote all indices from $a$ to $b$. For example, for a matrix $\mA$, $[\mA]_{\cdot,4:6}$ denotes a submatrix that consists of 4th--6th columns of $\mA$. 
The square bracket notation for indexing overloads with $[a]$ when $a \in \naturals$, but they will be distinguishable from the context.
Since element indices start from $1$, we re-define the modulo operation $a \bmod d \defeq a - \lfloor \frac{a-1}{d} \rfloor d \in [d]$ for $a > 0$.
We use $\ve^k_j$ to denote the $j$-th stardard basis vector of the vector space $\reals^{k}$.
Lastly, we define the multilinear multiplication between a tensor and linear maps, which can be viewed as a generalization of left- and right-multiplication on a matrix.
Given a tensor $\tA \in \reals^{k_1 \times \cdots \times k_L}$ and linear maps $\mB_l \in \reals^{p_l \times k_l}$ for $l \in [L]$, we define the multilinear multiplication $\circ$ between them as
\begin{align*}
    \tA \circ (\mB_1^T, \mB_2^T, \dots, \mB_L^T)
    &= \sum\nolimits_{j_1, \dots, j_L} [\tA]_{j_1, \dots, j_L} (\ve^{k_1}_{j_1} \otimes \cdots \otimes \ve^{k_L}_{j_L}) \circ (\mB_1^T, \dots, \mB_L^T)\\
    &\defeq \sum\nolimits_{j_1, \dots, j_L} [\tA]_{j_1, \dots, j_L} (\mB_1 \ve^{k_1}_{j_1} \otimes \cdots \otimes \mB_L \ve^{k_L}_{j_L}) 
    \in \reals^{p_1\times \cdots \times p_L}\,.
\end{align*}

\subsection{Problem settings}
\label{sec:prob-setting}
We are given a dataset $\{(\vx_i, y_i)\}_{i=1}^n$, where $\vx_i \in \reals^d$ and $y_i \in \reals$. We let $\mX \in \reals^{n \times d}$ and $\vy \in \reals^n$ be the data matrix and the label vector, respectively.
We study binary classification and linear regression in this paper, focusing on the settings where there exist \emph{many} global solutions.
For binary classification, we assume $y_i \in \{\pm 1\}$ and that the data is separable: there exists a unit vector $\vz$ and a constant $\gamma > 0$ such that $y_i \vx_i^T \vz \geq \gamma$ for all $i \in [n]$.
For regression, we consider the underdetermined case ($n \leq d$) where there are many parameters $\vz \in \reals^d$ such that $\mX \vz = \vy$. Throughout the paper, we assume that $\mX$ has full row rank.

We use $f(\cdot;\bTheta): \reals^d \to \reals$ to denote a neural network parametrized by $\bTheta$.
Given the network and the dataset, we consider minimizing the training loss $\mc L(\bTheta) \defeq \sum\nolimits_{i=1}^n \ell(f(\vx_i; \bTheta), y_i)$ over $\bTheta$.
Following previous results (e.g., \citet{lyu2020gradient, ji2020directional}), we use the exponential loss $\ell(\hat y, y) = \exp(-\hat y y)$ for classification problems. For regression, we use the squared error loss $\ell(\hat y, y) = \half (\hat y-y)^2$.
On the algorithm side, we minimize $\mc L$ using gradient flow, which can be viewed as GD with infinitesimal step size. The gradient flow dynamics is defined as $\frac{d}{dt} \bTheta = - \nabla_{\bTheta} \mc L(\bTheta)$.

\subsection{Related works}
\label{sec:related}
\textbf{Gradient flow/descent in separable classification.~~}
For linear models $f(\vx;\vz) = \vx^T \vz$ with separable data, \citet{soudry2018implicit} show that the GD run on $\mc L$ drives $\norm{\vz}$ to $\infty$, but $\vz$ converges in direction to the $\ell_2$ max-margin classifier. The limit direction of $\vz$ is aligned with the solution of
\begin{equation}
\label{eq:svm}
    \minimize\nolimits_{\vz \in \reals^d}\quad \norm{\vz} \quad \subjectto \quad y_i \vx_i^T\vz \geq 1 \text{ for } i \in [n],
\end{equation}
where the norm in the cost is the $\ell_2$ norm.
\citet{nacson2019convergence,nacson2019stochastic,gunasekar2018characterizing,ji2019implicit,ji2019refined} extend these results to other (stochastic) algorithms and non-separable settings.

\citet{gunasekar2018implicit} study the same problem on linear neural networks and show that GD exhibits different implicit biases depending on the architecture. 
The authors show that the linear coefficients of the network converges in direction to the solution of \eqref{eq:svm} with different norms: $\ell_2$ norm for linear fully-connected networks, $\ell_{2/L}$ (quasi-)norm for diagonal networks, and DFT-domain $\ell_{2/L}$ (quasi-)norm for convolutional networks with full-length filters. Here, $L$ denotes the depth. We note that \citet{gunasekar2018implicit} assume that GD globally minimizes the loss, and the network parameters and the gradient with respect to the linear coefficients converge in direction.
Subsequent results \citep{ji2019gradient,ji2020directional} remove such assumptions for linear fully-connected networks. 

A recent line of results \citep{nacson2019lexicographic,lyu2020gradient,ji2020directional} studies general homogeneous models and show divergence of parameters to infinity, monotone increase of smoothed margin, directional convergence and alignment of parameters (see Section~\ref{sec:clsf} for details). \citet{lyu2020gradient} also characterize the limit direction of parameters as the KKT point of a nonconvex max-margin problem similar to \eqref{eq:svm}, but this characterization does not provide useful insights for the functions $f(\cdot;\bTheta)$ represented by specific architectures, because the formulation is in the parameter space $\bTheta$. Also, these results require that gradient flow/descent has already reached 100\% training accuracy. Although we study a more restrictive set of networks (i.e., deep linear), we provide a more complete characterization of the implicit bias for the functions $f(\cdot;\bTheta)$, without assuming 100\% training accuracy.

\textbf{Gradient flow/descent in linear regression.~~}
It is known that for linear models $f(\vx;\vz) = \vx^T \vz$, GD converges to the global minimum that is closest in $\ell_2$ distance to the initialization (see e.g., \citet{gunasekar2018characterizing}). However, relatively less is known for deep networks, even for linear networks. This is partly because the parameters do not diverge to infinity, hence making limit points highly dependent on the initialization; this dependency renders analysis difficult.
A related problem of matrix sensing aims to minimize $\sum_{i=1}^n (y_i - \< \mA_i, \mW_1 \cdots \mW_L \> )^2$ over $\mW_1, \dots, \mW_L \in \reals^{d\times d}$. It is shown in \citet{gunasekar2017implicit, arora2019implicit} that if the sensor matrices $\mA_i$ commute and we initialize all $\mW_l$'s to $\alpha \mI$, GD finds the minimum nuclear norm solution as $\alpha \to 0$.

\citet{chizat2019lazy} show that if a network is zero at initialization, and we scale the network output by a factor of $\alpha \to \infty$, then the GD dynamics enters a ``lazy regime'' where the network behaves like a first-order approximation at its initialization, as also seen in results studying kernel approximations of neural networks and convergence of GD in the corresponding RKHS (e.g., \citet{jacot2018neural}). 

\citet{woodworth2020kernel} study linear regression with a diagonal network of the form $f(\vx;\vw_+,\vw_-) = \vx^T(\vw_+^{\odot L}-\vw_-^{\odot L})$, where $\vw_+$ and $\vw_-$ are identically initialized $\vw_+(0) = \vw_-(0) = \alpha \bar \vw$. The authors show that the global minimum reached by GD minimizes a norm-like function which interpolates between (weighted) $\ell_1$ norm ($\alpha \to 0$) and $\ell_2$ norm ($\alpha \to \infty$).
In our paper, we consider a more general class of orthogonally decomposable networks, and obtain similar results interpolating between weighted $\ell_1$ and $\ell_2$ norms. We also remark that our results include the results in \citet{arora2019implicit} as a special case, and we do not assume convergence to global minima, as done in \citet{gunasekar2017implicit,arora2019implicit,woodworth2020kernel}.




\section{Tensor formulation of neural networks}
\label{sec:tensorform}
In this section, we present a general tensor formulation of neural networks.
Given an input $\vx \in \reals^d$, the network uses a linear map $\tM$ that maps $\vx$ to an order-$L$ tensor $\tM(\vx) \in \reals^{k_1 \times \cdots \times k_L}$,
where $L \geq 2$. Using parameters $\vv_l \in \reals^{k_l}$ and activation $\phi$, the network computes its layers as the following:
\begin{equation}
\label{eq:tensorrep}
\begin{aligned}
    \tH_1(\vx) &= \phi\left (\tM(\vx) \circ (\vv_1, \mI_{k_2}, \dots, \mI_{k_L}) \right ) \in \reals^{k_2 \times \cdots \times k_L},\\
    \tH_l(\vx) &= \phi\left (\tH_{l-1}(\vx) \circ (\vv_l, \mI_{k_{l+1}}, \dots, \mI_{k_L}) \right ) \in \reals^{k_{l+1} \times \dots, k_L},~~\text{ for } l = 2, \dots, L-1,\\
    f(\vx;\bTheta) &= \tH_{L-1}(\vx) \circ \vv_L \in \reals.
\end{aligned}
\end{equation}
\begin{figure}[t]
	\centering
	\includegraphics[width=0.8\textwidth]{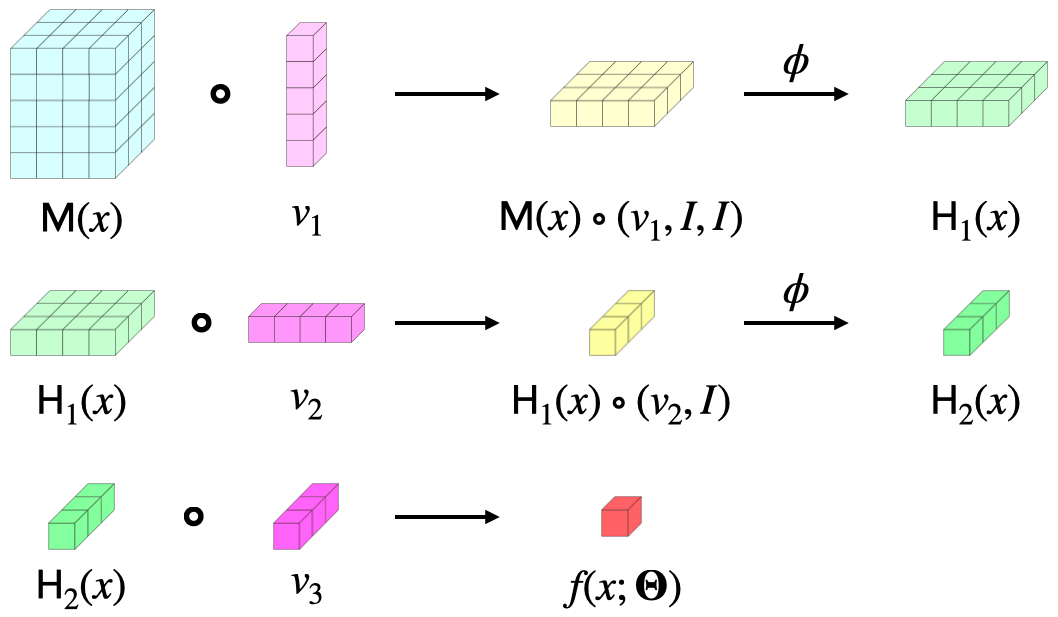}
	\caption{Illustration of tensor formulation, for $L = 3, k_1 = 5, k_2 = 4, k_3 = 3$.}
	\label{fig:tensor}
\end{figure}

We use $\bTheta$ to denote the collection of all parameters $(\vv_1, \dots, \vv_L)$. We call $\tM(\vx)$ the \emph{data tensor}.
Figure~\ref{fig:tensor} illustrates how our new tensor formulation calculates its scalar output in a feedforward manner. Each row in the figure represents a layer. At the $l$-th hidden layer, the parameter vector $\vv_l$ takes inner products with fibers in the order-$(L+1-l)$ tensor $\tH_{l-1}(\vx)$ along the corresponding dimension. The result is an order-$(L-l)$ tensor, which goes through the entry-wise activation function $\phi$ and becomes the output $\tH_l(\vx)$ of the hidden layer.

Although this new formulation may look a bit odd in the first glance, it is general enough to capture many network architectures considered in the literature, including fully-connected networks, diagonal networks, and circular convolutional networks. 
We formally define these architectures below.

\textbf{Diagonal networks.}~~
An $L$-layer diagonal network is written as
\begin{align}
\label{eq:diagnet}
    \fdiag (\vx; \bThetadiag) = \phi(\cdots\phi (\phi(\vx \odot \vw_1) \odot \vw_2)\cdots \odot \vw_{L-1})^T \vw_L,
\end{align}
where $\vw_l \in \reals^d$ for $l \in [L]$. The representation of $\fdiag$ as the tensor form~\eqref{eq:tensorrep} is straightforward. Let $\tMdiag(\vx) \in \reals^{d \times \cdots \times d}$ have 
   $[\tMdiag(\vx)]_{j,j,\dots,j} = [\vx]_j$,
while all the remaining entries of $\tMdiag(\vx)$ are set to zero.
We can set $\vv_l = \vw_l$ for all $l$, and $\tM = \tMdiag$ to verify that \eqref{eq:tensorrep} and \eqref{eq:diagnet} are equivalent.


\textbf{Circular convolutional networks.}~~
The tensor formulation~\eqref{eq:tensorrep} includes convolutional networks
\begin{align}
\label{eq:convnet}
    \fconv(\vx; \bThetaconv) = \phi(\cdots\phi(\phi(\vx \star \vw_1) \star \vw_2 ) \cdots  \star \vw_{L-1})^T \vw_L,
\end{align}
where $\vw_l \in \reals^{k_l}$ with $k_l \leq d$ and $k_L = d$, and $\star$ defines the circular convolution: for any $\va \in \reals^d$ and $\vb \in \reals^k$ ($k \leq d$), we have $\va \star \vb \in \reals^d$ defined as
$[\va \star \vb]_i = \sum_{j=1}^k [\va]_{(i+j-1) \bmod d} [\vb]_j$, for $i \in [d]$.
Define $\tMconv(\vx) \in \reals^{k_1 \times \cdots \times k_L}$ as
$[\tMconv(\vx)]_{j_1,j_2,\dots,j_L} = 
[\vx]_{(\sum_{l=1}^L j_l-L+1) \bmod d}$
for $j_l \in [k_l]$, $l \in [L]$. 
Setting $\vv_l = \vw_l$ and $\tM = \tMconv$, one can verify that \eqref{eq:tensorrep} and \eqref{eq:convnet} are identical.

\textbf{Fully-connected networks.}~~
An $L$-layer fully-connected network is defined as
\begin{equation}
\label{eq:fcnet}
    \ffc (\vx; \bThetafc) = \phi(\cdots\phi (\phi(\vx^T \mW_1) \mW_2)\cdots \mW_{L-1}) \vw_L,
\end{equation}
where $\mW_l \in \reals^{d_{l} \times d_{l+1}}$ for $l \in [L-1]$ (we use $d_1 = d$) and $\vw_L \in \reals^{d_{L}}$.
One can represent $\ffc$ as the tensor form~\eqref{eq:tensorrep} by defining parameters $\vv_l = \vect(\mW_l)$ for $l \in [L-1]$ and $\vv_L = \vw_L$, and constructing the tensor $\tMfc(\vx)$ by a recursive ``block diagonal'' manner.
For example, if $L = 2$, we can define $\tMfc(\vx) \in \reals^{d_1 d_2 \times d_2}$ to be the Kronecker product of $\mI_{d_2}$ and $\vx$. For deeper networks, we defer the full description of $\tMfc(\vx)$ to Appendix~\ref{sec:tensorform-ex}.

\textbf{Our focus: linear tensor networks.}~~
Throughout this section, we have used the activation $\phi$ to motivate our tensor formulation~\eqref{eq:tensorrep} for neural networks with nonlinear activations. For the remaining of the paper, we study the case whose activation is \emph{linear}, i.e., $\phi(t) = t$. In this case, 
\begin{equation}
\label{eq:tensorrep-linear}
    f(\vx;\bTheta) = \tM(\vx) \circ (\vv_1, \vv_2, \dots, \vv_L).
\end{equation}
We will refer to~\eqref{eq:tensorrep-linear} as \emph{linear tensor networks}, where ``linear'' is to indicate that the activation is linear.
Note that as a function of parameters $\vv_1, \dots, \vv_L$, $f(\vx;\bTheta)$ is in fact multilinear.
We also remark that when depth $L=2$, the data tensor $\tM(\vx)$ is a $k_1 \times  k_2$ matrix and the network formulation boils down to $f(\vx;\bTheta) = \vv_1 ^T \tM(\vx) \vv_2$.

Since the data tensor $\tM(\vx)$ is a linear function of $\vx$, the linear tensor network is also a linear function of $\vx$. Thus, the output of the network can also be written as $f(\vx;\bTheta) = \vx^T \vbeta(\bTheta)$, where $\vbeta(\bTheta) \in \reals^d$ denotes the \emph{linear coefficients} computed as a function of the network parameters $\bTheta$. 
Since the linear tensor network $f(\vx;\bTheta)$ is linear in $\vx$, the expressive power of $f$ is at best a linear model $\vx \mapsto \vx^T \vz$. 
However, even though the models have the same expressive power, their architectural differences lead to different implicit biases in training, which is the focus of our investigation in this paper.
Studying separable classification and underdetermined regression is useful for highlighting such biases because there are \emph{infinitely many} coefficients that perfectly classify or fit the dataset.

For our linear tensor network, the evolution of parameters $\vv_l$ under gradient flow dynamics reads
\begin{align*}
    \dot \vv_l 
    &= -\nabla_{\vv_l} \mc L(\bTheta) 
    = -\sum\nolimits_{i=1}^n \ell'(f(\vx_i;\bTheta),y_i) \tM(\vx_i) \circ (\vv_1, \dots, \vv_{l-1}, \mI_{k_l}, \vv_{l+1}, \dots, \vv_L)\\
    &= \tM(-\mX^T \vr) \circ (\vv_1, \dots, \vv_{l-1}, \mI_{k_l}, \vv_{l+1}, \dots, \vv_L),~~\forall l \in [L],
\end{align*}
where we initialize $\vv_l(0) = \alpha \bar \vv_l$, for $l \in [L]$. We refer to $\alpha$ and $\bar \vv_l$ as the \emph{initial scale} and \emph{initial direction}, respectively. We note that we do not restrict $\bar \vv_l$'s to be unit vectors, in order to allow different scaling (at initialization) over different layers. The vector $\vr \in \reals^{n}$ is the \emph{residual vector}, and each component of $\vr$ is defined as
\begin{equation}
\label{eq:residual-vec}
    [\vr]_i = \ell'(f(\vx_i;\bTheta),y_i) =
    \begin{cases}
    -y_i \exp(-y_i f(\vx_i;\bTheta)) & \text{ for classification, }\\
    f(\vx_i;\bTheta) - y_i & \text{ for regression. }
    \end{cases}
\end{equation}
\section{Implicit bias of gradient flow in separable classification}
\label{sec:clsf}
In this section, we present our results on the implicit bias of gradient flow in binary classification with linearly separable data. Recent papers~\citep{lyu2020gradient, ji2020directional} on this separable classification setup prove that after 100\% training accuracy has been achieved by gradient flow (along with other technical conditions), the parameters of $L$-homogeneous models diverge to infinity, while converging in direction that aligns with the direction of the negative gradient.
Mathematically,
\begin{align*}
    \lim_{t\to\infty} \norm{\bTheta(t)} = \infty,
    ~~
    \lim_{t\to\infty}\tfrac{\bTheta(t)}{\norm{\bTheta(t)}} = \bTheta^\infty,
    ~~
    \lim_{t\to\infty} \tfrac{\bTheta(t)^T \nabla_{\bTheta} \mc L(\bTheta(t))}{\norm{\bTheta(t)} \norm{\nabla_{\bTheta} \mc L(\bTheta(t))}}
    = -1.
\end{align*}
Since the linear tensor network satisfies the technical assumptions in the prior works, we apply these results to our setting and develop a new characterization of the limit directions of the parameters.
Here, we present theorems on separable classification with general linear tensor networks. Corollaries for specific networks are deferred to Appendix~\ref{sec:deferred-corollaries}. 


\subsection{Limit directions of parameters are singular vectors}
\label{sec:meta-thms-clsf-1}
Consider the singular value decomposition (SVD) of a matrix $\mA = \sum_{j=1}^m s_j (\vu_j \otimes \vv_j)$, where $m$ is the rank of $\mA$. Note that the tuples $(\vu_j, \vv_j, s_j)$ are solutions to the system of equations
$s \vu = \mA \vv$ and $s \vv = \mA^T \vu$.
\citet{lim2005singular} generalizes this definition of singular vectors and singular values to higher-order tensors: given an order-$L$ tensor $\tA \in \reals^{k_1 \times \cdots \times k_L}$, we define the singular vectors $\vu_1, \vu_2, \dots, \vu_L$ and singular value $s$ to be the solution of the following system of equations:
\begin{align}
\label{eq:tensorsingvec}
    s \vu_l = \tA \circ (\vu_1, \dots, \vu_{l-1}, \mI_{k_l}, \vu_{l+1}, \dots, \vu_L), \text{ for } l \in [L].
\end{align}
Using the definition of singular vectors of tensors, we can characterize the limit direction of parameters after reaching 100\% training accuracy. In Appendix~\ref{sec:proof-meta-thm-clsf-1}, we prove the following:
\begin{theorem}
\label{thm:meta-thm-clsf-1}
Consider an $L$-layer linear tensor network~\eqref{eq:tensorrep-linear}. 
Assume that the gradient flow satisfies $\mc L(\bTheta(t_0)) < 1$ for some $t_0 \geq 0$ and $\mX^T \vr(t)$ converges in direction, say $\vu^\infty \defeq \lim_{t\to\infty} \frac{\mX^T \vr(t)}{\ltwos{\mX^T\vr(t)}}$.
Then, $\vv_1, \dots, \vv_L$ converge in direction to the singular vectors of $\tM(-\vu^\infty)$.
\end{theorem}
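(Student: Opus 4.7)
The plan is to combine three ingredients: the $L$-homogeneity of the linear tensor network in $\bTheta=(\vv_1,\dots,\vv_L)$, which unlocks the directional convergence and gradient-alignment theorems of~\citet{lyu2020gradient,ji2020directional}; a layer balance property specific to multilinear parametrizations; and the explicit dynamics formula for $\dot\vv_l$ derived just before the theorem. Since $f(\vx;\bTheta)=\tM(\vx)\circ(\vv_1,\dots,\vv_L)$ is multilinear in the parameters it is $L$-homogeneous, and the hypothesis $\mc L(\bTheta(t_0))<1$ for the exponential loss implies $y_if(\vx_i;\bTheta(t_0))>0$ for every $i$, i.e.\ $100\%$ training accuracy. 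The cited theorems then supply $\|\bTheta(t)\|\to\infty$, directional convergence $\bTheta/\|\bTheta\|\to\bTheta^\infty$, and the alignment relation $\bTheta^T\dot\bTheta/(\|\bTheta\|\|\dot\bTheta\|)\to 1$.

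The first substantive step is \emph{layer balance}. Multilinearity gives the identity $\vv_l^T\dot\vv_l=\tM(-\mX^T\vr)\circ(\vv_1,\dots,\vv_L)=f(-\mX^T\vr;\bTheta)$, which is independent of $l$, so $\tfrac{d}{dt}(\|\vv_l\|^2-\|\vv_m\|^2)=0$ and the gaps are conserved. Combined with $\|\bTheta\|\to\infty$ this forces each $\|\vv_l\|\to\infty$ and $\|\vv_l\|/\|\vv_m\|\to 1$; together with $\bTheta/\|\bTheta\|\to\bTheta^\infty$ I extract unit-norm per-layer limits $\vv_l/\|\vv_l\|\to\hat\vv_l^\infty$ for every $l$. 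To lift the global alignment to a \emph{per-layer} statement I use the Cauchy--Schwarz chain
\[
Lf(-\mX^T\vr;\bTheta)=\sum_l\vv_l^T\dot\vv_l\le\sum_l\|\vv_l\|\|\dot\vv_l\|\le\|\bTheta\|\|\dot\bTheta\|,
\]
which becomes asymptotically tight under the alignment relation. Tightness of the second inequality, combined with layer balance, gives that $\|\dot\vv_l\|$ is asymptotically independent of $l$; tightness of the first then yields $\vv_l^T\dot\vv_l/(\|\vv_l\|\|\dot\vv_l\|)\to 1$, hence $\dot\vv_l/\|\dot\vv_l\|\to\hat\vv_l^\infty$ for every $l$.

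I then substitute into the dynamics. Linearity of $\tM$ and multilinearity of $\circ$ give
\[
\frac{\dot\vv_l}{\|\mX^T\vr\|\prod_{k\ne l}\|\vv_k\|}=\tM\!\left(\tfrac{-\mX^T\vr}{\|\mX^T\vr\|}\right)\circ\!\left(\tfrac{\vv_1}{\|\vv_1\|},\dots,\mI_{k_l},\dots,\tfrac{\vv_L}{\|\vv_L\|}\right)\;\longrightarrow\;\tM(\vu^\infty)\circ(\hat\vv_1^\infty,\dots,\mI_{k_l},\dots,\hat\vv_L^\infty).
\]
The prefactor $\|\mX^T\vr\|\prod_{k\ne l}\|\vv_k\|$ is $l$-independent (layer balance) and $\|\dot\vv_l\|$ is also asymptotically $l$-independent, so the limiting right-hand sides all have a common norm, call it $s$. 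Combined with the per-layer alignment $\dot\vv_l/\|\dot\vv_l\|\to\hat\vv_l^\infty$, this produces
\[
s\,\hat\vv_l^\infty=\tM(\vu^\infty)\circ(\hat\vv_1^\infty,\dots,\mI_{k_l},\dots,\hat\vv_L^\infty)\qquad\text{for every }l\in[L],
\]
which by linearity of $\tM$ is precisely the tensor singular-vector system~\eqref{eq:tensorsingvec} applied to $\tM(-\vu^\infty)$ (with singular value $-s$, whose sign can be absorbed into any one of the $\hat\vv_l^\infty$). This identifies each limit direction as a singular vector of $\tM(-\vu^\infty)$, as claimed.

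The main obstacle I anticipate is the degenerate case $s=0$, in which the normalisation step above divides by a quantity that vanishes and the direction of the limiting right-hand side is not directly defined. This is not fatal because the singular-vector system~\eqref{eq:tensorsingvec} is trivially solved with $s=0$ whenever the multilinear contraction $\tM(\vu^\infty)\circ(\hat\vv_1^\infty,\dots,\mI_{k_l},\dots,\hat\vv_L^\infty)$ vanishes, so one only needs a separate short argument that in this regime $\hat\vv_l^\infty$ (still well defined by paragraph two) indeed satisfies the equation with $s=0$. A secondary point to check is that the smoothness/definability preconditions of~\citet{lyu2020gradient,ji2020directional} are met by our multilinear model; this is immediate from the polynomial structure of $f$ but should be verified explicitly.
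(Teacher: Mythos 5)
Your proof is correct and follows the same high-level architecture as the paper's: invoke the directional-convergence and alignment theorems of Lyu--Li and Ji--Telgarsky, establish layer balance from $\vv_l^T\dot\vv_l = f(-\mX^T\vr;\bTheta)$ being $l$-independent, extract per-layer directional convergence and alignment, and then normalize the dynamics to read off the singular-vector system~\eqref{eq:tensorsingvec}. The one genuine technical variation is how you lift the global alignment to per-layer alignment. The paper does this by observing that $\bTheta/\|\bTheta\|\to\bTheta^\infty$ and $-\nabla_\bTheta\mathcal L/\|\nabla_\bTheta\mathcal L\|\to\bTheta^\infty$, then simply restricting both to the coordinate block $\mathcal I_l$ and using that $\|[\bTheta^\infty]_{\mathcal I_l}\| = 1/\sqrt{L} > 0$ to renormalize block-wise. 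You instead run a two-stage Cauchy--Schwarz chain
$\sum_l\vv_l^T\dot\vv_l\le\sum_l\|\vv_l\|\|\dot\vv_l\|\le\|\bTheta\|\|\dot\bTheta\|$
and argue that asymptotic tightness of the outer bound (from alignment) forces asymptotic tightness of both inequalities, using layer balance to ensure the weights $\|\vv_l\|\|\dot\vv_l\|/\sum_k\|\vv_k\|\|\dot\vv_k\|\to 1/L$ stay bounded away from zero so that tightness of the sum propagates to each summand. Both arguments work; the paper's block-projection is more direct, while yours makes the use of balance explicit and has the side benefit of establishing $\|\dot\vv_l\|/\|\dot\vv_m\|\to1$ along the way, which is exactly what guarantees a \emph{common} singular value $s$ across layers (a point the paper leaves implicit, though it also follows from $s = \tM(-\vu^\infty)\circ(\vv_1^\infty,\dots,\vv_L^\infty)$ being the full contraction). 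Two small notes: the displayed limit in your third paragraph should read $\tM(-\vu^\infty)\circ(\cdots)$, since $-\mX^T\vr/\|\mX^T\vr\|\to-\vu^\infty$ under the stated definition of $\vu^\infty$; with this fix the subsequent sign-absorption detour is unnecessary and the equation is already~\eqref{eq:tensorsingvec} with singular value $s\ge0$. Your explicit treatment of the degenerate case $s=0$ is a nicety that the paper skips, and the resolution you sketch (the normalized right-hand side converges to zero by continuity, which trivially solves the system with $s=0$) is correct.
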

Theorem~\ref{thm:meta-thm-clsf-1} shows that the limit directions of parameter vectors $\vv_1, \dots, \vv_L$ under gradient flow dynamics must be singular vectors of the data tensor $\tM(-\vu^\infty)$.
For this theorem, we do make some convergence assumptions. First, we assume that the gradient flow finds a parameter $\bTheta(t_0)$ with 100\% training accuracy (i.e., $\mc L(\bTheta(t_0)) < 1$); however, this is because the network is fully general, without any structure to exploit. In the remaining theorems, convergence of loss $\mc L(\bTheta(t)) \to 0$ will be explicitly proven under initial conditions.
The next assumption is that $\mX^T \vr(t)$ converges in direction, which is equivalent to directional convergence of the gradient of $\mc L$ with respect to linear coefficients (also assumed in \citet{gunasekar2018implicit}).
It fact, for the special case of linear fully-connected networks, the directional convergence assumption is \emph{not} required, and the linear coefficients $\vbetafc(\bThetafc)$ converge in direction to the $\ell_2$ max-margin classifier. We state this corollary in Appendix~\ref{sec:cor-meta-thm-clsf-1};
this result appears in~\citet{ji2020directional}, but we provide an alternative proof.

\subsection{Limit directions in orthogonally decomposable networks}
\label{sec:meta-thms-clsf-2}
Admittedly, Theorem~\ref{thm:meta-thm-clsf-1} is not a \emph{full} characterization of the limit directions, because there are usually multiple solutions that satisfy~\eqref{eq:tensorsingvec}.
For example, in case of $L=2$, the data tensor $\tM(-\vu^\infty)$ is a matrix and the number of possible limit directions (up to scaling) of $(\vv_1, \vv_2)$ is at least the rank of $\tM(-\vu^\infty)$.
Singular vectors of high order tensors are much less understood than the matrix counterparts, and are much harder to deal with.
Although their existence is implied from the variational formulation \citep{lim2005singular}, they are intractable to compute. Testing if a given number is a singular value, approximating the corresponding singular vectors, and computing the best rank-1 approximation are all NP-hard \citep{hillar2013most}; let alone orthogonal decompositions.


Given this intractability, it might be reasonable to make some assumptions on the ``structure'' of the data tensor $\tM(\vx)$, so that they are easier to handle. The following assumption defines a class of \emph{orthogonally decomposable} data tensors, which includes \textbf{linear diagonal networks} and \textbf{linear full-length convolutional networks} as special cases (for the proof, see Appendix~\ref{sec:proof-cor-clsf-diag} and \ref{sec:proof-cor-clsf-conv-full}).
\begin{assumption}
\label{assm:complex-diag-decomp}
For the data tensor $\tM(\vx) \in \R^{k_1 \times \cdots \times k_L}$ of a linear tensor network~\eqref{eq:tensorrep-linear}, there exist  a full column rank matrix $\mS \in \C^{m \times d}$ \textup{($d \leq m \leq \min_l k_l$)} and matrices $\mU_1 \in \C^{k_1 \times m}, \dots, \mU_L \in \C^{k_L \times m}$ such that $\mU_l^H \mU_l = \mI_{m}$ for all $l \in [L]$, and the data tensor $\tM(\vx)$ can be written as
\begin{align}
\label{eq:tensor-orth-decomp}
\tM(\vx) = \sum\nolimits_{j=1}^m [\mS \vx]_j ([\mU_1]_{\cdot,j} \otimes [\mU_2]_{\cdot,j} \otimes \cdots \otimes [\mU_L]_{\cdot,j}
).
\end{align}
\end{assumption}
In this assumption, we allow $\mU_1, \dots, \mU_L$ and $\mS$ to be complex matrices, although $\tM(\vx)$ and parameters $\vv_l$ stay real, as defined earlier.
For a complex matrix $\mA$, we use 
$\mA^*$ to denote its entry-wise complex conjugate,
$\mA^T$ to denote its transpose (without conjugating), and $\mA^H$ to denote its conjugate transpose.
In case of $L = 2$, Assumption~\ref{assm:complex-diag-decomp} requires that the data tensor $\tM(\vx)$ (now a matrix) has singular value decomposition $\tM(\vx) = \mU_1 \diag(\mS \vx) \mU_2^T$; i.e., the left and right singular vectors are independent of $\vx$, and the singular values are linear in $\vx$.
Using Assumption~\ref{assm:complex-diag-decomp}, the following theorem characterizes the limit directions.

\begin{theorem}
\label{thm:meta-thm-clsf-2}
Suppose a linear tensor network~\eqref{eq:tensorrep-linear} satisfies Assumption~\ref{assm:complex-diag-decomp}. 
If there exists $\lambda > 0$ such that the initial directions $\bar \vv_1, \dots, \bar \vv_L$ of the network parameters satisfy $|[\mU_l^T \bar \vv_l]_j|^2 - |[\mU_L^T \bar \vv_L]_j|^2 \geq \lambda$ for all $l \in [L-1]$ and $j \in [m]$, then the training loss $\mc L(\bTheta(t)) \to 0$. If we additionally assume that $\mX^T \vr(t)$ converges in direction, then $\vbeta(\bTheta(t))$ converges in a direction that aligns with $\mS^T \vrho^\infty$, where $\vrho^\infty \in \C^m$ denotes a stationary point of
\begin{equation*}
    \minimize\nolimits_{\vrho \in \C^m}\quad \norms{\vrho}_{2/L}
    \quad
    \subjectto \quad y_i \vx_i^T\mS^T \vrho \geq 1,~~\forall i\in[n].
\end{equation*}
In case of invertible $\mS$, $\vbeta(\bTheta(t))$ converges in a direction that aligns with a stationary point $\vz^\infty$ of 
\begin{equation*}
    \minimize\nolimits_{\vz \in \R^d}\quad \norms{\mS^{-T} \vz}_{2/L}
    \quad
    \subjectto \quad y_i \vx_i^T \vz \geq 1,~~\forall i\in[n].
\end{equation*}
\end{theorem}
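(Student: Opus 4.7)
The plan is to reduce the problem to an $m$-fold product-of-scalars problem in the ``singular-value space'' carved out by Assumption~\ref{assm:complex-diag-decomp}, and then invoke the homogeneous-model directional-convergence machinery already used in Theorem~\ref{thm:meta-thm-clsf-1}.

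I would first reparametrize via $\vnu_l := \mU_l^T \vv_l \in \C^m$. Plugging~\eqref{eq:tensor-orth-decomp} into~\eqref{eq:tensorrep-linear} gives
\begin{equation*}
f(\vx;\bTheta) = (\mS\vx)^T \vrho, \qquad \vrho := \vnu_1 \odot \cdots \odot \vnu_L,
\end{equation*}
so $\vbeta(\bTheta)=\mS^T\vrho$, and in the coefficient space the network behaves like an $L$-layer complex diagonal network with data $\mS\vx_i$. A direct chain-rule computation on the gradient flow shows that $[\vnu_l]_j^{\,*}[\dot\vnu_l]_j$ does not depend on $l$; hence $|[\vnu_l]_j|^2 - |[\vnu_L]_j|^2$ is conserved along the trajectory, and the initial-direction hypothesis (combined with the initialization $\vv_l(0)=\alpha\bar\vv_l$) yields the uniform lower bound $|[\vnu_l(t)]_j|^2 \ge \alpha^2\lambda$ for all $l<L$, all $j\in[m]$, and all $t\ge 0$.

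Using this lower bound, I would recast the $\vrho$-dynamics as a positive-definite-preconditioned gradient flow on the exponential loss of a linear model. Since $\mS$ has full column rank, the transformed data $\mS\vx_1,\dots,\mS\vx_n$ remain linearly separable, and a preconditioned-flow argument in the spirit of \citet{soudry2018implicit} then gives $\mc L(\bTheta(t))\to 0$. Once $\mc L(\bTheta(t))<1$ and $\mX^T\vr(t)$ converges in direction, the $L$-homogeneity of the network in $\bTheta$ places us in the regime of \citet{lyu2020gradient} and \citet{ji2020directional}: $\|\bTheta(t)\|\to\infty$, $\bTheta(t)/\|\bTheta(t)\|$ converges, and the flow aligns with the negative gradient. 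Combined with Theorem~\ref{thm:meta-thm-clsf-1}, each $\vv_l(t)$ then aligns with a singular vector of $\tM(-\vu^\infty)$, which under Assumption~\ref{assm:complex-diag-decomp} translates into coordinate-wise stationarity equations for the limit directions $\vnu_l^\infty$ of $\vnu_l(t)/\|\vnu_l(t)\|$.

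Combining these coordinate equations with the conservation law---which, on the support of $\vrho^\infty$, forces $|[\vnu_l^\infty]_j|$ to be the same across $l$---and the KKT characterization of the homogeneous $\ell_2$ max-margin problem in parameter space gives precisely the stationarity condition of $\min_{\vrho\in\C^m}\|\vrho\|_{2/L}$ subject to $y_i\vx_i^T\mS^T\vrho\ge 1$, with $\vrho^\infty$ defined as the (properly rescaled) limit direction of $\vrho(t)$. The second display (invertible $\mS$) then follows by the change of variables $\vz:=\mS^T\vrho$, which maps $\|\vrho\|_{2/L}$ to $\|\mS^{-T}\vz\|_{2/L}$ and leaves the margin constraint intact. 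The \textbf{main obstacle} will be the loss-convergence step: the imbalance bound controls only $|[\vnu_l]_j|$ for $l<L$, so the preconditioner in the $\vrho$-dynamics can degenerate in coordinates where $[\vnu_L]_j$ starts near zero, and one must carefully argue that linear separability of $\mS\vx_i$ still propagates through this nonlinear preconditioner to a strictly positive effective margin, while also bookkeeping the conjugate symmetry that ensures consistency between the complex $\vnu_l$-dynamics and real gradient flow on $\vv_l$.
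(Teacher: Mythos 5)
Your reparametrization $\vnu_l := \mU_l^T\vv_l$, the conservation law for $|[\vnu_l]_j|^2 - |[\vnu_L]_j|^2$, and the plan of matching limit directions against the KKT conditions of the $\ell_{2/L}$ problem in $\vrho$-space all coincide with the paper's proof. But there are two concrete gaps.

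First, the loss-convergence step does not go through as sketched. You propose to view the $\vrho$-dynamics as a positive-definite-preconditioned gradient flow, but when $\mU_l$ and $\mS$ are \emph{complex} (the key case of Assumption~\ref{assm:complex-diag-decomp}, e.g.\ convolutional networks), the induced preconditioner $\sum_l(\prod_{k\neq l}^{\odot}\vnu_k)^{\odot 2}$ is complex-valued, not positive, so a Soudry-style descent inequality on the $\vrho$-flow is not immediate. The obstacle you name---degeneration when $[\vnu_L]_j$ is small---is actually benign, because the $l=L$ summand involves only $\vnu_1,\dots,\vnu_{L-1}$, all bounded below by the imbalance hypothesis; the genuine subtlety is that these summands are not real. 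The paper sidesteps the $\vrho$-flow entirely and bounds $\frac{d}{dt}\mathcal{L}\le -\|\mU_L^H\dot\vv_L\|_2^2 \le -\alpha^{2L-2}\lambda^{L-1}s_{\min}(\mS)^2\|\mX^T\vr\|_2^2$, then uses separability to get $\|\mX^T\vr\|_2\ge\gamma\,\mathcal{L}$, yielding $\frac{d}{dt}\mathcal{L}\le -c\,\mathcal{L}^2$.

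Second, your route via Lyu--Li's parameter-space KKT plus the balance identities would recover the magnitude relation $|[\vu^\infty]_j|\propto|[\vrho^\infty]_j|^{2/L-1}$ and the phase condition on the support of $\vrho^\infty$, but it gives no constraint on coordinates where $[\vrho^\infty]_j=0$. For $L=2$ the paper verifies the KKT conditions using the \emph{global} subdifferential of the $\ell_1$ norm, which additionally requires $|[\vu^\infty]_j|$ on inactive coordinates to be no larger than on active ones---that is exactly what upgrades the conclusion from a local stationary point to the genuine $\ell_1$ max-margin solution. The paper proves this by a separate dynamical dominance argument: if $[\vrho^\infty]_j=0$ yet $|[\vu^\infty]_j|>|[\vu^\infty]_{j'}|$ for some active $j'$, then for large $t$ the relative growth rate $|\frac{d}{dt}[\vrho]_j|/|[\vrho]_j|$ strictly exceeds the one at $j'$ (the imbalance forces $(|[\vnu_1]_j|^2+|[\vnu_2]_j|^2)/|[\vrho]_j|\ge 2$, with equality in the limit on active coordinates), so $|[\vrho]_j|$ grows exponentially faster, contradicting $[\vrho^\infty]_j=0$. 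Your outline has no mechanism for this, and the parameter-space KKT translation cannot supply it because that condition is vacuous at coordinates where $\vrho^\infty$ vanishes.
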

Theorem~\ref{thm:meta-thm-clsf-2} shows that the gradient flow finds sparse $\vrho^\infty$ that minimizes the $\ell_{2/L}$ norm in the ``singular value space,'' where the data points $\vx_i$ are transformed into vectors $\mS \vx_i$ consisting of singular values of $\tM(\vx_i)$. Also, the proof of Theorem~\ref{thm:meta-thm-clsf-2} reveals that in case of $L=2$, the parameters $\vv_l(t)$ in fact converge in direction to the \emph{top} singular vectors of the data tensor; thus, compared to Theorem~\ref{thm:meta-thm-clsf-1}, we have a more complete characterization of ``which'' singular vectors to converge to. 

The proof of Theorem~\ref{thm:meta-thm-clsf-2} is in Appendix~\ref{sec:proof-meta-thm-clsf-2}. Since the orthogonal decomposition (Assumption~\ref{assm:complex-diag-decomp}) of $\tM(\vx)$ tells us that the singular vectors $\tM(\vx)$ in $\mU_1, \dots, \mU_L$ are independent of $\vx$, we can transform the network parameters $\vv_l$ to $\mU_l^T\vv_l$ and show that the network behaves like a linear diagonal network. This observation comes in handy in the characterization of limit directions.

\begin{remark}[Removal of some convergence assumptions]
Recall that one of our aims was to present implicit bias results without relying on various convergence assumptions. Theorem~\ref{thm:meta-thm-clsf-2} extends \citet{gunasekar2018implicit} to a general framework (see Appendix~\ref{sec:cor-meta-thm-clsf-2} for corollaries), while removing the directional convergence assumption on parameters and also removing the assumption that the loss converges to zero by directly proving it under certain initial conditions.
Please note that the theorem does not remove the directional convergence assumption on $\mX^T\vr$. We noticed recently that the ICLR~2021 version of this paper had an erroneous claim that this directional convergence assumption was \emph{not} needed in Theorem~\ref{thm:meta-thm-clsf-2}. In our previous proof, we stated that directional convergence of parameters implies directional convergence of $\mX^T \vr$, which was not fully correct. We leave the removal of this convergence assumption for future work.
\end{remark}

\begin{remark}[Necessity of initialization assumptions]
In removing the assumption on loss, we emphasize that at least some conditions on initialization are \emph{necessary}, because there are examples showing non-convergence of gradient flow for certain initializations \citep{bartlett2018gradient,arora2019convergence}.
The assumptions on $\bar \vv_l$ we pose in Theorem~\ref{thm:meta-thm-clsf-2} are sufficient conditions for the loss $\mc L(\bTheta(t))$ to converge to zero. Due to its sufficiency, the conditions are ``stronger'' than assuming $\mc L(\bTheta(t)) \to 0$; however, they are useful because they can be easily checked \emph{a priori}, i.e., before running gradient flow.
In addition, we argue that our initialization assumptions are not too restrictive; $\lambda$ can be arbitrarily small, so the conditions are satisfied \emph{with probability~1} if we set $\bar \vv_L = \zeros$ and randomly sample other $\bar \vv_l$'s. Setting one layer to zero to prove convergence is also studied in \citet{wu2019global}. In fact, the condition that $\bar \vv_L$ is ``small'' can be replaced with any layer; e.g., convergence still holds if $|[\mU_l^T \bar \vv_l]_j|^2 - |[\mU_1^T \bar \vv_1]_j|^2 \geq \lambda$ for all $l = 2, \dots, L$ and $j \in [m]$.
\end{remark}


\begin{remark}[Implications to architecture design]
Theorem~\ref{thm:meta-thm-clsf-2} shows that the gradient flow finds a solution that is sparse in a ``transformed'' input space where all data points are transformed with $\mS$. This implies something interesting about architecture design: if the sparsity of the solution under a certain linear transformation $\mT$ is needed, one can design a network using Assumption~\ref{assm:complex-diag-decomp} by setting $\mS = \mT$. Training such a network will give us a solution that has the desired sparsity property.
\end{remark}



\subsection{Limit directions in extremely overparameterized settings}
Other than Assumption~\ref{assm:complex-diag-decomp}, there is another setting where we can prove a full characterization of limit directions: when there is one data point ($n=1$) and the network is 2-layer ($L=2$). This ``extremely overparameterized'' case is motivated by an experimental paper \citep{zhang2019identity} which studies generalization performance of different architectures when there is only one training data point. Please note that from this theorem onward, we do not require any convergence assumptions.
\begin{theorem}
\label{thm:meta-thm-clsf-3}
Suppose we have a 2-layer linear tensor network~\eqref{eq:tensorrep-linear} and a single data point $(\vx, y)$. Consider the singular value decomposition $\tM(\vx) = \mU_1 \diag(\vs) \mU_2^T$, where $\mU_1 \in \reals^{k_1 \times m}$, $\mU_2 \in \reals^{k_2 \times m}$, and $\vs \in \reals^{m}$ for $m \leq \min\{k_1, k_2\}$.
Let $\vrho^\infty \in \R^m$ be a solution of the following optimization problem
\begin{equation*}
	\minimize\nolimits_{\vrho \in \R^m}\quad \norm{\vrho}_{1}
	\quad
	\subjectto \quad y \vs^T \vrho \geq 1.
\end{equation*}
If there exists $\lambda > 0$ such that the initial directions $\bar \vv_1, \bar \vv_2$ of the network parameters satisfy $[\mU_1^T \bar \vv_1]_j^2 - [\mU_2^T \bar \vv_2]_j^2 \geq \lambda$ for all $j \in [m]$, then the training loss $\mc L(\bTheta(t)) \to 0$. Also, $\vv_1$ and $\vv_2$ converge in direction to $\mU_1 \veta_1^\infty$ and $\mU_2 \veta_2^\infty$, where $\veta_1^\infty, \veta_2^\infty \in \reals^m$ are vectors satisfying $|\veta_1^\infty| = |\veta_2^\infty| = |\vrho^\infty|^{\odot 1/2}$, and $\sign(\veta_1^\infty) = \sign(y)\odot\sign(\veta_2^\infty)$.
\end{theorem}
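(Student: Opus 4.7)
The plan is to exploit the fact that with one data point and depth $L=2$, the gradient flow decouples along the singular modes of the fixed matrix $\tM(\vx)$ and can be integrated in closed form. There are three main steps: change of basis via the SVD, proving the loss vanishes, and extracting the limit direction from the closed-form trajectory.

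First I would extend $\mU_1,\mU_2$ to orthogonal bases $[\mU_1~\mU_1^\perp]$ and $[\mU_2~\mU_2^\perp]$, and decompose $\vv_l = \mU_l \vbeta_l + \mU_l^\perp \bm{\gamma}_l$ with $\vbeta_l := \mU_l^T\vv_l$. Since $\tM(\vx)\vv_2 = \mU_1\diag(\vs)\vbeta_2 \in \mathrm{range}(\mU_1)$ (and symmetrically for $\tM(\vx)^T\vv_1$), the gradient flow forces $\dot{\bm{\gamma}}_l \equiv 0$, so the perpendicular components remain frozen at $\alpha(\mU_l^\perp)^T\bar\vv_l$. Meanwhile the parallel components satisfy, for each $j\in[m]$, the decoupled two-dimensional system
\begin{align*}
\dot{[\vbeta_1]}_j = y\,e^{-yf(t)}\,s_j\,[\vbeta_2]_j, \qquad \dot{[\vbeta_2]}_j = y\,e^{-yf(t)}\,s_j\,[\vbeta_1]_j,
\end{align*}
with $f(t) = \sum_{j=1}^m s_j\,[\vbeta_1(t)]_j\,[\vbeta_2(t)]_j$.

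Next, differentiating reveals the conservation law $\tfrac{d}{dt}\!\left([\vbeta_1]_j^2 - [\vbeta_2]_j^2\right) = 0$, so the initial-direction hypothesis lifts to $[\vbeta_1(t)]_j^2 \geq [\vbeta_2(t)]_j^2 + \alpha^2\lambda$ for all $t\geq 0$ and $j \in [m]$. Combined with $\tfrac{d}{dt} e^{yf(t)} = \sum_j s_j^2([\vbeta_1]_j^2 + [\vbeta_2]_j^2) \geq \alpha^2\lambda\,\|\vs\|^2 > 0$, this shows $e^{yf(t)}$ grows at least linearly in $t$, so $\mc L(\bTheta(t)) \to 0$. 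Setting $\Psi(t) := \int_0^t e^{-yf(\tau)}\,d\tau$ and diagonalizing via $[\vbeta_1 \pm \vbeta_2]_j$, the ODE integrates explicitly as
\begin{align*}
[\vbeta_1(t) \pm \vbeta_2(t)]_j = \alpha\,\bigl([\mU_1^T\bar\vv_1]_j \pm [\mU_2^T\bar\vv_2]_j\bigr)\,e^{\pm y s_j \Psi(t)},
\end{align*}
and a bounded $\Psi$ would contradict $yf(t) \to \infty$ via these formulas, so $\Psi(t) \to \infty$. Since $s_j > 0$, whichever of $e^{\pm y s_j\Psi}$ blows up is the same for $[\vbeta_1]_j$ and $[\vbeta_2]_j$, so asymptotically $|[\vbeta_1]_j|/|[\vbeta_2]_j| \to 1$ and $\sign([\vbeta_1]_j[\vbeta_2]_j) \to \sign(y)$, while the normalized direction concentrates on $\{j : s_j = s_{\max}\}$ with $s_{\max} := \max_k s_k$; this yields $|\veta_1^\infty| = |\veta_2^\infty|$ and $\sign(\veta_1^\infty) = \sign(y)\odot\sign(\veta_2^\infty)$ supported on the top-singular-value set. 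For the $\ell_1$ problem with the single constraint $y\vs^T\vrho \geq 1$, the optimizers are precisely the vectors of sign $y$ supported on $\{j:s_j=s_{\max}\}$ with magnitudes summing to $1/s_{\max}$; and since $[\vbeta_1(t)]_j[\vbeta_2(t)]_j$ asymptotes to a positive multiple of $([\mU_1^T\bar\vv_1]_j + \sign(y)[\mU_2^T\bar\vv_2]_j)^2$ on that support, selecting the particular $\vrho^\infty$ with magnitudes proportional to these squares gives $|\vrho^\infty|^{\odot 1/2} = |\veta_1^\infty| = |\veta_2^\infty|$ after matching the free positive scales of limit directions.

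The main obstacle will be this last scale-matching step: identifying which specific $\vrho^\infty$ among the (generally non-unique) $\ell_1$-minimizers is selected by the flow, and reconciling the arbitrary positive scalings of $\veta_1^\infty,\veta_2^\infty,\vrho^\infty$, especially when several singular values tie at $s_{\max}$. The closed-form solution renders the underlying structure transparent, but the bookkeeping of scales and sign conventions on the top-singular support requires care.
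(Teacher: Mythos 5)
Your proposal is correct and takes a genuinely different route from the paper's. The paper establishes convergence of the loss by the same differential inequality you use, but then invokes its Theorem~\ref{thm:meta-thm-clsf-1} (which in turn cites the abstract directional-convergence and gradient-alignment results of \citet{ji2020directional}) to conclude that each $\vv_l$ converges in direction and aligns with $-\nabla_{\vv_l}\mc L$; from alignment it extracts proportionality relations $(\mU_l^T\vv_l^\infty)^{\odot 2} \propto y\vs\odot\vrho^\infty$, derives the support/sign conditions \eqref{eq:proof-clsf-meta3-16}, \eqref{eq:proof-clsf-meta3-6}, \eqref{eq:proof-clsf-meta3-7} (the last via a comparison of relative growth rates), and matches them against the KKT conditions of the $\ell_1$ program. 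You instead bypass the black-box alignment machinery entirely: after the same change of basis and conservation law, you diagonalize the per-mode $2\times 2$ linear system via $u_j = [\vbeta_1]_j+[\vbeta_2]_j$, $w_j=[\vbeta_1]_j-[\vbeta_2]_j$, integrate in closed form against $\Psi(t)=\int_0^t e^{-yf}d\tau$, show $\Psi\to\infty$, and read off the limit direction directly from the exponential growth rates $e^{\pm y s_j\Psi}$. This is more elementary and more informative: it not only proves directional convergence from scratch but also pins down \emph{which} $\ell_1$-minimizer is selected when $s_{\max}$ has multiplicity greater than one (namely, magnitudes on the top set proportional to $([\mU_1^T\bar\vv_1]_j+\sign(y)[\mU_2^T\bar\vv_2]_j)^2$). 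The paper's route, by contrast, is the one that generalizes beyond a single data point, where no closed-form integral exists. The ``scale-matching'' obstacle you flag at the end is not a real gap: the theorem's clause $|\veta_1^\infty|=|\veta_2^\infty|=|\vrho^\infty|^{\odot 1/2}$ is meant, and is proved in the paper, only up to a global positive scalar, and your closed-form solution already delivers the support, the relative magnitudes, and the signs, which is all that is being claimed. One small point worth making explicit: your initialization hypothesis $[\mU_1^T\bar\vv_1]_j^2-[\mU_2^T\bar\vv_2]_j^2\geq\lambda>0$ guarantees $u_j(0)\neq 0$ and $w_j(0)\neq 0$ for every $j$, so no mode is ever frozen at zero and the asymptotics $[\vbeta_1]_j,[\vbeta_2]_j \sim \tfrac12 u_j(0)e^{s_j\Psi}$ (for $y=+1$; with $w_j$ for $y=-1$) hold for all $j\in[m]$; this is what makes the concentration on $\{j:s_j=s_{\max}\}$ unconditional.
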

The proof of Theorem~\ref{thm:meta-thm-clsf-3} can be found in Appendix~\ref{sec:proof-meta-thm-clsf-3}. Let us parse Theorem~\ref{thm:meta-thm-clsf-3} a bit. Since $\vrho^\infty$ is the minimum $\ell_1$ norm solution in the singular value space, the parameters $\vv_1$ and $\vv_2$ converge in direction to the top singular vectors. We would like to emphasize that this theorem can be applied to \emph{any} network architecture that can be represented as a linear tensor network. For example, recall that the known results on convolutional networks only consider full-length filters ($k_1=d$), hence providing limited insights on networks with small filters, e.g., $k_1=2$. In light of this, we present a corollary in Appendix~\ref{sec:cor-meta-thm-clsf-3} characterizing the convergence directions of linear coefficients for convolutional networks with filter size $k_1 = 1$ and $k_2 = 2$.
\section{Implicit bias of gradient flow in underdetermined regression}
In Section~\ref{sec:clsf}, the limit directions of parameters we characterized do not depend on initialization. This is due to the fact that the parameters diverge to infinity in separable classification problems, so that the initialization becomes unimportant in the limit.
This is not the case in regression setting, because parameters do not diverge to infinity. As we show in this section, the limit points are closely tied to initialization, and our theorems characterize the dependency between them.

\subsection{Limit point characterization in orthogonally decomposable networks}
\label{sec:meta-thms-reg}
For the orthogonally decomposable networks satisfying Assumption~\ref{assm:complex-diag-decomp} with real $\mS$ and $\mU_l$'s, we consider how limit points of gradient flow change according to initialization. We consider a specific initialization scheme that, in the special case of diagonal networks, corresponds to setting $\vw_l(0) = \alpha \bar \vw$ for $l \in [L-1]$ and $\vw_L(0) = \zeros$. We use the following lemma on a relevant system of ODEs:
\begin{restatable}{lemma}{lemodereal}
\label{lem:ode-real}
Consider the system of ODEs, where $p, q: \reals \to \reals$:
\begin{align*}
    \dot p = p^{L-2} q, \quad 
    \dot q = p^{L-1}, \quad
    p(0) = 1, \quad
    q(0) = 0.
\end{align*}
Then, the solutions $p_L(t)$ and $q_L(t)$ are continuous on their maximal interval of existence of the form $(-c,c) \subset \reals$ for some $c \in (0, \infty]$. 
Define $h_L(t) = p_L(t)^{L-1}q_L(t)$; then, $h_L(t)$ is odd and strictly increasing, satisfying $\lim_{t \uparrow c} h_L(t) = \infty$ and $\lim_{t \downarrow -c} h_L(t)= -\infty$.
\end{restatable}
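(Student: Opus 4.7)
The plan is to divide the argument into four stages: local well-posedness together with a symmetry, an energy-like conservation law, monotonicity and parity of $h_L$, and the blow-up at the endpoints. First, since the vector field $(p,q) \mapsto (p^{L-2} q, p^{L-1})$ is polynomial and hence locally Lipschitz, Picard--Lindel\"of furnishes a unique (in fact smooth) maximal solution $(p_L, q_L)$ on some open interval $I \ni 0$. To show $I$ is symmetric about $0$ and to obtain the right parity, I apply the involution $(\tilde p(t), \tilde q(t)) := (p_L(-t), -q_L(-t))$: a direct check shows $(\tilde p, \tilde q)$ satisfies the same ODE with the same initial data $(1,0)$. Uniqueness then forces $(\tilde p, \tilde q) = (p_L, q_L)$, so $p_L$ is even, $q_L$ is odd, and $I = (-c, c)$ for some $c \in (0, \infty]$.

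The central tool is a conserved quantity: multiplying the two equations gives $p\dot p = p^{L-1} q = q \dot q$, so $\tfrac{d}{dt}(p^2 - q^2) = 0$. With the initial conditions this yields $p_L(t)^2 - q_L(t)^2 = 1$ on $I$, so $|p_L| \geq 1$; continuity together with $p_L(0) = 1 > 0$ then gives $p_L(t) \geq 1$ on all of $I$. Next, differentiating $h_L = p_L^{L-1} q_L$ and substituting the ODE yields
\[
\dot h_L = (L-1)\, p^{L-2} \dot p\, q + p^{L-1} \dot q = (L-1)\, p^{2L-4} q^2 + p^{2L-2} \geq p^{2L-2} \geq 1,
\]
where the last two inequalities use $p_L \geq 1$. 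Hence $h_L$ is strictly increasing with $\dot h_L \geq 1$. Oddness of $h_L$ is automatic, since $p_L^{L-1}$ is even and $q_L$ is odd.

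For the blow-up, I split on whether $c$ is finite. If $c = \infty$, integrating the bound $\dot h_L \geq 1$ from $0$ gives $h_L(t) \geq t \to \infty$. If $c < \infty$, the standard escape-from-compacta property of maximal ODE solutions forces $\|(p_L(t), q_L(t))\| \to \infty$ as $t \uparrow c$; the conservation law $p^2 = 1 + q^2$ converts this into $q_L \to \infty$, with the sign pinned down by $\dot q_L = p_L^{L-1} \geq 1$ (so $q_L$ is strictly increasing for $t > 0$ and thus tends to $+\infty$), and consequently $p_L = \sqrt{1 + q_L^2} \to \infty$ as well, giving $h_L = p_L^{L-1} q_L \to \infty$. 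The symmetric limit $h_L(t) \to -\infty$ as $t \downarrow -c$ is then immediate from oddness.

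The main obstacle I expect is this last step: translating the abstract escape-from-compacta property into the concrete statement $h_L \to \infty$ while keeping track of the sign of $q_L$. Everything upstream is essentially bookkeeping once the conservation law $p^2 - q^2 = 1$ is identified; it is precisely this identity that keeps $p_L$ bounded away from zero (making the monotonicity computation clean) and that, combined with $\dot q_L \geq 1$ for $t \geq 0$, rules out oscillatory or sign-changing behaviour of $q_L$ near the maximal endpoints.
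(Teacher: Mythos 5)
Your proof is correct, and it differs from the paper's in a way worth noting. The symmetry/uniqueness argument establishing that $p_L$ is even, $q_L$ is odd, and the domain is $(-c,c)$ is the same in both. The divergence is in how the lower bound $p_L \geq 1$ is obtained and how monotonicity and blow-up of $h_L$ are established. The paper proves $p_L \geq 1$ by a bootstrapping continuity argument (propagating positivity of $p$ and $q$ forward over small intervals), then shows $p_L$ and $q_L$ are each strictly increasing on $[0,c)$, and for the $c=\infty$ case derives $q_L(t) \geq t$ and $p_L(t) \geq t^2/2 + 1$ before multiplying. You instead identify the conserved quantity $p_L^2 - q_L^2 \equiv 1$ (from $p\dot p = p^{L-1}q = q\dot q$), which immediately yields $p_L \geq 1$ on the entire maximal interval; you then differentiate $h_L$ directly to get $\dot h_L = (L-1)p^{2L-4}q^2 + p^{2L-2} \geq 1$, which gives strict monotonicity and, for $c = \infty$, the blow-up $h_L(t) \geq t$ in one step. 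For $c < \infty$ you invoke escape-from-compacta, and the conservation law plus monotonicity of $q_L$ pin down $q_L \to +\infty$ and hence $h_L \to \infty$; the paper's corresponding argument (``$p$ and $q$ can be extended, contradiction'') is more of a sketch. Your conservation law is the cleaner key lemma: it collapses the paper's iterative positivity argument, makes the bound $\dot h_L \geq 1$ transparent, and tightens the finite-$c$ blow-up argument by reducing the two-dimensional escape condition to the single monotone scalar $q_L$.
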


Using the function $h_L(t)$ from Lemma~\ref{lem:ode-real}, we can obtain the following theorem that characterizes the limit points as the minimizer of a norm-like function $Q_{L,\alpha,\bar\veta}$ among the global minima.

\begin{theorem}
\label{thm:meta-thm-reg-1}
Suppose a linear tensor network~\eqref{eq:tensorrep-linear} satisfies Assumption~\ref{assm:complex-diag-decomp}. Assume also that the matrices $\mU_1, \dots, \mU_L$ and $\mS$ from Assumption~\ref{assm:complex-diag-decomp} are all real matrices. 
For some $\lambda > 0$, choose any vector $\bar \veta \in \reals^m$ satisfying $[\bar \veta]_j^2 \geq \lambda$ for all $j \in [m]$, and choose initial directions $\bar \vv_l = \mU_l \bar \veta$ for $l \in [L-1]$ and $\bar \vv_L = \zeros$.
Then, the linear coefficients $\vbeta(\bTheta(t))$ converge to a global minimum $\mS^T \vrho^\infty$, where $\vrho^\infty$ is the solution to
\begin{align*}
	\minimize\nolimits_{\vrho \in \R^m}&\quad Q_{L, \alpha, \bar \veta}(\vrho) 
	\defeq
	\alpha^2  \sum\nolimits_{j=1}^m [\bar \veta]_j^2 H_L \left ( 
	\tfrac{[\vrho]_j}{\alpha^L |[\bar \veta]_j|^L}
	\right )\quad
	\subjectto \quad \mX \mS^T \vrho= \vy,
\end{align*}
where $Q_{L, \alpha, \bar \veta} : \R^m \to \R$ is a norm-like function defined using $H_L (t) \defeq \int_0^t h_L^{-1} (\tau) d\tau$.
In case of invertible $\mS$, $\vbeta(\bTheta(t))$ converges to a global minimum $\vz^\infty$, the solution of
\begin{equation*}
	\minimize\nolimits_{\vz \in \R^d}\quad Q_{L, \alpha, \bar \veta}(\mS^{-T} \vz) 
	\quad
	\subjectto \quad \mX \vz = \vy.
\end{equation*}
\end{theorem}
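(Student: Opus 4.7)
My plan is to exploit the orthogonal decomposition~\eqref{eq:tensor-orth-decomp} to reduce the $\vv_l$ gradient flow to a diagonal system on two vectors $\vp,\vq$, control it through a conservation law combined with the canonical ODE of Lemma~\ref{lem:ode-real}, uncover a mirror-flow structure that identifies $\nabla Q_{L,\alpha,\bar\veta}(\vrho(t))$ with a primitive of the residual, and finally match the KKT conditions at the limit. Setting $\vnu_l := \mU_l^T \vv_l \in \R^m$ and plugging the decomposition into the flow equations shows that $\dot\vv_l$ stays in $\mathrm{col}(\mU_l)$; since the chosen initialization also lies there, $\vv_l(t)=\mU_l\vnu_l(t)$ throughout, and the $\vnu_l$-dynamics becomes $\dot\vnu_l = \mS(-\mX^T\vr)\odot\bigodot_{k\neq l}\vnu_k$. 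By ODE uniqueness the symmetric initialization $\vnu_1(0)=\cdots=\vnu_{L-1}(0)=\alpha\bar\veta$, $\vnu_L(0)=\zeros$ collapses the first $L-1$ trajectories into a single vector $\vp(t)$; writing $\vq(t):=\vnu_L(t)$, the componentwise system reads $\dot p = g\,p^{L-2}q$, $\dot q = g\,p^{L-1}$ with scalar $g(t)=[-\mS\mX^T\vr(t)]_j$.

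Subtracting $q\dot q$ from $p\dot p$ produces the conservation law $p(t)^2-q(t)^2\equiv(\alpha[\bar\veta]_j)^2\geq\alpha^2\lambda$, uniformly keeping $|[\vp(t)]_j|\geq\alpha\sqrt\lambda$. Matching this initial condition to Lemma~\ref{lem:ode-real} via $p(t)=\alpha[\bar\veta]_j\,p_L(\tau_j(t))$, $q(t)=\alpha[\bar\veta]_j\,q_L(\tau_j(t))$, with rescaled time $\tau_j(t)=(\alpha[\bar\veta]_j)^{L-2}[\mS\mX^T\vmu(t)]_j$ and $\vmu(t):=-\int_0^t\vr(s)\,ds$, the linear coefficients become $[\vrho(t)]_j=p^{L-1}q=(\alpha[\bar\veta]_j)^L h_L(\tau_j(t))$. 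Inverting $h_L$, and using oddness of $h_L^{-1}$ to absorb $\sign([\bar\veta]_j)^L$ for odd $L$ (converting $(\alpha[\bar\veta]_j)^L$ into $\alpha^L|[\bar\veta]_j|^L$ inside the argument), yields the central identity
\[
    \mS\mX^T\vmu(t) \;=\; \nabla Q_{L,\alpha,\bar\veta}(\vrho(t)).
\]
Differentiating turns this into the mirror flow $\tfrac{d}{dt}\nabla Q(\vrho(t))=-\mS\mX^T\vr(t)=-\nabla_\vrho\bigl[\tfrac12\norms{\mX\mS^T\vrho-\vy}^2\bigr]$, i.e., mirror descent on a convex quadratic with the strictly convex mirror $Q$.

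For any feasible $\vrho^*$ (i.e.\ $\mX\mS^T\vrho^*=\vy$), a direct computation yields $\tfrac{d}{dt}B_Q(\vrho^*,\vrho(t))=-\norms{\vr(t)}^2$, hence $\int_0^\infty\norms{\vr(s)}^2\,ds\leq B_Q(\vrho^*,\vrho(0))<\infty$; combined with monotone decrease of $\mc L$ under gradient flow, this forces $\vr(t)\to\zeros$ and therefore $\mc L(\bTheta(t))\to 0$. Boundedness of $[\mS\mX^T\vmu(t)]_j=h_L^{-1}([\vrho(t)]_j/(\alpha|[\bar\veta]_j|)^L)/(\alpha|[\bar\veta]_j|)^{L-2}$ (since $|h_L^{-1}|<c$ from Lemma~\ref{lem:ode-real}), together with injectivity of $\mS\mX^T$ (from full row rank of $\mX$ and full column rank of $\mS$), yields bounded $\vmu(t)$. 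Passing the identity to any subsequential limit $\tilde\vrho$ of $\vrho(t)$ produces both $\mX\mS^T\tilde\vrho=\vy$ and $\nabla Q(\tilde\vrho)\in\mathrm{col}(\mS\mX^T)$: the KKT conditions for $\min Q$ subject to $\mX\mS^T\vrho=\vy$. Strict convexity of $Q$ (its gradient $h_L^{-1}$ is strictly increasing) makes the KKT point the unique minimizer $\vrho^\infty$, so $\vrho(t)\to\vrho^\infty$ and $\vbeta(\bTheta(t))=\mS^T\vrho(t)\to\mS^T\vrho^\infty$; the invertible-$\mS$ statement then follows by the change of variables $\vz=\mS^T\vrho$.

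I expect the hardest step to be closing the compactness gap that upgrades the subsequential-limit argument to true convergence $\vrho(t)\to\vrho^\infty$: because $H_L$ grows only linearly at infinity, the Bregman sublevel sets $\{B_Q(\vrho^\infty,\cdot)\leq C\}$ are unbounded along $\ker(\mX\mS^T)$, so the standard mirror-descent coercivity is missing. The conservation-law lower bound $|\vp|\geq\alpha\sqrt\lambda$, which keeps $\nabla^2 Q(\vrho(t))$ nondegenerate on the trajectory and thereby preserves the informativeness of the mirror identity, must be combined with the coercivity of $Q$ restricted to the feasible affine subspace $\{\vrho:\mX\mS^T\vrho=\vy\}$ (where $H_L\to\infty$ along every ray) to rule out escape to infinity. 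A secondary but finicky bookkeeping issue, handled by the oddness arguments above, is the sign parity between $(\alpha[\bar\veta]_j)^L$ in the trajectory formula and $\alpha^L|[\bar\veta]_j|^L$ inside $Q$ when $L$ is odd and $[\bar\veta]_j$ can be negative.
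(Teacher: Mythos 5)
Your reduction to the diagonal system $\dot\veta_l = -\mS\mX^T\vr\odot\prod_{k\neq l}^\odot\veta_k$, the identification of the conservation law, the matching with Lemma~\ref{lem:ode-real}, the closed-form $[\vrho(t)]_j = \alpha^L|[\bar\veta]_j|^L\,h_L\!\bigl(\alpha^{L-2}|[\bar\veta]_j|^{L-2}[\mS\mX^T\vmu(t)]_j\bigr)$ (after the oddness bookkeeping), and the final KKT matching all coincide with the paper's argument. Your reformulation of the limit characterization as a mirror flow $\nabla Q_{L,\alpha,\bar\veta}(\vrho(t)) = \mS\mX^T\vmu(t)$ is a genuinely different and illuminating way to package step 4 of the paper's proof, and your Bregman computation $\tfrac{d}{dt}B_Q(\vrho^*,\vrho(t)) = -\ltwos{\vr(t)}^2$ is correct.

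There is, however, a real gap in the convergence step, and it is precisely the one you flagged as ``hardest.'' The Bregman bound gives only $\int_0^\infty\ltwos{\vr(s)}^2\,ds<\infty$, which (together with monotonicity of $\mc L$) yields $\vr(t)\to\zeros$ but does \emph{not} give $\int_0^\infty\ltwos{\vr(s)}\,ds<\infty$, hence does not give convergence or even boundedness of $\vmu(t) = -\int_0^t\vr(s)\,ds$. Your attempt to bootstrap boundedness of $\vmu$ from ``$|h_L^{-1}|<c$'' does not work in general: Lemma~\ref{lem:ode-real} allows $c\in(0,\infty]$, and for $L=2$ we have $p_2=\cosh$, $q_2=\sinh$ on all of $\R$, so $c=\infty$ and $h_2^{-1}$ is unbounded. (For $L\geq 3$ one does get $c<\infty$, so your argument is salvageable there, but $L=2$ is the most basic case and the statement must cover it.) Without a bound on $\vmu$, the subsequential-limit argument has nothing to extract a subsequence from, and the coercivity considerations you sketch are not enough to close the hole as written.

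The paper avoids this entirely by a more elementary route: it proves the stronger bound $\mc L(\bTheta(t))\leq\mc L(\bTheta(0))\exp(-2\alpha^{2L-2}\lambda^{L-1}s_{\min}(\mS)^2 s_{\min}(\mX)^2 t)$ directly from the conservation law $[\veta_l(t)]_j^2 - [\veta_L(t)]_j^2 = \alpha^2[\bar\veta]_j^2\geq\alpha^2\lambda$ (the same inequality you derive), by lower-bounding $\ltwos{\dot\vv_L}^2$. Exponential decay of $\mc L$ makes $\ltwos{\vr}$ exponentially decaying, hence $\vmu(\infty)=-\int_0^\infty\vr\,d\tau$ is a convergent improper integral; the explicit formula then gives $\vrho(t)\to\vrho^\infty$ with no compactness or subsequence argument required, and the KKT verification proceeds exactly as you wrote it. Your mirror-flow perspective does in principle permit a second fix: since $h_L'\geq 1$ (because $p_L\geq 1$), $(h_L^{-1})'\leq 1$, so $\nabla^2 Q(\vrho(t))\preceq \alpha^{2-2L}\max_j|[\bar\veta]_j|^{2-2L}\,\mI$ uniformly along the trajectory; plugging $\dot\vr = -\mX\mS^T[\nabla^2 Q(\vrho)]^{-1}\mS\mX^T\vr$ into $\tfrac{d}{dt}\tfrac12\ltwos{\vr}^2$ then recovers exponential decay. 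But as written, the proposal does not carry this out, so the compactness gap remains open.
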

The proofs of Lemma~\ref{lem:ode-real} and Theorem~\ref{thm:meta-thm-reg-1} are deferred to Appendix~\ref{sec:proof-meta-thm-reg-1}. 

\begin{remark}[Interpolation between $\ell_1$ and $\ell_2$]
It can be checked that $H_L(t)$ grows like the absolute value function if $t$ is large, and grows like a quadratic function if $t$ is close to zero. This means that 
\begin{align*}
    \lim_{\alpha \to 0} Q_{L, \alpha, \bar \veta}(\vrho) \propto \sum\nolimits_{j=1}^m \tfrac{|[\vrho]_j|}{|[\bar \veta]_j|^{L-2}},\quad
    \lim_{\alpha \to \infty} Q_{L, \alpha, \bar \veta}(\vrho) \propto \sum\nolimits_{j=1}^m \tfrac{[\vrho]_j^2}{[\bar \veta]_j^{2L-2}},
\end{align*}
so $Q_{L, \alpha, \bar \veta}$ interpolates between the weighted $\ell_1$ and weighted $\ell_2$ norms of $\vrho$. Also, the weights in the norm are \emph{dependent} on the initialization direction $\bar \veta$ unless $L=2$ and $\alpha \to 0$. In general, $Q_{L, \alpha, \bar \veta}$ interpolates the standard $\ell_1$ and $\ell_2$ norms only if $|[\bar \veta]_j|$ is the same for all $j \in [m]$. 
This result is similar to the observations made in \citet{woodworth2020kernel} which considers a diagonal network with a ``differential'' structure $f(\vx;\vw_+,\vw_-) = \vx^T(\vw_+^{\odot L}-\vw_-^{\odot L})$. In contrast, our results apply to a more general class of networks, without the need to have the differential structure. In Appendix~\ref{sec:cor-meta-thm-reg-1}, we state corollaries of Theorem~\ref{thm:meta-thm-reg-1} for linear diagonal networks and linear full-length convolutional networks with even data points. There, we also show that deep matrix sensing with commutative sensor matrices~\citep{arora2019implicit} is a special case of our setting. We note that we explicitly prove convergence of loss to zero, instead of assuming it (as done in existing results).
\end{remark}

\subsection{Limit point characterization in extremely overparameterized settings}
Next, we present the regression counterpart of Theorem~\ref{thm:meta-thm-clsf-3}, for 2-layer linear tensor networks~\eqref{eq:tensorrep-linear} with a single data point. For this extremely overparameterized setup, we can fully characterize the limit points as functions of initialization $\vv_1(0) = \alpha \bar \vv_1$ and $\vv_2(0) = \alpha \bar \vv_2$, for \emph{any} linear tensor networks including linear convolutional networks with filter size smaller than input dimension.
\begin{theorem}
\label{thm:meta-thm-reg-2}
Suppose we have a 2-layer linear tensor network~\eqref{eq:tensorrep-linear} and a single data point $(\vx, y)$. Consider the compact SVD $\tM(\vx) = \mU_1 \diag(\vs) \mU_2^T$, where $\mU_1 \in \reals^{k_1 \times m}$, $\mU_2 \in \reals^{k_2 \times m}$, and $\vs \in \reals^{m}$ for $m \leq \min\{k_1, k_2\}$. Assume that there exists $\lambda > 0$ such that the initial directions $\bar \vv_1, \bar \vv_2$ of the network parameters satisfy $[\mU_1^T \bar \vv_1]_j^2 - [\mU_2^T \bar \vv_2]_j^2 \geq \lambda$ for all $j \in [m]$. Then, gradient flow converges to a global minimizer of the loss $\mc L$, and $\vv_1(t)$ and $\vv_2(t)$ converge to the limit points:
\begin{align*}
    \vv_1^\infty \!&=\! \alpha \mU_1\!
    \left ( \mU_1^T\bar \vv_1 \odot \cosh \left (g^{-1} \left (\frac{y}{\alpha^2} \right)\vs\right)
    + \mU_2^T\bar \vv_2 \odot \sinh \left (g^{-1} \left (\frac{y}{\alpha^2} \right) \vs \right)
    \right )
    \!+\!\alpha (\mI_{k_1} - \mU_1 \mU_1^T) \bar \vv_1,\\
    \vv_2^\infty \!&=\! \alpha \mU_2\!
    \left ( \mU_1^T\bar \vv_1 \odot \sinh \left (g^{-1} \left (\frac{y}{\alpha^2} \right) \vs \right)
    +
    \mU_2^T\bar \vv_2 \odot \cosh \left (g^{-1} \left (\frac{y}{\alpha^2} \right)\vs\right)
    \right )
    \!+\!\alpha (\mI_{k_2} - \mU_2 \mU_2^T) \bar \vv_2,
\end{align*}
where $g^{-1}$ is the inverse of the following strictly increasing function
\begin{equation*}
    g(\nu) = \sum\nolimits_{j=1}^m [\vs]_j \Big (\tfrac{[\mU_1^T\bar \vv_1]_j^2+[\mU_2^T\bar \vv_2]_j^2}{2} \sinh(2[\vs]_j \nu) + [\mU_1^T\bar \vv_1]_j [\mU_2^T\bar \vv_2]_j \cosh(2[\vs]_j \nu) \Big).
\end{equation*}
\end{theorem}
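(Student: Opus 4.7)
The plan is to diagonalize the dynamics using the SVD $\tM(\vx) = \mU_1 \diag(\vs) \mU_2^T$ and, because there is a single data point, reduce the problem to a scalar ODE that can be solved explicitly via a time change. For $L=2$ the network is $f(\vx;\bTheta) = \vv_1^T \tM(\vx) \vv_2$ and the gradient flow reads $\dot \vv_1 = -r(t)\,\tM(\vx) \vv_2$, $\dot \vv_2 = -r(t)\,\tM(\vx)^T\vv_1$ with residual $r(t) = f(\vx;\bTheta(t)) - y$. Since $\tM(\vx)$ has range in the column space of $\mU_1$ and $\tM(\vx)^T$ in that of $\mU_2$, the components $(\mI_{k_1} - \mU_1\mU_1^T)\vv_1$ and $(\mI_{k_2} - \mU_2\mU_2^T)\vv_2$ are conserved along the flow, so they remain pinned at $\alpha(\mI_{k_1} - \mU_1\mU_1^T)\bar \vv_1$ and $\alpha(\mI_{k_2} - \mU_2\mU_2^T)\bar \vv_2$ for all $t$, producing the final additive terms in the statement.

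Projecting onto the SVD basis, set $\vu_1 := \mU_1^T \vv_1$, $\vu_2 := \mU_2^T \vv_2$ and use $\mU_l^T \mU_l = \mI_m$ to get $\dot \vu_1 = -r(t) \diag(\vs) \vu_2$ and $\dot \vu_2 = -r(t) \diag(\vs) \vu_1$. Introducing the time-like variable $\nu(t) := -\int_0^t r(\tau)\,d\tau$, the system decouples componentwise into the $2 \times 2$ linear ODE $\frac{d[\vu_1]_j}{d\nu} = [\vs]_j [\vu_2]_j$, $\frac{d[\vu_2]_j}{d\nu} = [\vs]_j [\vu_1]_j$, whose solution with initial data $\alpha[\mU_1^T \bar\vv_1]_j,\; \alpha[\mU_2^T \bar\vv_2]_j$ is precisely the hyperbolic closed form appearing in the theorem. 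Reassembling via $\vv_l = \mU_l \vu_l + (\mI_{k_l} - \mU_l \mU_l^T)\vv_l$ yields exactly the stated expressions for $\vv_1(t), \vv_2(t)$, evaluated at $\nu = \nu(t)$.

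The main technical step is then to show that $\nu(t) \to g^{-1}(y/\alpha^2)$. Substituting the hyperbolic expressions into $f(\vx;\bTheta) = \vs^T(\vu_1 \odot \vu_2)$ and applying product-to-sum identities gives $f(\vx;\bTheta(t)) = \alpha^2 g(\nu(t))$, so $\nu$ satisfies the autonomous scalar ODE $\dot \nu = y - \alpha^2 g(\nu)$ with $\nu(0) = 0$. Using $\cosh u \pm \sinh u = e^{\pm u}$, one rewrites
\[
g(\nu) = \tfrac{1}{4}\sum_{j=1}^m [\vs]_j \Big( ([\mU_1^T\bar\vv_1]_j + [\mU_2^T\bar\vv_2]_j)^2 e^{2[\vs]_j \nu} - ([\mU_1^T\bar\vv_1]_j - [\mU_2^T\bar\vv_2]_j)^2 e^{-2[\vs]_j \nu} \Big).
\]
The initialization hypothesis $[\mU_1^T\bar\vv_1]_j^2 - [\mU_2^T\bar\vv_2]_j^2 \geq \lambda > 0$ forces $|[\mU_1^T\bar\vv_1]_j| > |[\mU_2^T\bar\vv_2]_j|$, so both squared factors are strictly positive; combined with $[\vs]_j > 0$ from the compact SVD, term-by-term differentiation yields $g'(\nu) > 0$ everywhere and the limits $g(\pm\infty) = \pm\infty$. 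Hence $g : \R \to \R$ is a strictly increasing bijection and $\nu^\infty := g^{-1}(y/\alpha^2)$ is well-defined. This monotonicity/surjectivity step is the part I expect to be trickiest to present cleanly, since it is the only place the initialization assumption is consumed.

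With $g$ strictly increasing, the Lyapunov function $\Phi(\nu) := \int_{\nu^\infty}^{\nu} (\alpha^2 g(\tau) - y)\, d\tau$ is nonnegative, vanishes only at $\nu^\infty$, and satisfies $\dot \Phi = -(\alpha^2 g(\nu) - y)^2 \leq 0$ along the flow. Moreover, the sign of $\dot \nu = -(\alpha^2 g(\nu) - y)$ coincides with the sign of $\nu^\infty - \nu$, so $\nu(t)$ is monotone and confined to the interval between $0$ and $\nu^\infty$; a standard LaSalle-type argument then forces $\Phi \to 0$, hence $\nu(t) \to \nu^\infty$ and $r(t) \to 0$, which simultaneously proves convergence of gradient flow to a global minimizer of $\mc L$. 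Substituting $\nu^\infty = g^{-1}(y/\alpha^2)$ into the closed-form expressions for $\vv_1(t)$ and $\vv_2(t)$ derived in the second paragraph gives the stated formulas for $\vv_1^\infty$ and $\vv_2^\infty$.
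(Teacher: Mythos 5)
Your proposal is correct, and it takes a genuinely different route to the convergence part. The paper first proves $\mc L(\bTheta(t)) \to 0$ by a PL-type inequality: it shows the gap $[\mU_1^T\vv_1(t)]_j^2 - [\mU_2^T\vv_2(t)]_j^2$ is conserved, hence bounded below by $\alpha^2\lambda$, which bounds $\ltwo{\mU_2^T\dot\vv_2}^2$ from below by a multiple of $\mc L$ and yields $\mc L(\bThetafc(t)) \le \mc L(\bThetafc(0))e^{-2\alpha^2\lambda\ltwos{\vs}^2 t}$. Only afterwards does it derive the hyperbolic solution and solve $\alpha^2 g(\nu^\infty)=y$. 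You instead derive the hyperbolic solution first, observe that $f(\vx;\bTheta(t)) = \alpha^2 g(\nu(t))$ holds for all $t$ (not just in the limit), and reduce the entire dynamics to the scalar autonomous ODE $\dot\nu = y - \alpha^2 g(\nu)$, $\nu(0)=0$; monotonicity and boundedness of $\nu(t)$ toward the unique equilibrium $\nu^\infty = g^{-1}(y/\alpha^2)$ then give convergence and the limit point simultaneously. Your approach is more self-contained and arguably cleaner here, since it exploits the complete integrability of the $L=2$, $n=1$ case, whereas the paper's PL argument is structured to match its treatment of the deeper/general settings in Theorems~\ref{thm:meta-thm-clsf-2}, \ref{thm:meta-thm-clsf-3}, and \ref{thm:meta-thm-reg-1}. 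Your proof that $g$ is a strictly increasing bijection, via the $\cosh u \pm \sinh u = e^{\pm u}$ rewriting, is algebraically slightly different from the paper's observation that $\nu \mapsto a\sinh\nu + b\cosh\nu$ is strictly increasing when $a > |b|$ (with $a>|b|$ forced by AM-GM plus the initialization gap), but the two are equivalent and both consume the hypothesis $[\mU_1^T\bar\vv_1]_j^2 - [\mU_2^T\bar\vv_2]_j^2 \ge \lambda$ in exactly the same way. Everything else — the conservation of the orthogonal components, the projection to $\veta_l = \mU_l^T\vv_l$, the decoupled $2\times 2$ linear system in the time variable $\nu$ — matches the paper.
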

The proof can be found in Appendix~\ref{sec:proof-meta-thm-reg-2}. 
We can observe that as $\alpha \to 0$, we have $g^{-1} \left (\frac{y}{\alpha^2} \right) \to \infty$, which results in exponentially faster growth of the $\sinh(\cdot)$ and $\cosh(\cdot)$ for the top singular values. As a result, the top singular vectors dominate the limit points $\vv_1^\infty$ and $\vv_2^\infty$ as $\alpha \to 0$, and the limit points become independent of the initial directions $\bar \vv_1, \bar \vv_2$.
Experiment results in Section~\ref{sec:exp} support this observation.

\subsection{Implicit bias in fully-connected networks: the $\alpha \to 0$ limit}
We state our last theoretical element of this paper, which proves that the linear coefficients $\vbetafc(\bThetafc)$ of deep linear fully-connected networks converge to the minimum $\ell_2$ norm solution as $\alpha \to 0$.
We assume for simplicity that $d_1 = d_2 = \dots = d_L = d$ in this section, but we can extend it for $d_l \geq d$ without too much difficulty, in a similar way as described in \citet{wu2019global}.
Recall $\ffc(\vx; \bThetafc) = \vx^T \mW_1 \cdots \mW_{L-1} \vw_L$. We consider minimizing the training loss $\mc L$ with initialization $\mW_l(0) = \alpha \bar \mW_l$ for $l \in [L-1]$ and $\vw_L(0) = \alpha \bar \vw_L$.

\begin{theorem}
\label{thm:fc-reg}
Consider an $L$-layer linear fully-connected network.
\begin{enumerate}[leftmargin=0.5cm]
	\item \label{item:thm-fc-reg-1} (convergence) If (1)~$\bar \mW_l^T \bar \mW_l \succeq \bar \mW_{l+1} \bar \mW_{l+1}^T$ for $l \in [L-2]$, and (2)~there exists $\lambda > 0$ such that $\bar \mW_{L-1}^T \bar \mW_{L-1} - \bar \vw_L \bar \vw_L^T \succeq \lambda \mI_d$, then the training loss $\mc L(\bThetafc(t)) \to 0$.
	\item \label{item:thm-fc-reg-2} (bias) If $\mc L(\bThetafc(t)) \to 0$ for some fixed initial directions $\bar \mW_1, \dots, \bar \mW_{L-1}, \bar \vw_L$, then
	\begin{equation*}
		\lim_{\alpha \to 0} \lim_{t \to \infty} \vbetafc(\bThetafc(t)) = \mX^T (\mX \mX^T)^{-1} \vy.
	\end{equation*}
\end{enumerate}
\end{theorem}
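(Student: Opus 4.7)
My plan handles the two parts in sequence, both leveraging the balancedness conservation laws satisfied by gradient flow on linear fully-connected networks.

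For Part~\ref{item:thm-fc-reg-1}, I write out the per-layer gradients $\dot \mW_l = -(\mW_{l-1}^T\cdots\mW_1^T)\mX^T\vr\,(\vw_L^T\mW_{L-1}^T\cdots\mW_{l+1}^T)$ for $l \in [L-1]$ and $\dot \vw_L = -\mW_{L-1}^T\cdots\mW_1^T\mX^T\vr$, with $\vr = \mX\vbetafc - \vy$. A direct differentiation yields the standard conservation laws $\tfrac{d}{dt}(\mW_l^T\mW_l - \mW_{l+1}\mW_{l+1}^T) = 0$ for $l \in [L-2]$ and $\tfrac{d}{dt}(\mW_{L-1}^T\mW_{L-1} - \vw_L\vw_L^T) = 0$ (the cross terms on each side turn out to be the same outer product of forward/backward signals). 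Together with the initial hypotheses, these give $\mW_{L-1}^T\mW_{L-1} - \vw_L\vw_L^T \succeq \alpha^2\lambda\mI_d$ for all $t\geq 0$, and cascading through the layerwise PSD orderings (using that $\mW^T\mW$ and $\mW\mW^T$ share nonzero eigenvalues) gives $\mW_l^T\mW_l \succeq \alpha^2\lambda\mI_d$ for every $l \in [L-1]$. This yields a Polyak--\L ojasiewicz inequality $\|\nabla_{\vw_L}\mc L\|^2 = \|\mW_{L-1}^T\cdots\mW_1^T\mX^T\vr\|^2 \geq (\alpha^2\lambda)^{L-1}\sigma_{\min}(\mX\mX^T)\|\vr\|^2 \geq 2(\alpha^2\lambda)^{L-1}\sigma_{\min}(\mX\mX^T)\mc L$, so $\tfrac{d}{dt}\mc L \leq -c(\alpha,\lambda)\mc L$ and Gr\"onwall gives exponential decay of loss to zero.

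For Part~\ref{item:thm-fc-reg-2}, assuming $\mc L \to 0$, I analyze the end-to-end dynamics via $\dot\vbetafc = -\mM\mX^T\vr$, where the symmetric PSD tangent kernel is $\mM = \|\mW_2\cdots\mW_{L-1}\vw_L\|^2\mI_d + \sum_{l=2}^{L-1}\|\mW_{l+1}\cdots\mW_{L-1}\vw_L\|^2(\mW_1\cdots\mW_{l-1})(\mW_1\cdots\mW_{l-1})^T + (\mW_1\cdots\mW_{L-1})(\mW_1\cdots\mW_{L-1})^T$. I decompose $\vbetafc = \vbetafc^\parallel + \vbetafc^\perp$ with $\vbetafc^\parallel \in \mathrm{row}(\mX)$ and $\vbetafc^\perp \in \mathrm{null}(\mX)$. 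Zero-loss convergence forces $\vbetafc^\parallel(\infty) = \mX^T(\mX\mX^T)^{-1}\vy$ independent of $\alpha$, so it suffices to show $\vbetafc^\perp(\infty) \to 0$ as $\alpha \to 0$.

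The core structural claim is that in the $\alpha\to 0$ limit, the conservation laws force each $\mW_l$ to collapse to a rank-$1$ matrix with singular vectors aligned across layers: $\mW_{L-1}^T\mW_{L-1}=\vw_L\vw_L^T$ forces $\mW_{L-1}$ to be rank $1$, and cascading $\mW_l^T\mW_l = \mW_{l+1}\mW_{l+1}^T$ upward propagates rank-$1$ structure with matched singular directions through the whole network. Then $(\mW_1\cdots\mW_{l-1})(\mW_1\cdots\mW_{l-1})^T$ becomes rank $1$ aligned with $\vbetafc$, and substituting reduces the kernel to $\mM = \|\vbetafc\|^{2(L-1)/L}\mI_d + (L-1)\|\vbetafc\|^{2(L-1)/L-2}\vbetafc\vbetafc^T + O(\alpha^2)$. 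Since $\mX^T\vr \in \mathrm{row}(\mX)$, projecting the $\vbetafc$ ODE onto $\mathrm{null}(\mX)$ kills the identity piece and leaves $\dot\vbetafc^\perp = -(L-1)\|\vbetafc\|^{2(L-1)/L-2}(\vbetafc^T\mX^T\vr)\vbetafc^\perp + O(\alpha^2)$, so $\vbetafc^\perp$ is only rescaled by a scalar. Comparing with $\tfrac{d}{dt}\ln\|\vbetafc\|^2 = -2L\|\vbetafc\|^{2(L-1)/L-2}(\vbetafc^T\mX^T\vr)+O(\alpha^2)$ gives the approximate conservation $\tfrac{d}{dt}\ln\bigl(\|\vbetafc^\perp\|^2/\|\vbetafc\|^{2(L-1)/L}\bigr)\approx 0$, and integrating to $t=\infty$ yields $\|\vbetafc^\perp(\infty)\| \approx \|\vbetafc(\infty)\|^{(L-1)/L}\cdot\|\vbetafc^\perp(0)\|/\|\vbetafc(0)\|^{(L-1)/L} = O(\alpha^L/\alpha^{L-1}) = O(\alpha) \to 0$.

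The main obstacle will be making the rank-$1$ collapse of the $\mW_l$'s and the closed form of $\mM$ rigorous at finite but small $\alpha$: one must cascade balancedness through all $L-1$ layers and control the $O(\alpha^2)$ corrections uniformly in $t$ over an infinite horizon. The correction control is delicate because $\|\vbetafc(t)\|$ starts at $O(\alpha^L)$ and must grow to $O(1)$, so the ``$O(\alpha^2)$'' terms have different relative sizes in different phases and a careful multi-scale or bootstrapping argument is needed to keep the approximate conservation law meaningful throughout.
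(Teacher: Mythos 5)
Your Part~\ref{item:thm-fc-reg-1} argument matches the paper's in substance: both derive the balancedness conservation laws $\mW_l^T\mW_l - \mW_{l+1}\mW_{l+1}^T = \text{const}$ and $\mW_{L-1}^T\mW_{L-1} - \vw_L\vw_L^T = \text{const}$, use the initialization hypotheses to lower-bound the smallest singular value of $\mW_{L-1}^T\cdots\mW_1^T$ by $\alpha^{L-1}\lambda^{(L-1)/2}$ (the paper bounds the product directly; you cascade per-layer bounds $\mW_l^T\mW_l \succeq \alpha^2\lambda\mI_d$ — equivalent when the $\mW_l$ are square), and conclude a PL-type inequality and exponential decay via Gr\"onwall. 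No issue here.

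Part~\ref{item:thm-fc-reg-2} takes a genuinely different route and contains a gap that you have correctly identified but not resolved. Your plan is to decompose $\vbetafc = \vbetafc^{\parallel} + \vbetafc^{\perp}$, derive a closed-form tangent kernel $\mM$ under the rank-one-collapsed ansatz, and then exhibit the approximate conservation of $\ln\bigl(\|\vbetafc^{\perp}\|^2/\|\vbetafc\|^{2(L-1)/L}\bigr)$. The algebra of that conservation law is indeed correct at the level of the ansatz. The problem is precisely the one you flag: the rank-one approximation $\mW_l = s_l\vu_l\vv_l^T + O(\alpha^2)$ is an \emph{absolute} $O(\alpha^2)$ error (cf. \eqref{eq:proof-fc-reg-9}), which is the same order as $\mW_l$ itself near $t=0$ where $\mW_l = O(\alpha)$. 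Consequently the ``$O(\alpha^2)$'' correction to $\dot\vbetafc^{\perp}$ is not small relative to $\|\vbetafc^{\perp}\| = O(\alpha^L)$ in the early phase: upon taking the log-derivative it contributes terms of size $O(\alpha^{2-2L})$, which diverges as $\alpha\to 0$. So the approximate conservation law is not yet usable over the full trajectory without a multi-scale bootstrap that you have not supplied, and it is not obvious the argument survives such a bootstrap.

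The paper avoids this entirely with one cleaner observation you missed: since $\dot\mW_1 = -\mX^T\vr\,(\vw_L^T\mW_{L-1}^T\cdots\mW_2^T)$ has all columns in $\rowsp(\mX)$, the component $\mW_1^{\perp}$ of $\mW_1$ whose columns are orthogonal to $\rowsp(\mX)$ satisfies $\dot\mW_1^{\perp} = \mathbf{0}$ \emph{exactly}, for all $t$. Hence $\nfro{\mW_1^{\perp}(t)} = \nfro{\mW_1^{\perp}(0)} = O(\alpha)$ with no approximation and no dependence on phase. The rank-one collapse and adjacent-singular-vector alignment (from balancedness, with $O(\alpha^2)$ errors) are then used only to show that at convergence $\vbetafc$ is proportional to the top left singular vector $\vu_1$ of $\mW_1$ and that $s_1 = \Omega(1)$; combined with $\mW_1^{\perp} = O(\alpha)$ this forces $\vu_1$, and hence $\vbetafc$, into $\rowsp(\mX)$ as $\alpha\to 0$. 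Since the unique global minimizer of $\mX\vz=\vy$ in $\rowsp(\mX)$ is $\mX^T(\mX\mX^T)^{-1}\vy$, the proof closes without any fine-grained ODE analysis in the small-$\|\vbetafc\|$ regime. In short: your end-to-end kernel calculation is a nice heuristic, but the key fact that makes the argument rigorous is the \emph{exact} invariance of $\mW_1^{\perp}$, not an approximate conservation of a ratio involving $\vbetafc^{\perp}$.
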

The proof is presented in Appendix~\ref{sec:proof-thm-fc-reg}.
Theorem~\ref{thm:fc-reg} shows that in the limit $\alpha \to 0$, linear fully-connected networks have bias towards the minimum $\ell_2$ norm solution, regardless of the depth. This is consistent with the results shown for classification. We note that the convergence part (Theorem~\ref{thm:fc-reg}.\ref{item:thm-fc-reg-1}) holds for any $\alpha > 0$, not necessarily in the limit $\alpha \to 0$. Our sufficient conditions for global convergence stated in Theorem~\ref{thm:fc-reg}.\ref{item:thm-fc-reg-1} is a generalization of the zero-asymmetric initialization scheme ($\bar \mW_1 = \dots = \bar \mW_{L-1} = \mI_d$ and $\bar \vw_L = \zeros$) proposed in \citet{wu2019global}.
We also emphasize that the bias part (Theorem~\ref{thm:fc-reg}.\ref{item:thm-fc-reg-2}) holds for any initial directions that lead to convergence of loss to zero, not just the ones satisfying conditions in Theorem~\ref{thm:fc-reg}.\ref{item:thm-fc-reg-1}.

\section{Experiments}
\label{sec:exp}

\textbf{Regression.~~}
To fully visualize the trajectory of linear coefficients, we run simple experiments with 2-layer linear fully-connected/diagonal/convolutional networks with a single 2-dimensional data point $(\vx,y) = ([1~~2],1)$. For this dataset, the minimum $\ell_2$ norm solution (corresponding to fully-connected networks) of the regression problem is $[0.2~~0.4]$, whereas the minimum $\ell_1$ norm solution (corresponding to diagonal) is $[0~~0.5]$ and the minimum DFT-domain $\ell_1$ norm solution (corresponding to convolutional) is $[0.33~~0.33]$.
We randomly pick four directions $\bar \vz_1, \dots \bar \vz_4 \in \R^2$, and choose initial directions of the network parameters in a way that their linear coefficients at initialization are exactly $\vbeta(\bTheta(0)) = \alpha^2 \bar \vz_j$. With varying initial scales $\alpha \in \{0.01, 0.5, 1\}$, we run GD with small step size $\eta = 10^{-3}$ for large enough number of iterations $T = 5 \times 10^3$. Figures~\ref{fig:reg} and \ref{fig:clsf} plot the trajectories of $\vbeta(\bTheta)$ (appropriately clipped for visual clarity) as well as the predicted limit points (Theorem~\ref{thm:meta-thm-reg-2}). We observe that even though the networks start at the same linear coefficients $\alpha^2 \bar \vz_j$, they evolve differently due to different architectures. Note that the prediction of limit points is accurate, and the solution found by GD is less dependent on initial directions when $\alpha$ is small.

\textbf{Classification.~~}
It is shown in the existing works as well as in Section~\ref{sec:clsf} that the limit directions of linear coefficients are independent of the initialization. Is this also true in practice? To see this, we run a set of toy experiments on classification with two data points $(\vx_1,y_1) = ([1~~2],+1)$ and $(\vx_2,y_2) = ([0~-\!3],-1)$. One can check that the max-margin classifiers for this problem are in the same directions to the corresponding min-norm solutions in the regression problem above. We use the same networks as in regression, and the same set of initial directions satisfying $\vbeta(\bTheta(0)) = \alpha^2 \bar \vz_j$. With initial scales $\alpha \in \{0.01, 0.5, 1\}$, we run GD with step size $\eta = 5 \times 10^{-4}$ for $T = 2 \times 10^6$ iterations. All experiments reached $\mc L(\bTheta) \lesssim 10^{-5}$ at the end. The trajectories are plotted in Figure~\ref{fig:clsf} in the Appendix. We find that, in contrast to our theoretical characterization, the actual coefficients are quite dependent on initialization, because we do not train the network all the way to zero loss. This observation is also consistent with a recent analysis \citep{moroshko2020implicit} for diagonal networks, and suggests that understanding the behavior of iterates after a finite number of steps is an important future work.
\section{Conclusion}
This paper studies the implicit bias of gradient flow on training linear tensor networks. Under a general tensor formulation of linear networks, we provide theorems characterizing how the network architectures and initializations affect the limit directions/points of gradient flow.
Our work provides a unified framework that connects multiple existing results on implicit bias of gradient flow as special cases.

\bibliographystyle{iclr2021_conference}
\bibliography{cite}

\newpage
\appendix

\begin{figure}[t]
     \centering
     \begin{subfigure}[t]{0.48\textwidth}
         \centering
         \includegraphics[width=\textwidth]{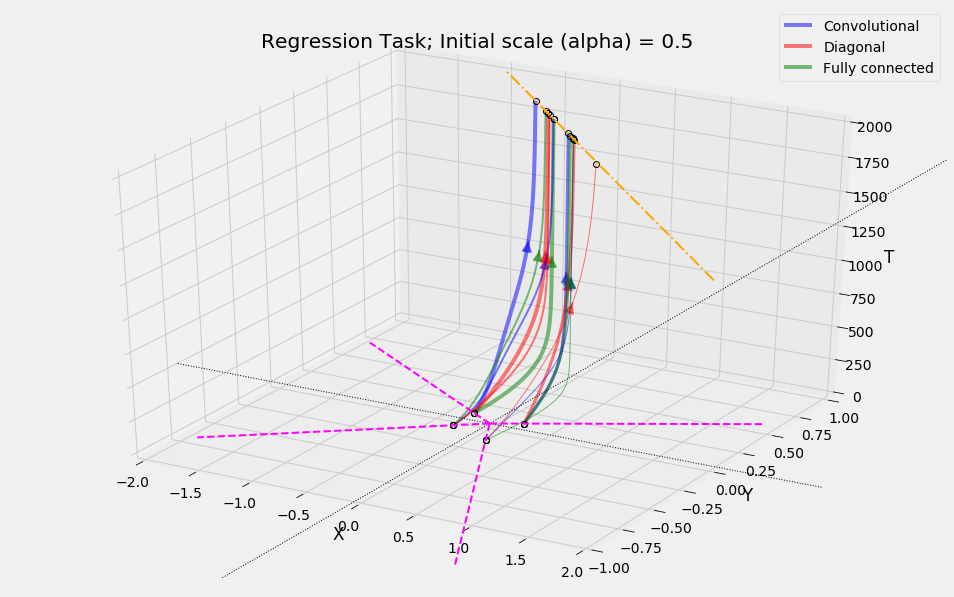}
     \end{subfigure}
     ~
     \begin{subfigure}[t]{0.48\textwidth}
         \centering
         \includegraphics[width=\textwidth]{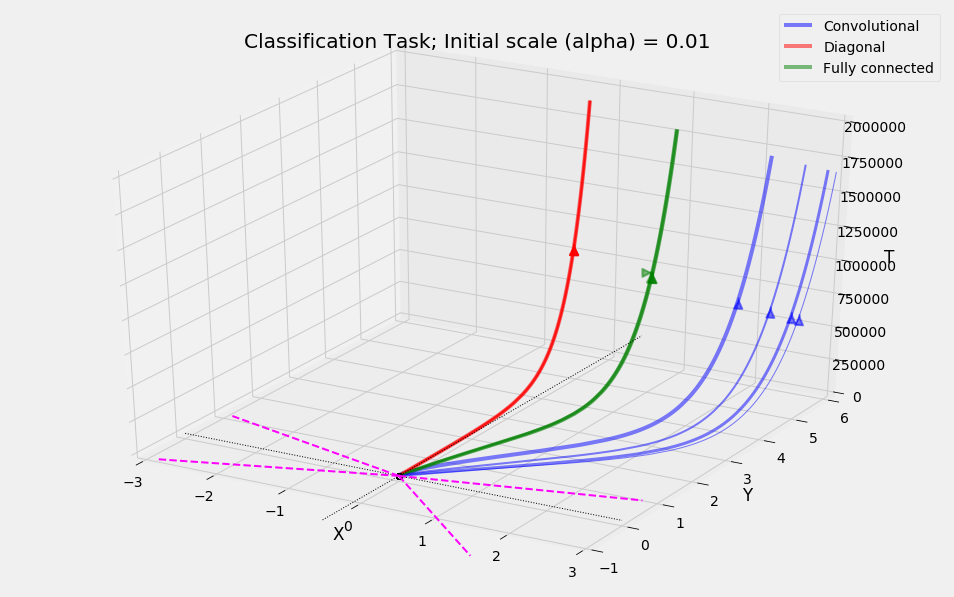}
     \end{subfigure}
     
     \vspace{5pt}
     
     \begin{subfigure}[t]{0.48\textwidth}
         \centering
         \includegraphics[width=\textwidth]{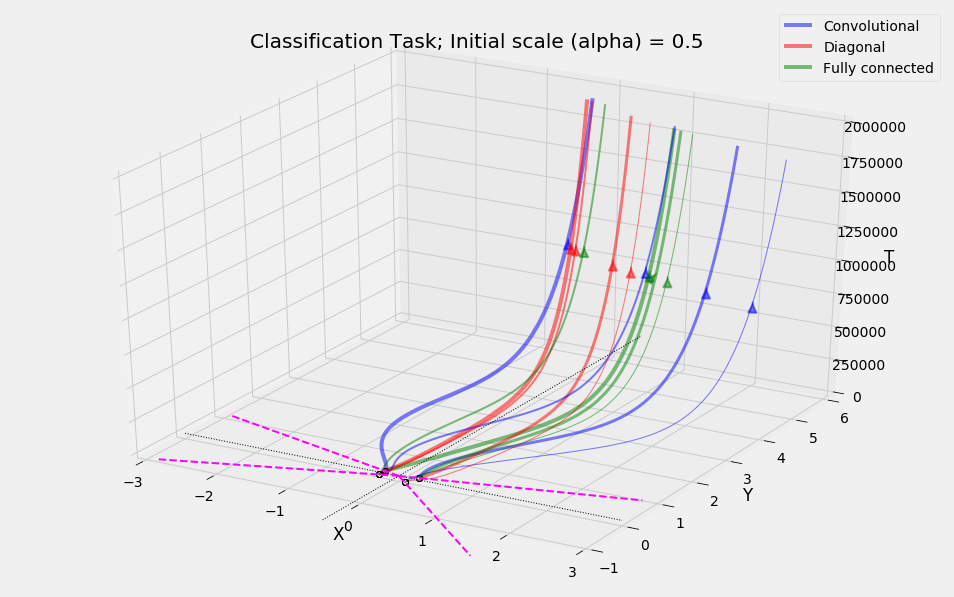}
     \end{subfigure}
     ~
     \begin{subfigure}[t]{0.48\textwidth}
         \centering
         \includegraphics[width=\textwidth]{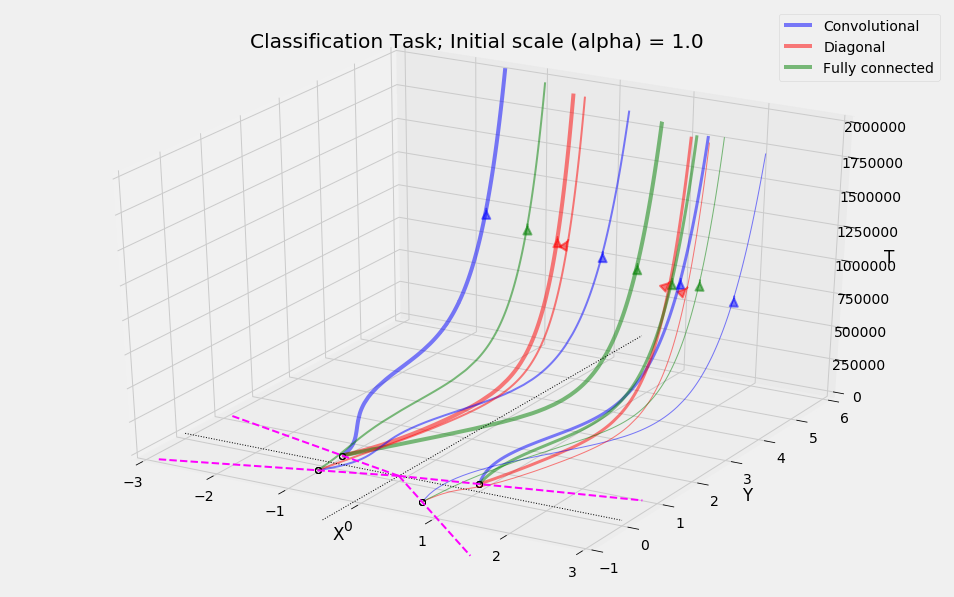}
     \end{subfigure}
    
    \caption{Gradient descent trajectories of linear coefficients of linear fully-connected, diagonal, and convolutional networks on a regression task with initial scale $\alpha = 0.5$ (top left), and networks on a classification task with initial scales $\alpha = 0.01, 0.5, 1$ (rest).
    Networks are initialized at the same coefficients (circles on purple lines), but follow different trajectories due to different implicit biases of networks induced from their architecture. 
    The top left figure shows that our theoretical predictions on limit points (circles on yellow line, the set of global minima) agree with the solution found by GD.
    For details of the experimental setup, please refer to Section~\ref{sec:exp}. }
    \label{fig:clsf}
\end{figure}

\section{Corollaries on specific network architectures}
\label{sec:deferred-corollaries}
We present corollaries obtained by specializing the theorems in the main text to specific network architectures. We briefly review the linear neural network architectures studied in this section.

\textbf{Linear fully-connected networks.}~~
An $L$-layer linear fully-connected network is defined as
\begin{equation}
\label{eq:fcnet-linear}
    \ffc (\vx; \bThetafc) = \vx^T \mW_1 \cdots \mW_{L-1} \vw_L,
\end{equation}
where $\mW_l \in \reals^{d_{l} \times d_{l+1}}$ for $l \in [L-1]$ (we use $d_1 = d$) and $\vw_L \in \reals^{d_{L}}$.

\textbf{Linear diagonal networks.}~~
An $L$-layer linear diagonal network is written as
\begin{align}
\label{eq:diagnet-linear}
    \fdiag (\vx; \bThetadiag) = (\vx \odot \vw_1 \odot \cdots \odot \vw_{L-1})^T \vw_L,
\end{align}
where $\vw_l \in \reals^d$ for $l \in [L]$. 

\textbf{Linear (circular) convolutional networks.}~~
An $L$-layer linear convolutional network is written as 
\begin{align}
\label{eq:convnet-linear}
    \fconv(\vx; \bThetaconv) = (\cdots((\vx \star \vw_1) \star \vw_2 ) \cdots  \star \vw_{L-1})^T \vw_L,
\end{align}
where $\vw_l \in \reals^{k_l}$ with $k_l \leq d$ and $k_L = d$, and $\star$ defines the circular convolution: for any $\va \in \reals^d$ and $\vb \in \reals^k$ ($k \leq d$), we have $\va \star \vb \in \reals^d$ defined as
$[\va \star \vb]_i = \sum_{j=1}^k [\va]_{(i+j-1) \bmod d} [\vb]_j$, for $i \in [d]$.
In case of $k_l = d$ for all $l \in [L]$, we refer to this network as \emph{full-length} convolutional networks.

\textbf{Deep matrix sensing.}~~
The deep matrix sensing problem considered in \citet{gunasekar2017implicit,arora2019implicit} aims to minimize the following problem
\begin{equation}
\label{eq:matrix-sensing}
    \minimize_{\mW_1, \dots, \mW_L \in \reals^{d \times d}}
    \quad
    \mc L_{\rm ms}(\mW_1 \cdots \mW_L) \defeq
    \sum\nolimits_{i=1}^n (y_i - \< \mA_i, \mW_1\cdots \mW_L \> )^2,
\end{equation}
where the sensor matrices $\mA_1, \dots, \mA_n \in \reals^{d \times d}$ are symmetric.
Following \citet{gunasekar2017implicit,arora2019implicit}, we consider sensor matrices $\mA_1, \dots, \mA_n \in \reals^{d \times d}$ that commute. To make the problem underdetermined, we assume that $n \leq d$, and $\mA_i$'s are linearly independent.

\subsection{Corollary of Theorem~\ref{thm:meta-thm-clsf-1}}
\label{sec:cor-meta-thm-clsf-1}
\begin{corollary}
\label{cor:clsf-fc}
Consider an $L$-layer linear fully-connected network~\eqref{eq:fcnet-linear}. If the training loss satisfies $\mc L(\bThetafc(t_0)) < 1$ for some $t_0 \geq 0$, then $\vbetafc(\bThetafc(t))$ converges in a direction that aligns with the solution of the following optimization problem
\begin{equation*}
    \minimize\nolimits_{\vz \in \reals^d} \quad \ltwo{\vz}^2 
    \quad
    \subjectto \quad y_i \vx_i^T \vz \geq 1,~~\forall i\in[n].
\end{equation*}
\end{corollary}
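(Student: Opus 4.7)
The plan is to recover Corollary~\ref{cor:clsf-fc} by specializing Theorem~\ref{thm:meta-thm-clsf-1} to the fully-connected data tensor $\tMfc$, showing that for this architecture (i) both convergence assumptions of the theorem are automatically satisfied from $\mc L(\bThetafc(t_0)) < 1$ alone, and (ii) the many singular-vector tuples of $\tMfc(-\vu^\infty)$ all yield the same limit direction for $\vbetafc$, which the KKT conditions then identify with the $\ell_2$ max-margin classifier.

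First I would upgrade the hypotheses. The network $\ffc(\cdot;\bThetafc)$ is $L$-homogeneous in $\bThetafc$, so combining $\mc L(\bThetafc(t_0)) < 1$ (i.e., $100\%$ training accuracy under exponential loss) with the homogeneous-model analysis of \citet{lyu2020gradient,ji2020directional} yields $\ltwo{\bThetafc(t)} \to \infty$, $\mc L(\bThetafc(t)) \to 0$, and directional convergence of every parameter block $\mW_l(t),\vw_L(t)$. Multilinearity of $\vbetafc = \mW_1 \cdots \mW_{L-1}\vw_L$ in these blocks then promotes this to directional convergence of $\vbetafc(t)$ to some unit vector $\hat\vbeta^\infty$; nondegeneracy of the product follows because $\mc L \to 0$ forces $y_i \vx_i^T \vbetafc(t) \to \infty$ for all $i$, so $\ltwo{\vbetafc(t)} \to \infty$ and $y_i \vx_i^T \hat\vbeta^\infty > 0$. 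Consequently $[\vr(t)]_i = -y_i \exp(-y_i \vx_i^T \vbetafc(t))$ decays exponentially at rates set by the per-sample margins, so $\vr(t)/\ltwo{\vr(t)}$ and thus $\mX^T \vr(t)/\ltwo{\mX^T \vr(t)}$ converges in direction to some $\vu^\infty$ supported on the indices achieving $\min_j y_j \vx_j^T \hat\vbeta^\infty$. This places us within the scope of Theorem~\ref{thm:meta-thm-clsf-1}.

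Second I would pin down the direction of $\vbetafc^\infty$ using the singular-vector equations~\eqref{eq:tensorsingvec}. By construction $\tMfc(\vx) \circ (\vect(\mW_1),\ldots,\vect(\mW_{L-1}),\vw_L) = \vx^T \mW_1 \cdots \mW_{L-1}\vw_L$, so using $\vx^T \mW_1 \vz = \langle \vx\vz^T, \mW_1\rangle_F$ with $\vz = \mW_2 \cdots \vw_L$, the first singular-vector equation reads $s\, \mW_1^\infty = -\vu^\infty (\mW_2^\infty \cdots \vw_L^\infty)^T$. Right-multiplying by $\mW_2^\infty\cdots\vw_L^\infty$ gives $\vbetafc^\infty = -\tfrac{1}{s}\ltwo{\mW_2^\infty\cdots\vw_L^\infty}^2\, \vu^\infty$, so \emph{every} singular-vector tuple with $s > 0$ forces $\hat\vbeta^\infty \propto -\vu^\infty$, independent of which tuple Theorem~\ref{thm:meta-thm-clsf-1} selects.

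It remains to identify $-\vu^\infty$ with the $\ell_2$ max-margin direction. From the first step, $-\mX^T \vr(t) = \sum_i y_i \exp(-y_i \vx_i^T \vbetafc(t))\, \vx_i$ is a strictly positive combination of $\{y_i \vx_i\}$ whose normalized limit $-\vu^\infty$ concentrates on the indices attaining $\min_j y_j \vx_j^T \hat\vbeta^\infty$. Thus $\hat\vbeta^\infty \propto \sum_i \lambda_i y_i\vx_i$ with $\lambda_i \geq 0$ supported on this active set, which together with $y_i \vx_i^T \hat\vbeta^\infty \geq \min_j y_j \vx_j^T \hat\vbeta^\infty$ is the KKT system of the convex $\ell_2$ max-margin program; its unique solution, after rescaling so that the minimum margin equals $1$, is precisely the claim. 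The main obstacle I anticipate is the first step---carefully deducing directional convergence and non-degeneracy of the multilinear product $\vbetafc = \mW_1\cdots\vw_L$ from per-block directional convergence and then verifying that the exponential-loss rates indeed concentrate $\mX^T \vr$ onto the support vectors---whereas the singular-vector analysis is a one-line computation and the KKT identification is standard convex duality.
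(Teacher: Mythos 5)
Your proposal follows the paper's skeleton (invoke the Ji--Telgarsky directional-convergence machinery, exploit the block structure of $\tMfc$, read off $\hat\vbeta^\infty\propto\vu^\infty$, close with KKT), but there is a genuine gap in how you establish directional convergence of $\mX^T\vr$, and the paper's route around that point is quite different.

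You try to deduce it from directional convergence of $\vbetafc$ by arguing that the residuals decay exponentially at rates set by the per-sample margins. This only shows that the mass of $\vr$ concentrates on the support-vector indices; it does \emph{not} control the relative weights among those indices. For two support vectors $i,i'$ with equal margin, the ratio $[\vr]_i/[\vr]_{i'}=\pm\exp\bigl((y_{i'}\vx_{i'}-y_i\vx_i)^T\vbetafc(t)\bigr)$ depends on the $o(\ltwos{\vbetafc(t)})$ part of $\vbetafc(t)$ in the direction $y_i\vx_i-y_{i'}\vx_{i'}$, about which directional convergence of $\vbetafc$ says nothing; without a second-order ``residual'' analysis \`a la Soudry et al., this ratio can fail to converge, so $\mX^T\vr/\ltwos{\mX^T\vr}$ need not settle down. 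The paper sidesteps this entirely: because every column of $\nabla_{\mW_1}\mc L=\mX^T\vr\,(\mW_2\cdots\vw_L)^T$ is a scalar multiple of $\mX^T\vr$, alignment of $\mW_1$ with its own negative gradient forces all columns of the fixed unit-Frobenius-norm limit $\mW_1^\infty$ to be parallel to $\lim\mX^T\vr/\ltwos{\mX^T\vr}$, which pins down the direction of $\mX^T\vr$ without ever touching the per-sample exponential rates. You should replace your first step with this block-structure argument.

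Your treatment of the sign, on the other hand, is a genuine simplification: right-multiplying the $l=1$ singular-vector equation $s\mW_1^\infty=-\vu^\infty(\mW_2^\infty\cdots\vw_L^\infty)^T$ by $\mW_2^\infty\cdots\vw_L^\infty$ gives $\hat\vbeta^\infty\propto-\tfrac{1}{s}\ltwos{\mW_2^\infty\cdots\vw_L^\infty}^2\,\vu^\infty$ with $s>0$ by alignment, which fixes the sign directly; the paper obtains only $\vbetafc^\infty=c\,\vu^\infty$ with $c$ of unknown sign and then needs a separate polar-cone argument to rule out $c>0$. With the first step repaired, the rest of your proof goes through and is marginally shorter than the paper's.
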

Corollary~\ref{cor:clsf-fc} shows that whenever the network separates the data correctly, the direction of linear coefficients $\vbetafc(\bThetafc)$ of linear fully-connected networks converges to the $\ell_2$ max-margin classifier.
Note that this corollary does not require the directional convergence of $\mX^T\vr$, which is different from Theorem~\ref{thm:meta-thm-clsf-1}. In fact, this corollary also appears in~\citet{ji2020directional}, but we provide an alternative proof based on our tensor formulation.
The proof of Corollary~\ref{cor:clsf-fc} can be found in Appendix~\ref{sec:proof-meta-thm-clsf-1}.

\subsection{Corollaries of Theorem~\ref{thm:meta-thm-clsf-2}}
\label{sec:cor-meta-thm-clsf-2}
Theorem~\ref{thm:meta-thm-clsf-2} leads to corollaries on linear diagonal and full-length convolutional networks, showing that diagonal (or convolutional) networks converge to the stationary point of the max-margin problem with respect to the $\ell_{2/L}$ norm (or DFT-domain $\ell_{2/L}$ norm).
We state the corollary on linear diagonal networks below:
\begin{corollary}
\label{cor:clsf-diag}
Consider an $L$-layer linear diagonal network~\eqref{eq:diagnet-linear}.
If there exists $\lambda > 0$ such that the initial directions $\bar \vw_1, \dots, \bar \vw_L$ of the network parameters satisfy $[\bar \vw_l]_j^2 - [\bar \vw_L]_j^2 \geq \lambda$ for all $l \in [L-1]$ and $j \in [d]$, then the training loss $\mc L(\bThetadiag(t)) \to 0$. If we additionally assume that $\mX^T \vr(t)$ converges in direction, then $\vbetadiag(\bThetadiag(t))$ converges in a direction that aligns with a stationary point $\vz^\infty$ of 
\begin{equation*}
    \minimize\nolimits_{\vz \in \R^d}\quad \norms{\vz}_{2/L}
    \quad
    \subjectto \quad y_i \vx_i^T \vz \geq 1,~~\forall i\in[n].
\end{equation*}
\end{corollary}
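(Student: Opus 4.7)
The plan is to deduce Corollary~\ref{cor:clsf-diag} as a direct specialization of Theorem~\ref{thm:meta-thm-clsf-2}, once we verify that linear diagonal networks satisfy Assumption~\ref{assm:complex-diag-decomp} with a particularly trivial orthogonal decomposition. The main work is bookkeeping: checking that the hypotheses of the general theorem translate exactly to the stated hypotheses on $\bar \vw_1, \dots, \bar \vw_L$, and simplifying the resulting max-margin problem.

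First I would recall from Section~\ref{sec:tensorform} that the diagonal network corresponds to the tensor representation $\vv_l = \vw_l$ and $\tM = \tMdiag$, where $\tMdiag(\vx)$ has $[\vx]_j$ on the ``super-diagonal'' entries and zero elsewhere. The key observation is that this tensor admits the manifestly orthogonal decomposition
\begin{equation*}
\tMdiag(\vx) = \sum_{j=1}^{d} [\vx]_j \bigl(\ve^d_j \otimes \ve^d_j \otimes \cdots \otimes \ve^d_j\bigr),
\end{equation*}
which has exactly the form required in~\eqref{eq:tensor-orth-decomp} with $m = d$, $\mS = \mI_d$, and $\mU_1 = \mU_2 = \cdots = \mU_L = \mI_d$. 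All of these matrices are real, have orthonormal columns, and $\mS$ is (trivially) invertible with full column rank. Thus Assumption~\ref{assm:complex-diag-decomp} holds.

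Next I would translate the initialization hypothesis. Under the identifications above, $\mU_l^T \bar \vv_l = \bar \vw_l$, so the condition of Theorem~\ref{thm:meta-thm-clsf-2} that $|[\mU_l^T \bar \vv_l]_j|^2 - |[\mU_L^T \bar \vv_L]_j|^2 \geq \lambda$ for all $l \in [L-1]$ and $j \in [m]$ reduces verbatim to $[\bar \vw_l]_j^2 - [\bar \vw_L]_j^2 \geq \lambda$ for all $l \in [L-1]$ and $j \in [d]$. Invoking Theorem~\ref{thm:meta-thm-clsf-2} then immediately gives $\mc L(\bThetadiag(t)) \to 0$.

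For the bias part, with $\mS$ invertible the second formulation in Theorem~\ref{thm:meta-thm-clsf-2} applies: assuming directional convergence of $\mX^T \vr(t)$, the linear coefficients $\vbetadiag(\bThetadiag(t))$ converge in direction to a stationary point of
\begin{equation*}
    \minimize_{\vz \in \R^d}\quad \norms{\mS^{-T} \vz}_{2/L}
    \quad
    \subjectto \quad y_i \vx_i^T \vz \geq 1,~~\forall i\in[n],
\end{equation*}
which with $\mS = \mI_d$ collapses to the $\ell_{2/L}$ max-margin problem in the statement of the corollary. Since no nontrivial obstacle arises---the decomposition is the identity---there is essentially nothing beyond verifying the identification; the only subtlety worth flagging explicitly in the writeup is that Assumption~\ref{assm:complex-diag-decomp} is formulated over $\C$ but all matrices involved here are real, so no complex conjugation issues enter the stationarity condition.
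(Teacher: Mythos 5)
Your proposal is correct and matches the paper's own proof essentially verbatim: verify that $\tMdiag$ satisfies Assumption~\ref{assm:complex-diag-decomp} with $\mS = \mU_1 = \dots = \mU_L = \mI_d$, translate the initialization condition, and then invoke Theorem~\ref{thm:meta-thm-clsf-2} with $\mS$ invertible. The extra remark about the real-versus-complex formulation is a reasonable clarification but adds nothing beyond what the paper's argument implicitly assumes.
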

For the corollary on full-length convolutional networks, we define $\mF \in \C^{d \times d}$ to be the matrix of discrete Fourier transform basis $[\mF]_{j,k} = \frac{1}{\sqrt{d}} \exp(-\frac{\sqrt{-1} \cdot 2 \pi (j-1)(k-1)}{d})$. Note that $\mF^* = \mF^{-1}$, and both $\mF$ and $\mF^*$ are symmetric, but not Hermitian.
\begin{corollary}
\label{cor:clsf-conv-full}
Consider an $L$-layer linear full-length convolutional network~\eqref{eq:convnet-linear}. 
If there exists $\lambda > 0$ such that the initial directions $\bar \vw_1, \dots, \bar \vw_L$ of the network parameters satisfy $|[\mF \bar \vw_l]_j|^2 - |[\mF \bar \vw_L]_j|^2 \geq \lambda$ for all $l \in [L-1]$ and $j \in [d]$, then the training loss $\mc L(\bThetaconv(t)) \to 0$. If we additionally assume that $\mX^T \vr(t)$ converges in direction, then $\vbetaconv(\bThetaconv(t))$ converges in a direction that aligns with a stationary point $\vz^\infty$ of 
\begin{equation*}
    \minimize\nolimits_{\vz \in \R^d}\quad \norms{\mF \vz}_{2/L}
    \quad
    \subjectto \quad y_i \vx_i^T \vz \geq 1,~~\forall i\in[n].
\end{equation*}
\end{corollary}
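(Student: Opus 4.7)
The plan is to obtain Corollary~\ref{cor:clsf-conv-full} as a direct specialization of Theorem~\ref{thm:meta-thm-clsf-2}, by showing that the data tensor $\tMconv(\vx)$ of a full-length circular convolutional network admits an orthogonal decomposition in the sense of Assumption~\ref{assm:complex-diag-decomp}, with the shared singular vectors given by the columns of the DFT matrix. Once this decomposition is in place, the corollary reduces to bookkeeping.

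First I would establish the Fourier decomposition. Let $\omega = e^{-2\pi \sqrt{-1}/d}$, so that $[\mF]_{j,k} = \frac{1}{\sqrt{d}} \omega^{(j-1)(k-1)}$. The goal is to show
\begin{equation*}
    \tMconv(\vx) = \sum\nolimits_{k=1}^d d^{(L-1)/2} [\mF \vx]_k \bigl([\mF^*]_{\cdot,k}\bigr)^{\otimes L}.
\end{equation*}
This is verified by computing the $(j_1,\dots,j_L)$-entry of the right-hand side as $d^{-1/2}\sum_k \omega^{-(k-1)(\sum_l j_l - L)} [\mF \vx]_k$ and using the inversion identity $\vx = \mF^* \mF \vx$ together with $\omega^d = 1$ to rewrite $[\vx]_{(\sum_l j_l - L + 1) \bmod d}$ in the same form. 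This yields Assumption~\ref{assm:complex-diag-decomp} with $m=d$, $\mU_1 = \cdots = \mU_L = \mF^*$, and $\mS = d^{(L-1)/2} \mF$. The orthogonality $\mU_l^H \mU_l = \mI_d$ follows from $\mF \mF^* = \mI_d$, and $\mS$ has full column rank because $\mF$ is invertible.

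Next I would match the initialization condition. Setting $\vv_l = \vw_l$ (as in the linear-tensor-network representation of convolutional networks given in Section~\ref{sec:tensorform}), the condition required by Theorem~\ref{thm:meta-thm-clsf-2} reads $|[\mU_l^T \bar \vv_l]_j|^2 - |[\mU_L^T \bar \vv_L]_j|^2 \geq \lambda$, i.e., $|[\mF^* \bar \vw_l]_j|^2 - |[\mF^* \bar \vw_L]_j|^2 \geq \lambda$. Since $\bar \vw_l$ is real, $\mF^* \bar \vw_l$ is the entrywise complex conjugate of $\mF \bar \vw_l$, so $|[\mF^* \bar \vw_l]_j| = |[\mF \bar \vw_l]_j|$ and the corollary's hypothesis coincides with the theorem's. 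Hence Theorem~\ref{thm:meta-thm-clsf-2} applies: the training loss converges to zero and, under the directional convergence of $\mX^T \vr(t)$, the linear coefficients align with a stationary point of $\min \|\mS^{-T}\vz\|_{2/L}$ subject to $y_i \vx_i^T \vz \geq 1$. Finally, since $\mS^{-T} = d^{-(L-1)/2}\mF^*$ and $|\mF^* \vz| = |\mF \vz|$ entrywise for real $\vz$, the positive scalar $d^{-(L-1)/2}$ and the conjugation leave the stationary-point condition invariant, reducing the objective to $\|\mF \vz\|_{2/L}$ and giving exactly the statement of Corollary~\ref{cor:clsf-conv-full}.

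The main obstacle is Step 1: carefully executing the multi-index Fourier calculation for $\tMconv(\vx)$ and being meticulous about the modular indexing convention $a \bmod d \in [d]$ together with the $d^{(L-1)/2}$ normalization factor. Everything else is essentially a translation between the convolutional parametrization and the tensor formulation, relying on standard facts about the DFT matrix (symmetry $\mF^T = \mF$, unitarity $\mF \mF^* = \mI_d$). Once the Fourier decomposition is established, no additional analytic work beyond invoking Theorem~\ref{thm:meta-thm-clsf-2} is needed.
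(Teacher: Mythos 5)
Your proposal is correct and follows essentially the same route as the paper: verify Assumption~\ref{assm:complex-diag-decomp} for $\tMconv$ with $\mU_1=\cdots=\mU_L=\mF^*$ and $\mS=d^{(L-1)/2}\mF$, then invoke Theorem~\ref{thm:meta-thm-clsf-2} and use $|[\mF\vz]_j|=|[\mF^*\vz]_j|$ for real $\vz$ to rewrite both the initialization hypothesis and the objective. The only cosmetic difference is that the paper verifies the entrywise identity via the geometric-sum orthogonality of $d$-th roots of unity, whereas you appeal to the DFT inversion $\vx=\mF^*\mF\vx$; these are the same computation.
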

Corollary~\ref{cor:clsf-diag} shows that in the limit, linear diagonal network finds a sparse solution $\vz$ that is a stationary point of the $\ell_{2/L}$ max-margin classification problem. Corollary~\ref{cor:clsf-conv-full} has a similar conclusion except that the standard $\ell_{2/L}$ norm is replaced with DFT-domain $\ell_{2/L}$ norm. These corollaries remove two of the convergence assumptions required in \citet{gunasekar2018implicit}.
The proofs of Corollaries~\ref{cor:clsf-diag} and \ref{cor:clsf-conv-full} are in Appendix~\ref{sec:proof-meta-thm-clsf-2}.

\subsection{Corollary of Theorem~\ref{thm:meta-thm-clsf-3}}
\label{sec:cor-meta-thm-clsf-3}

Recall that Theorem~\ref{thm:meta-thm-clsf-3} can be applied to any 2-layer networks that can be represented as linear tensor networks. Examples include the convolutional networks that are not full-length (i.e., filter size $k_1 < d$), which are not covered by the previous result \citep{gunasekar2018implicit}.
Here, we present the characterization of convergence directions of $\vbetaconv(\bThetaconv(t))$ for 2-layer linear convolutional networks, with filter size $k_1 = 1$ and $k_1 = 2$.
\begin{corollary}
\label{cor:clsf-conv-small}
Consider a 2-layer linear convolutional network~\eqref{eq:convnet-linear} with $k_1 = 1$ and a single data point $(\vx, y)$. If there exists $\lambda > 0$ such that the initial directions $\bar \vw_1$ and $\bar \vw_2$ of the network parameters satisfy $\norm{\vx}^2\bar \vv_1^2 - (\vx^T \bar \vv_2)^2 \geq \norm{\vx}^2 \lambda$, then the training loss $\mc L(\bThetaconv(t)) \to 0$. Also, $\vbetaconv(\bThetaconv(t))$ converges in direction that aligns with $y \vx$.

Consider a 2-layer linear convolutional network~\eqref{eq:convnet-linear} with $k_1 = 2$ and a single data point $(\vx, y)$. Let $\overleftarrow \vx \defeq \begin{bmatrix} [\vx]_2 & \cdots & [\vx]_d & [\vx]_1 \end{bmatrix}$, and $\overrightarrow \vx \defeq \begin{bmatrix} [\vx]_d & [\vx]_1 & \cdots & [\vx]_{d-1} \end{bmatrix}$. If there exists $\lambda > 0$ such that the initial directions $\bar \vw_1$ and $\bar \vw_2$ of the network parameters satisfy 
\begin{align*}
	([\bar \vv_1]_1 + [\bar \vv_1]_2 )^2 - \frac{((\vx+\overleftarrow \vx)^T \bar \vv_2)^2}{\ltwos{\vx}^2 + \vx^T \overleftarrow \vx} \geq \lambda,~~\text{ and }~~
	([\bar \vv_1]_1 - [\bar \vv_1]_2 )^2 - \frac{((\vx-\overleftarrow \vx)^T \bar \vv_2)^2}{\ltwos{\vx}^2 - \vx^T \overleftarrow \vx} \geq \lambda,
\end{align*}
then the training loss $\mc L(\bThetaconv(t)) \to 0$. Also, $\vbetaconv(\bThetaconv(t))$ converges in a direction that aligns with a ``filtered'' version of $\vx$:
\begin{equation*}
	\lim_{t \to \infty} \frac{\vbetaconv(\bThetaconv(t))}{\ltwos{\vbetaconv(\bThetaconv(t))}} \propto
	\begin{cases}
		2y\vx + y\overleftarrow \vx + y\overrightarrow \vx & \text{ if } \vx^T \overleftarrow \vx > 0,\\
		2y\vx - y\overleftarrow \vx - y\overrightarrow \vx & \text{ if } \vx^T \overleftarrow \vx < 0.
	\end{cases}
\end{equation*}
\end{corollary}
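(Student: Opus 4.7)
The plan is to obtain both parts of Corollary~\ref{cor:clsf-conv-small} as direct specializations of Theorem~\ref{thm:meta-thm-clsf-3} to the 2-layer convolutional network's data tensor. Since Theorem~\ref{thm:meta-thm-clsf-3} already delivers both the convergence of loss and the identification of the limit directions of $\vv_1, \vv_2$ in singular-vector coordinates, I only need to (i)~compute the SVD of $\tMconv(\vx)$ explicitly for $k_1 \in \{1,2\}$, (ii)~translate its initialization condition $[\mU_1^T\bar\vv_1]_j^2 - [\mU_2^T\bar\vv_2]_j^2 \geq \lambda$ into the stated condition on $\bar \vw_1, \bar \vw_2$, and (iii)~unpack the resulting limit directions of $\vv_1, \vv_2$ back into the linear coefficient vector $\vbetaconv$.

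\textbf{SVD computation and initialization condition.} From $[\tMconv(\vx)]_{j_1,j_2} = [\vx]_{(j_1+j_2-1)\bmod d}$, the $k_1=1$ case reduces to $\tMconv(\vx) = \vx^T$, with $m=1$, $\mU_1 = [1]$, $\mU_2 = \vx/\ltwos{\vx}$, and $[\vs]_1 = \ltwos{\vx}$. The $k_1=2$ case gives the $2\times d$ matrix with rows $\vx^T$ and $\overleftarrow\vx^T$. The Gram matrix $\tMconv(\vx)\tMconv(\vx)^T$ is then $2\times 2$ with diagonal entries $\ltwos{\vx}^2$ (since $\ltwos{\overleftarrow\vx}=\ltwos{\vx}$) and off-diagonals $\vx^T\overleftarrow\vx$, so its eigenvectors are $\tfrac{1}{\sqrt{2}}(1,\pm 1)^T$ and the singular values of $\tMconv(\vx)$ are $\sqrt{\ltwos{\vx}^2 \pm \vx^T\overleftarrow\vx}$ with right singular vectors proportional to $\vx \pm \overleftarrow\vx$. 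Substituting these $\mU_1, \mU_2$ together with $\bar \vv_l = \bar \vw_l$ into the initialization condition of Theorem~\ref{thm:meta-thm-clsf-3} reproduces exactly the conditions stated in the corollary, hence $\mc L(\bThetaconv(t)) \to 0$.

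\textbf{Limit direction and reconstruction of $\vbetaconv$.} With a single data point, the $\ell_1$ problem in Theorem~\ref{thm:meta-thm-clsf-3} is $\min \ltwos{\vrho}_1$ subject to $y\vs^T\vrho \geq 1$, whose minimizer places all mass on an index of maximum $[\vs]_j$ with sign $\sign(y)$; for $k_1=2$ this means the first coordinate if $\vx^T\overleftarrow\vx>0$ and the second if $\vx^T\overleftarrow\vx<0$. To recover $\vbetaconv$, I unroll the circular convolution and use the convention $0 \bmod d = d$ to verify $\overleftarrow\vx^T \vw_2 = \vx^T \overrightarrow{\vw_2}$ and $\overrightarrow{\overleftarrow\vx} = \vx$, which give $\vbetaconv = [\vw_1]_1\vw_2 + [\vw_1]_2 \overrightarrow{\vw_2}$ for $k_1=2$ and $\vbetaconv = \vw_1 \vw_2$ (scalar times vector) for $k_1=1$. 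Substituting $\vw_l \to \mU_l\veta_l^\infty$ and applying $\sign(\veta_1^\infty) = \sign(y)\odot\sign(\veta_2^\infty)$ (so that the bilinear product has sign $y$), the $k_1=1$ case collapses to $\vbetaconv \propto y\vx$, and in the $k_1=2$ case both $\vw_2$ and $\overrightarrow{\vw_2}$ are proportional to $\vx+\overleftarrow\vx$ and $\vx+\overrightarrow\vx$ respectively when $\vx^T\overleftarrow\vx>0$, summing to $\vbetaconv \propto 2y\vx + y\overleftarrow\vx + y\overrightarrow\vx$; the sign-flipped case gives $\vbetaconv \propto 2y\vx - y\overleftarrow\vx - y\overrightarrow\vx$, matching the statement.

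\textbf{Main obstacle.} The heavy lifting is already inside Theorem~\ref{thm:meta-thm-clsf-3}; the substantive work here is the index bookkeeping for cyclic shifts, most importantly the identity $\overleftarrow\vx^T \vw_2 = \vx^T \overrightarrow{\vw_2}$ that converts the filter action on $\vx$ into a shift on $\vw_2$ and aligns the final expression with shifts of $\vx$. A minor degeneracy to keep in mind is $\vx^T\overleftarrow\vx = 0$, where the two singular values coincide and the $\ell_1$ minimizer is non-unique; the corollary's case split $\vx^T\overleftarrow\vx>0$ vs.\ $\vx^T\overleftarrow\vx<0$ avoids this case.
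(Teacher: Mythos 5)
Your proposal is correct and follows essentially the same route as the paper: specialize Theorem~\ref{thm:meta-thm-clsf-3} to the convolutional data tensor, compute its SVD for $k_1\in\{1,2\}$, translate the initialization condition through $\mU_1,\mU_2$, and unpack the limit singular vectors into $\vbetaconv$. The only cosmetic difference is that the paper writes the linear-coefficient map as an explicit circulant-like matrix acting on $\vv_2$, whereas you express it via the shift identities $\vbetaconv = [\vw_1]_1\vw_2 + [\vw_1]_2\overrightarrow{\vw_2}$ and $\overrightarrow{\overleftarrow\vx}=\vx$; these are the same computation in different notation.
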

Corollary~\ref{cor:clsf-conv-small} shows that if the filter size is $k_1 = 1$, then the limit direction of $\vbetaconv(\bThetaconv)$ is the $\ell_2$ max-margin classifier. Note that this is quite different from the case $k_1 = d$ which converges to the DFT-domain $\ell_1$ max-margin classifier.
However, for $1 < k_1 < d$, it is difficult to characterize the limit direction as the max-margin classifier of some commonly-used norms. Rather, the limit directions of $\vbetaconv(\bThetaconv)$ correspond to a ``filtered'' version of the data point, and the weights of the filter depend on the data point $\vx$. For $k_1 = 2$, the filter is a low-pass filter if the autocorrelation $\vx^T \overleftarrow \vx$ of $\vx$ is positive, and high-pass if the autocorrelation is negative. For $k_1 > 2$, the filter weights are more complicated to characterize in terms of $\vx$, and the ``filter length'' increases as $k_1$ increases. We prove Corollary~\ref{cor:clsf-conv-small} in Appendix~\ref{sec:proof-meta-thm-clsf-3}. For a more detailed investigation on the 2-layer convolutional network settings, see \citet{jagadeesan2021inductive}.

\subsection{Corollaries of Theorem~\ref{thm:meta-thm-reg-1}}
\label{sec:cor-meta-thm-reg-1}

To illustrate the versatility of Theorem~\ref{thm:meta-thm-reg-1}, we state its corollaries for three different settings: linear diagonal networks, linear full-length convolutional networks with even data, and deep matrix sensing with commutative sensor matrices. The proofs of the corollaries can be found in Appendix~\ref{sec:proof-meta-thm-reg-1}.
\begin{corollary}
\label{cor:reg-diag}
Consider an $L$-layer linear diagonal network~\eqref{eq:diagnet-linear}.
For some $\lambda > 0$, choose any vector $\bar \vw \in \reals^d$ satisfying $[\bar \vw]_j^2 \geq \lambda$ for all $j \in [d]$, and choose initial directions $\bar \vw_l = \bar \vw$ for $l \in [L-1]$ and $\bar \vw_L = \zeros$.
Then, the linear coefficients $\vbetadiag(\bThetadiag(t))$ converge to a global minimum $\vz^\infty$, which is the solution of
\begin{equation*}
    \minimize\nolimits_{\vz \in \R^d}\quad Q_{L, \alpha, \bar \vw}(\vz)
    \defeq
    \alpha^2  \sum\nolimits_{j=1}^d [\bar \vw]_j^2 H_L \left ( 
    \tfrac{[\vz]_j}{\alpha^L |[\bar \vw]_j|^L}
    \right )
    \quad
    \subjectto \quad \mX \vz = \vy.
\end{equation*}
\end{corollary}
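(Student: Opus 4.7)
The plan is to derive Corollary~\ref{cor:reg-diag} as a direct specialization of Theorem~\ref{thm:meta-thm-reg-1}. The only thing that needs to be done is to verify that linear diagonal networks fit into the orthogonally decomposable framework of Assumption~\ref{assm:complex-diag-decomp}, with all the relevant matrices being real, and then to translate the initialization hypotheses of the corollary into those of the theorem.

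First, I would exhibit the orthogonal decomposition. Recall from Section~\ref{sec:tensorform} that the diagonal network corresponds to the data tensor $\tMdiag(\vx) \in \reals^{d \times \cdots \times d}$ with $[\tMdiag(\vx)]_{j,j,\dots,j} = [\vx]_j$ and all other entries zero. Taking $m = d$, $\mS = \mI_d$, and $\mU_1 = \cdots = \mU_L = \mI_d$, one has
\begin{equation*}
\sum\nolimits_{j=1}^{d} [\mS\vx]_j \bigl([\mU_1]_{\cdot,j} \otimes \cdots \otimes [\mU_L]_{\cdot,j}\bigr)
= \sum\nolimits_{j=1}^{d} [\vx]_j \bigl(\ve^d_j \otimes \cdots \otimes \ve^d_j\bigr),
\end{equation*}
which exactly reproduces $\tMdiag(\vx)$. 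Moreover $\mU_l^H \mU_l = \mI_d$, $\mS$ has full column rank and is invertible, and all matrices are real. Thus Assumption~\ref{assm:complex-diag-decomp} is satisfied in its real form, and the hypotheses on $(\mU_l, \mS)$ required by Theorem~\ref{thm:meta-thm-reg-1} hold.

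Next, I would match the initializations. Since $\mU_l = \mI_d$, setting $\bar\veta = \bar\vw$ gives $\bar\vv_l = \mU_l \bar\veta = \bar\vw$ for $l \in [L-1]$, and $\bar\vv_L = \zeros$ is the same in both statements. The condition $[\bar\vw]_j^2 \geq \lambda$ for all $j \in [d]$ in the corollary is exactly the condition $[\bar\veta]_j^2 \geq \lambda$ in the theorem. Hence Theorem~\ref{thm:meta-thm-reg-1} applies, and because $\mS = \mI_d$ is invertible with $\mS^{-T} = \mI_d$, the invertible-$\mS$ conclusion of the theorem yields convergence of $\vbetadiag(\bThetadiag(t))$ to the minimizer of $Q_{L,\alpha,\bar\vw}(\mS^{-T}\vz) = Q_{L,\alpha,\bar\vw}(\vz)$ subject to $\mX\vz = \vy$, which is precisely the statement of the corollary.

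There is essentially no genuine obstacle here: the proof is a bookkeeping exercise that reduces to correctly identifying the data tensor of the diagonal network as the trivial orthogonal decomposition with $\mS = \mU_l = \mI_d$. The only point that requires a moment of care is confirming that the support pattern of $\tMdiag(\vx)$ (nonzero only on the ``super-diagonal'' $j_1 = \cdots = j_L$) is reproduced by the rank-$d$ decomposition above, which follows immediately because $\ve^d_j \otimes \cdots \otimes \ve^d_j$ has its single nonzero entry at position $(j,j,\dots,j)$.
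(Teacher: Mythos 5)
Your proposal is correct and matches the paper's proof exactly: both identify that $\tMdiag(\vx)$ satisfies Assumption~\ref{assm:complex-diag-decomp} with $\mS = \mU_1 = \cdots = \mU_L = \mI_d$ (all real), match the initializations via $\bar\veta = \bar\vw$, and then invoke the invertible-$\mS$ conclusion of Theorem~\ref{thm:meta-thm-reg-1}. The only cosmetic difference is that you spell out the verification of the decomposition, whereas the paper calls it ``straightforward to verify'' and refers to the earlier Corollary~\ref{cor:clsf-diag}.
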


Recall that the statement of Assumption~\ref{assm:complex-diag-decomp} allows the matrices $\mS, \mU_1, \dots, \mU_L$ to be complex, but Theorem~\ref{thm:meta-thm-reg-1} poses another assumption that these matrices are real.
In applying Theorem~\ref{thm:meta-thm-clsf-2} to convolutional networks to get Corollary~\ref{cor:clsf-conv-full}, we used the fact that the data tensor $\tMconv(\vx)$ of a linear full-length convolutional network satisfies Assumption~\ref{assm:complex-diag-decomp} with $\mS = d^{\frac{L-1}{2}} \mF$ and $\mU_1 = \dots = \mU_L = \mF^*$, where $\mF \in \C^{d \times d}$ is the matrix of discrete Fourier transform basis $[\mF]_{j,k} = \frac{1}{\sqrt{d}} \exp(-\frac{\sqrt{-1} \cdot 2 \pi (j-1)(k-1)}{d})$ and $\mF^*$ is the complex conjugate of $\mF$.
Note that these are complex matrices, so one cannot directly apply Theorem~\ref{thm:meta-thm-reg-1} to convolutional networks. However, it turns out that if the data and initialization are even, we can derive a corollary for convolutional networks.

We say that a vector is \emph{even} when it satisfies the even symmetry, as in even functions. More concretely,
a vector $\vx \in \reals^d$ is even if $[\vx]_{j+2} = [\vx]_{d-j}$ for $j = 0, \dots, \lfloor \frac{d-3}{2} \rfloor$; i.e., the vector has the even symmetry around its ``origin'' $[\vx]_1$. From the definition of the matrix $\mF \in \C^{d \times d}$, it is straightforward to check that if $\vx$ is real and even, then its DFT $\mF \vx$ is also real and even (see Appendix~\ref{sec:proof-reg-conv-full} for details).

\begin{corollary}
\label{cor:reg-conv-full}
Consider an $L$-layer linear full-length convolutional network~\eqref{eq:convnet-linear}. Assume that the data points $\{\vx_i\}_{i=1}^n$ are all even. For some $\lambda > 0$, choose any even vector $\bar \vw$ satisfying $[\mF \bar \vw]_j^2 \geq \lambda$ for all $j \in [d]$, and choose initial directions $\bar \vw_l = \bar \vw$ for $l \in [L-1]$ and $\bar \vw_L = \zeros$. Then, the linear coefficients $\vbetaconv(\bThetaconv(t))$ converge to a global minimum $\vz^\infty$, which is the solution of
\begin{equation*}
    \minimize_{\vz \in \R^d,\textup{ even}}\quad Q_{L, \alpha, \mF \bar \vw}(\mF \vz)
    \defeq
    \alpha^2 \sum\nolimits_{j=1}^d [\mF \bar \vw]_j^2 H_L \left ( 
    \tfrac{[\mF \vz]_j}{\alpha^L |[\mF \bar \vw]_j|^L}
    \right )
    \quad
    \subjectto \quad \mX \vz = \vy.
\end{equation*}
\end{corollary}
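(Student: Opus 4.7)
The main obstacle in proving Corollary~\ref{cor:reg-conv-full} is that the orthogonal decomposition of the convolutional data tensor $\tMconv(\vx)$ uses the complex DFT matrix $\mF$, whereas Theorem~\ref{thm:meta-thm-reg-1} assumes real decomposition matrices. The plan is to exploit the evenness assumptions on the data and initialization to reduce to an equivalent problem in which the decomposition becomes real, and then to invoke Theorem~\ref{thm:meta-thm-reg-1} directly.

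The first step is to show that gradient flow preserves evenness: if every $\vx_i$ is even and every initial direction $\bar\vw_l$ is even (with $\bar\vw_L = \vzero$ trivially even), then $\vw_l(t)$ remains even for all $t \geq 0$ and all $l \in [L]$. This relies on the fact that the circular convolution of even vectors is even, so the linear coefficient $\vbetaconv(\bThetaconv)$ stays even, and the gradients $\nabla_{\vw_l}\mc L$ --- which are weighted sums of (possibly flipped) circular convolutions of even vectors and the residuals --- also stay even.

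The second step is to realize the convolutional data tensor on the even subspace $\gE \subset \R^d$ using a real orthogonal decomposition. For $\vx \in \gE$, the vector $\mF\vx$ is real and also lies in $\gE$, so $\mF|_\gE$ is a real orthogonal operator; concretely, one can use a DCT-type basis (for instance $\ve^d_1$, the vectors $(\ve^d_k + \ve^d_{d-k+2})/\sqrt{2}$ for $2 \leq k \leq \lceil d/2 \rceil$, and $\ve^d_{d/2+1}$ when $d$ is even) to express the restricted convolutional data tensor as an order-$L$ tensor that satisfies Assumption~\ref{assm:complex-diag-decomp} with real $\mS$ and $\mU_l$. Since gradient flow stays in $\gE^L$ by the first step, applying Theorem~\ref{thm:meta-thm-reg-1} to this reduced real problem and then translating the result back to $\R^d$ via $\mF|_\gE$ yields the claimed characterization $Q_{L,\alpha,\mF\bar\vw}(\mF\vz)$; the constraint remains $\mX\vz = \vy$ restricted to even $\vz$, since the even rows of $\mX$ only see the even part of $\vz$.

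The most delicate part of the argument is matching the scaling between the real decomposition used in the reduction and the complex DFT decomposition appearing in the statement. The norm-like function in the statement sums over all $d$ DFT indices, whereas the intrinsic dimension of $\gE$ is only about $d/2$; the apparent ``doubling'' is reconciled by the symmetry $[\mF\vx]_k = [\mF\vx]_{d-k+2}$ for $\vx \in \gE$, which makes paired DFT coordinates contribute identical terms, exactly matching the $\sqrt{2}$ normalization of the DCT-type basis vectors. Carefully bookkeeping these factors of $\sqrt{2}$, and separately handling the self-paired frequencies $k=1$ and, when $d$ is even, $k = d/2+1$, is the main technical effort required to convert the reduced-space conclusion of Theorem~\ref{thm:meta-thm-reg-1} into the full DFT-domain expression stated in the corollary.
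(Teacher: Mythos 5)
Your proposal reaches the correct conclusion and hinges on the same core observation as the paper --- evenness is an invariant of gradient flow when the data and initialization are even, so the problem can be treated as a real problem --- but the reduction you carry out is genuinely different in execution. You restrict to the even subspace $\gE$, build an explicit real orthonormal (DCT-type) basis there so that the restricted data tensor literally satisfies Assumption~\ref{assm:complex-diag-decomp} with real $\mS, \mU_l$, invoke Theorem~\ref{thm:meta-thm-reg-1} as a black box on the reduced $\approx d/2$-dimensional problem, and then translate back to $\R^d$, paying the price of reconciling the paired DFT frequencies and their $\sqrt{2}$ normalizations. The paper instead stays in $\R^d$ with the full (complex) DFT: after establishing evenness preservation it sets $\veta_l \defeq \mF\vw_l$ (real by evenness), notes $\mF^*\vw_k = \mF\vw_k$, and defines $\mS \defeq d^{(L-1)/2}\Re(\mF)$, so that the transformed ODE is exactly \eqref{eq:proof-reg-meta1-4}; it then re-enters the proof of Theorem~\ref{thm:meta-thm-reg-1} \emph{at that equation} rather than applying the theorem's statement. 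This sidesteps your doubling bookkeeping entirely (the sum over all $d$ indices appears for free), at the cost of using a $\mS = d^{(L-1)/2}\Re(\mF)$ that is rank-deficient and $\mU_l = \mF^*$ that remain complex --- so Assumption~\ref{assm:complex-diag-decomp} with real matrices is never literally verified, and one must check instead that the subsequent steps of the theorem's proof still go through, which the paper asserts but does not spell out. Your route is therefore more self-contained as a black-box application of Theorem~\ref{thm:meta-thm-reg-1}, while the paper's is shorter but relies on re-reading the theorem's proof rather than its statement; both are reasonable, and you have correctly identified the $\sqrt{2}$/pairing bookkeeping (including the self-paired frequencies $k=1$ and $k=d/2+1$) as the place where the extra work lives in your version.
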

Corollaries~\ref{cor:reg-diag} and \ref{cor:reg-conv-full} show that the interpolation between minimum weighted $\ell_1$ and weighted $\ell_2$ solutions occurs for diagonal networks, and also for convolutional networks (in DFT domain, with the restriction of even symmetry).
The conclusion of Corollary~\ref{cor:reg-diag} is similar to the results in \citet{woodworth2020kernel}, but the network architecture~\eqref{eq:diagnet-linear} considered in our corollary is different from the ``differential'' network $f(\vx;\vw_+,\vw_-) = \vx^T(\vw_+^{\odot L}-\vw_-^{\odot L})$ in \citet{woodworth2020kernel}.

As mentioned in the main text, we can actually show that the matrix sensing result in \citet{arora2019implicit} is a special case of our Theorem~\ref{thm:meta-thm-reg-1}.
Given any symmetric matrix $\mM \in \reals^{d \times d}$, let $\eig(\mM) \in \reals^d$ be the $d$-dimensional vector containing the eigenvalues of $\mM$.
\begin{corollary}
\label{cor:reg-matrix-fac}
Consider the depth-$L$ deep matrix sensing problem~\eqref{eq:matrix-sensing}. Let $\mA_i$'s be symmetric, and assume that $\mA_1, \dots, \mA_n$ commute. For $\alpha > 0$, choose initialization $\mW_l(0) = \alpha \mI_d$ for $l \in [L-1]$ and $\mW_L(0) = \zeros$. Then, the product $\mW_1(t)\cdots\mW_L(t)$ converges to the solution $\mM^\infty$ of
\begin{equation*}
    \minimize_{\mM \in \R^{d \times d}, \textup{ symmetric}}\quad Q_{L, \alpha}(\eig(\mM))
    \defeq
    \alpha^2  \sum\nolimits_{j=1}^d H_L \left ( 
    \tfrac{[\eig(\mM)]_j}{\alpha^L}
    \right )
    \quad
    \subjectto
    \quad
    \mc L_{\rm ms}(\mM) = 0.
\end{equation*}
\end{corollary}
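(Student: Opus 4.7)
The plan is to reduce the deep matrix sensing problem to the linear diagonal network setting of Corollary~\ref{cor:reg-diag} via simultaneous diagonalization. Since the $\mA_i$'s are symmetric and pairwise commuting, there exists an orthogonal matrix $\mQ \in \R^{d \times d}$ such that $\mA_i = \mQ \mLambda_i \mQ^T$ with each $\mLambda_i$ diagonal. Changing variables to $\tilde \mW_l \defeq \mQ^T \mW_l \mQ$, the initialization $\mW_l(0) = \alpha \mI$ for $l \in [L-1]$ and $\mW_L(0) = \zeros$ becomes $\tilde \mW_l(0) = \alpha \mI$ and $\tilde \mW_L(0) = \zeros$. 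Because $\Tr$ is invariant under conjugation by orthogonal matrices, the sensing loss equals $\sum_i (y_i - \langle \mLambda_i, \tilde \mW_1 \cdots \tilde \mW_L \rangle)^2$, whose gradient with respect to $\tilde \mW_l$ is a sum of products of diagonal matrices and hence diagonal.

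First I would argue that the diagonal structure of $\tilde \mW_l(t)$ is preserved under gradient flow. The ODE system restricted to diagonal matrices has a unique solution starting from diagonal initialization, and by the previous paragraph the velocity field always points along the diagonal subspace; invoking uniqueness gives $\tilde \mW_l(t) = \diag(\vw_l(t))$ for all $t$, where $\vw_l(t) \in \R^d$. In these coordinates, $\langle \mLambda_i, \tilde \mW_1 \cdots \tilde \mW_L \rangle = \eig(\mA_i)^T (\vw_1 \odot \cdots \odot \vw_L)$, so the induced dynamics on $(\vw_1,\dots,\vw_L)$ is exactly that of an $L$-layer linear diagonal network~\eqref{eq:diagnet-linear} trained with squared loss on the dataset $\{(\eig(\mA_i), y_i)\}_{i=1}^n$, with initialization $\vw_l(0) = \alpha \vone$ for $l \in [L-1]$ and $\vw_L(0) = \zeros$. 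The linear independence of $\mA_1,\dots,\mA_n$ as matrices is equivalent, after diagonalization, to linear independence of $\eig(\mA_1),\dots,\eig(\mA_n)$ as vectors in $\R^d$, so the resulting data matrix has full row rank as required in Section~\ref{sec:prob-setting}.

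I would then invoke Corollary~\ref{cor:reg-diag} with $\bar \vw = \vone$, noting that the initialization condition $[\bar \vw]_j^2 \geq \lambda$ holds with $\lambda = 1$. The corollary yields convergence of the linear coefficients $\vw_1(t) \odot \cdots \odot \vw_L(t)$ to the solution $\vz^\infty$ of
\begin{equation*}
    \minimize\nolimits_{\vz \in \R^d} \quad \alpha^2 \sum\nolimits_{j=1}^d H_L\!\left(\tfrac{[\vz]_j}{\alpha^L}\right) \quad \subjectto \quad \eig(\mA_i)^T \vz = y_i,~~\forall i \in [n].
\end{equation*}
Translating back via $\mW_1(t) \cdots \mW_L(t) = \mQ \, \diag(\vw_1(t) \odot \cdots \odot \vw_L(t)) \, \mQ^T$, the limit $\mM^\infty$ is symmetric with $\eig(\mM^\infty) = \vz^\infty$ (up to a permutation aligned with the diagonalization), and the constraint $\eig(\mA_i)^T \vz = y_i$ is exactly $\langle \mA_i, \mM^\infty \rangle = y_i$, i.e.\ $\mc L_{\rm ms}(\mM^\infty) = 0$. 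Substituting $\eig(\mM^\infty)$ for $\vz$ in the objective gives the claimed form.

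The main obstacle I anticipate is the diagonal-preservation argument: one must verify not just that the gradient is diagonal at initialization, but that the reduced ODE on $d$-dimensional diagonals is equivalent to the original matrix-valued ODE, which requires either a uniqueness argument or a direct symmetry/invariant-subspace check. Everything else is bookkeeping: matching the reduced problem to the hypotheses of Corollary~\ref{cor:reg-diag} and translating the resulting variational characterization back through $\mQ$.
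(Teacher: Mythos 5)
Your proof takes essentially the same route as the paper's: simultaneously diagonalize the commuting symmetric sensor matrices, show the conjugated layer matrices remain diagonal along the gradient flow, and reduce to the linear diagonal network setting of Corollary~\ref{cor:reg-diag} with $\bar\vw = \vone$. You are slightly more explicit than the paper on the diagonal-preservation step (invoking ODE uniqueness rather than merely asserting invariance), but the reduction is otherwise identical.

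One shared subtlety that both your argument and the paper's leave implicit: the reduction to the vector problem only directly yields optimality of $\mM^\infty$ among symmetric matrices simultaneously diagonalizable with the $\mA_i$'s, whereas the corollary asserts optimality over \emph{all} symmetric $\mM$. To close this, note that for any feasible symmetric $\mM = \mQ \mN \mQ^T$ (with $\mQ$ the common eigenbasis) the constraints $\langle \mA_i, \mM\rangle = y_i$ depend only on $\diag(\mN)$; that $\eig(\mM)$ majorizes $\diag(\mN)$ by Schur--Horn; and that $\vrho \mapsto \sum_j H_L([\vrho]_j/\alpha^L)$ is Schur-convex because $H_L$ is convex and even. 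Hence the minimum over all symmetric $\mM$ is attained at a matrix that is diagonal in the common eigenbasis, matching your candidate $\mQ\,\diag(\vz^\infty)\,\mQ^T$.
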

Under an additional assumption that $\mA_i$'s are positive semidefinite, Theorem~2 in \citet{arora2019implicit} studies the initialization $\mW_l(0) = \alpha \mI_d$ for all $l \in [L]$, and shows that the limit point of $\mW_1 \cdots \mW_L$ converges to the minimum nuclear norm solution as $\alpha \to 0$. We remove the assumption of positive definiteness of $\mA_i$'s and let $\mW_L(0) = \zeros$, to show a complete characterization of the solution found by gradient flow, which interpolates between the minimum nuclear norm (i.e., Schatten 1-norm) solution (when $\alpha \to 0$) and the minimum Frobenius norm (i.e., Schatten 2-norm) solution (when $\alpha \to \infty$).

\section{Tensor representation of fully-connected networks}
\label{sec:tensorform-ex}
In Section~\ref{sec:tensorform}, we only defined the data tensor $\tMfc(\vx)$ of fully-connected networks for $L=2$. Here, we describe an iterative procedure constructing the data tensor for deep fully-connected networks.

We start with $\tT_1(\vx) \defeq \vx \in \reals^{d_1}$. Next, define a block diagonal matrix $\tT_2(\vx) \in \reals^{d_1 d_2 \times d_2}$ where the ``diagonals'' $[\tT_2(\vx)]_{d_1(j-1)+1:d_1 j,j} \defeq \tT_1(\vx)$ for $j\in[d_2]$, while all the other entries are filled with 0. We continue this ``block diagonal'' procedure, as the following. Having defined $\tT_{l-1}(\vx) \in \reals^{d_1d_2 \times \cdots \times d_{l-2}d_{l-1} \times d_{l-1}}$,
\begin{enumerate}
    \item Define $\tT_{l}(\vx) \in \reals^{d_1d_2 \times \cdots \times d_{l-1}d_{l} \times d_{l}}$.
    \item Set $[\tT_l(\vx)]_{\cdot,\dots,\cdot,d_{l-1}(j-1)+1:d_{l-1}j,j} \defeq \tT_{l-1}(\vx), \forall j \in [d_l]$.
    \item Set all the remaining entries of $\tT_l(\vx)$ to zero.
\end{enumerate}
We iterate this process for $l = 2, \dots, L$, and set $\tMfc(\vx) \defeq \tT_{L}(\vx)$.
By defining the parameters of the tensor formulation $\vv_l = \vect(\mW_l)$ for $l \in [L-1]$ and $\vv_L = \vw_L$, and using the tensor $\tM(\vx) = \tMfc(\vx)$, we can show the equivalence of \eqref{eq:tensorrep} and \eqref{eq:fcnet}.


\section{Proofs of Theorem~\ref{thm:meta-thm-clsf-1} and Corollary~\ref{cor:clsf-fc}}
\label{sec:proof-meta-thm-clsf-1}
\subsection{Proof of Theorem~\ref{thm:meta-thm-clsf-1}}

The proof of Theorem~\ref{thm:meta-thm-clsf-1} is outlined as follows. First, using the directional convergence and alignment results in \citet{ji2020directional}, we prove that each of our network parameters $\vv_l$ converges in direction, and it aligns with its corresponding negative gradient $-\nabla_{\vv_l} \mc L$. Then, we prove that the directions of $\vv_l$'s are actually singular vectors of $\tM(-\vu^\infty)$, where $\vu^\infty \defeq \lim_{t\to\infty} \frac{\mX^T \vr(t)}{\ltwos{\mX^T\vr(t)}}$.

Since a linear tensor network is an $L$-homogeneous polynomial of $\vv_1, \dots, \vv_L$, it satisfies the assumptions required for Theorems~3.1 and 4.1 in \citet{ji2020directional}. 
These theorems imply that if the gradient flow satisfies $\mc L(\bTheta(t_0)) < 1$ for some $t_0 \geq 0$, then $\bTheta(t)$ converges in direction, and the direction aligns with $-\nabla_{\bTheta} \mc L(\bTheta(t))$; that is,
\begin{equation}
\label{eq:dirconv}
    \lim_{t\to\infty} \ltwos{\bTheta(t)} = \infty,
    ~~
    \lim_{t\to\infty}\frac{\bTheta(t)}{\ltwos{\bTheta(t)}} = \bTheta^\infty,
    ~~
    \lim_{t\to\infty} \frac{\bTheta(t)^T \nabla_{\bTheta} \mc L(\bTheta(t))}{\ltwos{\bTheta(t)} \ltwos{\nabla_{\bTheta} \mc L(\bTheta(t))}}
    = -1.
\end{equation}
For linear tensor networks~\eqref{eq:tensorrep-linear}, the parameter $\bTheta$ is the concatenation of all parameter vectors $\vv_1, \dots, \vv_L$, so \eqref{eq:dirconv} holds for $\bTheta = \begin{bmatrix} \vv_1^T & \dots & \vv_L^T \end{bmatrix}^T$.

Now, recall that by the definition of the linear tensor network, we have the following gradient flow dynamics
\begin{align*}
    \dot \vv_l
    = \tM(-\mX^T\vr) \circ (\vv_1, \dots, \vv_{l-1}, \mI_{k_l}, \vv_{l+1}, \dots, \vv_L).
\end{align*}
Note that we can apply this to calculate the rate of growth of $\ltwos{\vv_l}^2$:
\begin{align*}
    \frac{d}{dt} \ltwos{\vv_l}^2
    &= 2 \vv_l^T \dot \vv_l
    = 2\vv_l^T \tM(-\mX^T\vr) \circ (\vv_1, \dots, \vv_{l-1}, \mI_{k_l}, \vv_{l+1}, \dots, \vv_L)\\
    &= 2 \tM(-\mX^T\vr) \circ (\vv_1, \dots, \vv_{l-1}, \vv_{l}, \vv_{l+1}, \dots, \vv_L)\\
    &= \frac{d}{dt} \ltwos{\vv_{l'}}^2 \quad\text{ for any $l' \in [L]$, }
\end{align*}
so the rate at which $\ltwos{\vv_l}^2$ grows over time is the same for all layers $l \in [L]$. By the definition of $\bTheta$ and \eqref{eq:dirconv}, we have 
\begin{equation*}
    \ltwos{\bTheta}^2 = \sum_{l=1}^L \ltwos{\vv_l}^2 \to \infty,
\end{equation*}
which then implies
\begin{equation*}
    \lim_{t\to\infty} \ltwos{\vv_l(t)} \to \infty,~~
    \lim_{t\to\infty} \frac{\ltwos{\bTheta(t)}}{\ltwos{\vv_l(t)}}
    = \sqrt{\frac{\ltwos{\bTheta(t)}^2}{\ltwos{\vv_l(t)}^2}}
    = \sqrt{L},
\end{equation*}
for all $l \in [L]$.
Now, let $\mc I_l$ be the set of indices that correspond to the components of $\vv_l$ in $\bTheta$. 
It follows from~\eqref{eq:dirconv} that
\begin{align*}
    \lim_{t \to \infty}
    \frac{\vv_l(t)}{\ltwos{\vv_l(t)}}
    =
    \lim_{t \to \infty}
    \frac{\vv_l(t)}{\ltwos{\bTheta(t)}}
    \frac{\ltwos{\bTheta(t)}}{\ltwos{\vv_l(t)}}
    =
    \lim_{t \to \infty}
    \frac{[\bTheta(t)]_{\mc I_l}}{\ltwos{\bTheta(t)}}
    \frac{\ltwos{\bTheta(t)}}{\ltwos{\vv_l(t)}}
    =
    \sqrt{L} [\bTheta^\infty]_{\mc I_l},
\end{align*}
thus showing the directional convergence of $\vv_l$'s.

Next, it follows from directional convergence of $\bTheta$ and the fact that it aligns with $-\nabla_{\bTheta} \mc L(\bTheta)$~\eqref{eq:dirconv} that $\nabla_{\bTheta} \mc L(\bTheta)$ also converges in direction, in the opposite direction of $\bTheta$. By comparing the components in $\mc I_l$'s, we get that $\nabla_{\vv_l} \mc L(\bTheta)$ converges in the opposite direction of $\vv_l$.

For any $l\in[L]$, now let $\vv_l^\infty\defeq \lim_{t \to \infty} \frac{\vv_l(t)}{\ltwos{\vv_l(t)}}$. Also recall the assumption that $\mX^T\vr(t)$ converges in direction, to a unit vector $\vu^\infty \defeq \lim_{t\to\infty} \frac{\mX^T \vr(t)}{\ltwos{\mX^T\vr(t)}}$. By the gradient flow dynamics of $\vv_l$, we have
\begin{align*}
    \vv_l^\infty
    &\propto -\nabla_{\vv_l} \mc L(\bTheta^\infty)
    = \tM(-\vu^\infty) \circ (\vv_1^\infty, \dots, \vv_{l-1}^\infty, \mI_{k_l}, \vv_{l+1}^\infty, \dots, \vv_L^\infty),
\end{align*}
for all $l \in [L]$. Note that this equation has the same form as~\eqref{eq:tensorsingvec}, the definition of singular vectors in tensors. So this proves that $(\vv_1^\infty, \dots, \vv_L^\infty)$ are singular vectors of $\tM(-\vu^\infty)$.

\subsection{Proof of Corollary~\ref{cor:clsf-fc}}
\label{sec:proof-cor-clsf-fc}
The proof proceeds as follows. First, we will show using the structure of the data tensor $\tMfc$ that the limit direction of linear coefficients $\vbetafc(\bThetafc^\infty)$ is proportional to $c \vu^\infty$, where $c$ is a nonzero scalar and $\vu^\infty$ is the limit direction of $\mX^T\vr$. Then, through a closer look at $\vu^\infty$ and $c$, we will prove that $\vbetafc(\bThetafc^\infty)$ is in fact a conic combination of the support vectors (i.e., the data points with the minimum margins). Finally, we will compare $\vbetafc(\bThetafc^\infty)$ with the KKT conditions of the $\ell_2$ max-margin classification problem and conclude that $\vbetafc(\bThetafc^\infty)$ must be in the same direction as the $\ell_2$ max-margin classifier.

Due to the way how the data tensor $\tMfc$ is constructed for fully-connected networks (Appendix~\ref{sec:tensorform-ex}), we always have
\begin{equation*}
    -\nabla_{\vv_1}\mc L(\bThetafc) = \tMfc(-\mX^T\vr) \circ (\mI_{k_1}, \vv_2, \dots, \vv_L) \in
    \spann \left \{ 
    \begin{bmatrix}
    \mX^T\vr \\ \zeros \\ \vdots \\ \zeros
    \end{bmatrix},
    \begin{bmatrix}
    \zeros \\ \mX^T\vr \\ \vdots \\ \zeros
    \end{bmatrix},
    \dots,
    \begin{bmatrix}
    \zeros \\ \zeros \\ \vdots \\ \mX^T\vr
    \end{bmatrix}
    \right \}.
\end{equation*}
From Theorem~\ref{thm:meta-thm-clsf-1}, we established directional convergence of $\vv_1$ and its alignment with $-\nabla_{\vv_1}\mc L(\bThetafc)$. This means that the limit direction $\vv_1^\infty$, which is a fixed vector, must be also in the span of vectors written above. This implies that $\mX^T \vr$ must also converge to some direction, say $\vu^\infty \defeq \lim_{t \to \infty} \frac{\mX^T \vr(t)}{\ltwo{\mX^T \vr(t)}}$.

Now recall the definition of $\vv_1$ in case of the fully-connected network: $\vv_1 = \vect(\mW_1)$. So, by reshaping $\vv_1^\infty$ into its original $d_1 \times d_2$ matrix form $\mW_1^\infty$, we have
\begin{equation*}
    \mW_1^\infty \propto \vu^\infty \vq^T,
\end{equation*}
for some $\vq \in \reals^{d_2}$.
This implies that the linear coefficients $\vbetafc(\bThetafc)$ of the network converge in direction to
\begin{equation}
\label{eq:proof-clsf-fc-3}
    \vbetafc(\bThetafc^{\infty}) = \mW_1^\infty \mW_2^\infty \dots \mW_{L-1}^\infty \vw_L^\infty
    \propto \vu^\infty \vq^T \mW_2^\infty \dots \mW_{L-1}^\infty \vw_L^\infty
    = c \vu^\infty,
\end{equation}
where $c$ is some nonzero real number.

Let us now take a closer look at the vector $\vu^\infty$, the limit direction of $\mX^T \vr$.
Recall from Section~\ref{sec:tensorform} that for any $i \in [n]$,
\begin{equation*}
[\vr]_i = -y_i \exp(-y_i \ffc(\vx_i;\bThetafc)) = -y_i \exp(-y_i \vx_i^T \vbetafc(\bThetafc)),
\end{equation*} 
in case of classification. Recall that $\ltwos{\vbetafc(\bThetafc(t))} \to \infty$ while converging to a certain direction $\vbetafc(\bThetafc^\infty)$. This means that if 
\begin{equation*}
    y_j \vx_j^T \vbetafc(\bThetafc^\infty) > y_i\vx_i^T \vbetafc(\bThetafc^\infty)
\end{equation*}
for any $i,j \in [n]$, then
\begin{equation}
\label{eq:proof-clsf-fc-2}
    \lim_{t \to \infty} \frac{\exp(-y_j \vx_j^T \vbetafc(\bThetafc(t)))}{\exp(-y_i \vx_i^T \vbetafc(\bThetafc(t)))} = 0.
\end{equation}
Take $i$ to be the index of any support vector, i.e., any $i$ that attains the minimum $y_i x_i^T \vbetafc(\bThetafc^\infty)$ among all data points. Using such an $i$, the observation~\eqref{eq:proof-clsf-fc-2} implies that $\lim_{t \to\infty}[\vr(t)]_j = 0$ for any $\vx_j$ that is not a support vector.
Thus, by the argument above, $\vu^\infty$ can in fact be written as
\begin{equation}
\label{eq:proof-clsf-fc-4}
    \vu^\infty = \lim_{t \to \infty} \frac{\sum_{i=1}^n \vx_i [\vr(t)]_i}
    {\ltwo{\sum_{i=1}^n \vx_i [\vr(t)]_i}}
    = - \sum_{i=1}^n \nu_i y_i \vx_i,
\end{equation}
where $\nu_i \geq 0$ for all $i \in [n]$, and $\nu_j = 0$ for $\vx_j$'s that are \emph{not} support vectors.
Combining \eqref{eq:proof-clsf-fc-4} and \eqref{eq:proof-clsf-fc-3},
\begin{equation}
\label{eq:proof-clsf-fc-1}
    \vbetafc(\bThetafc^{\infty}) \propto -c \sum_{i = 1}^n \nu_i y_i \vx_i.
\end{equation}

Recall that we do not yet know whether $c$, introduced in~\eqref{eq:proof-clsf-fc-3}, is positive or negative; we will now show that $c$ has to be negative.
From \citet{lyu2020gradient}, we know that $\mc L(\bThetafc(t)) \to 0$, which implies that $y_i \vx_i^T \vbetafc(\bThetafc^\infty) > 0$ for all $i \in [n]$. However, if $c > 0$, then \eqref{eq:proof-clsf-fc-1} implies that $\vbetafc(\bThetafc^{\infty})$ is inside a cone $\mc K$ defined as
\begin{equation*}
    \mc K \defeq \left \{\sum_{i = 1}^n \gamma_i y_i \vx_i \mid \gamma_i \leq 0, \forall i \in [n] \right \}.
\end{equation*}
Note that the polar cone of $\mc K$, denoted as $\mc K^\circ$, is
\begin{equation*}
    \mc K^\circ \defeq \left \{ \vz \mid \vbeta^T \vz \leq 0, \forall \vbeta \in \mc K \right \}
    = \{\vz \mid y_i \vx_i^T \vz \geq 0, \forall i \in [n]\}.
\end{equation*}
It is known that $\mc K \cap \mc K^\circ = \{\zeros\}$ for any convex cone $\mc K$ and its polar cone $\mc K^\circ$. Therefore, having $c>0$ implies that $\vbetafc(\bThetafc^{\infty}) \in \mc K \setminus\mc K^\circ$, which means that there exists some $i \in [n]$ such that $y_i \vx_i^T \vbetafc(\bThetafc^\infty) < 0$; this contradicts the fact that the loss goes to zero as $t \to \infty$. Therefore, $c$ in \eqref{eq:proof-clsf-fc-3} and \eqref{eq:proof-clsf-fc-1} must be negative:
\begin{equation}
\label{eq:proof-clsf-fc-5}
    \vbetafc(\bThetafc^{\infty}) \propto \sum_{i = 1}^n \nu_i y_i \vx_i,
\end{equation}
for $\nu_i \geq 0$ for all $i \in [n]$ and $\nu_j = 0$ for all $\vx_j$'s that are not suport vectors.

Finally, compare \eqref{eq:proof-clsf-fc-5} with the KKT conditions of the following optimization problem:
\begin{align*}
    &\minimize_{\vz}\quad \ltwo{\vz}^2 
    \quad
    \subjectto \quad y_i \vx_i^T \vz \geq 1,~~\forall i\in[n].
\end{align*}
The KKT conditions of this problem are
\begin{equation*}
    \vz = \sum_{i=1}^n \mu_i y_i \vx_i,~\text{ and }~\mu_i \geq 0,~ \mu_i(1-y_i\vx_i^T \vz) = 0 \text{ for all } i \in [n],
\end{equation*}
where $\mu_1, \dots, \mu_n$ are the dual variables.
Note that this is (up to scaling) satisfied by $\vbetafc(\bThetafc^\infty)$~\eqref{eq:proof-clsf-fc-5}, if we replace $\mu_i$'s with $\nu_i$'s.
This finishes the proof that $\vbetafc(\bThetafc^{\infty})$ is aligned with the $\ell_2$ max-margin classifier.

\section{Proofs of Theorem~\ref{thm:meta-thm-clsf-2} and Corollaries~\ref{cor:clsf-diag} \& \ref{cor:clsf-conv-full}}
\label{sec:proof-meta-thm-clsf-2}
\subsection{Proof of Theorem~\ref{thm:meta-thm-clsf-2}}
\subsubsection{Convergence of loss to zero}
\label{sec:proof-meta-thm-clsf-2-conv}
We first show that given the conditions on initialization, the training loss $\mc L(\bTheta(t))$ converges to zero.
Recall from Section~\ref{sec:tensorform} that
\begin{align*}
    \dot \vv_l 
    = -\nabla_{\vv_l} \mc L(\bTheta)
    = \tM(-\mX^T \vr) \circ (\vv_1, \dots, \vv_{l-1}, \mI_{k_l}, \vv_{l+1}, \dots, \vv_L).
\end{align*}
Applying the structure~\eqref{eq:tensor-orth-decomp} in Assumption~\ref{assm:complex-diag-decomp}, we get
\begin{align*}
    \dot \vv_l 
    &= \tM(-\mX^T \vr) \circ (\vv_1, \dots, \vv_{l-1}, \mI_{k_l}, \vv_{l+1}, \dots, \vv_L)\\
    &= -\!\sum_{j=1}^m [\mS\mX^T\vr]_j (\vv_1^T[\mU_1]_{\cdot,j} \!\otimes \cdots \otimes \vv_{l-1}^T[\mU_{l-1}]_{\cdot,j} \!\otimes [\mU_l]_{\cdot, j} \!\otimes \vv_{l+1}^T[\mU_{l+1}]_{\cdot, j} \!\otimes \cdots \otimes \vv_{L}^T[\mU_{L}]_{\cdot, j})\\
    &= -\sum_{j=1}^m [\mS\mX^T\vr]_j \bigg(\prod_{k\neq l} [\mU_k^T \vv_k]_{j} \bigg) [\mU_l]_{\cdot, j}.
\end{align*}
Left-multiplying $\mU_l^H$ (the conjugate transpose of $\mU_l$) to both sides, we get
\begin{equation}
\label{eq:proof-clsf-meta2-1}
    \mU_l^H \dot \vv_l = - \mS \mX^T \vr \odot \prod\nolimits_{k \neq l}^{\odot} \mU_k^T \vv_k,
\end{equation}
where $\prod^\odot$ denotes the product using entry-wise multiplication $\odot$.

Now consider the rate of growth for the absolute value squared of the $j$-th component of $\mU_l^T \vv_l$:
\begin{align}
    \frac{d}{dt} |[\mU_l^T \vv_l]_j|^2
    &= \frac{d}{dt} [\mU_l^T \vv_l]_j [\mU_l^T \vv_l]_j^*
    = \frac{d}{dt} [\mU_l^T \vv_l]_j [\mU_l^H \vv_l]_j\notag\\
    &= [\mU_l^T \dot \vv_l]_j [\mU_l^H \vv_l]_j
    + [\mU_l^H \dot \vv_l]_j [\mU_l^T \vv_l]_j\notag\\
    &= 2\Re\left ([\mU_l^H \dot \vv_l]_j [\mU_l^T \vv_l]_j \right )\notag\\
    &= 2\Re\left (-[\mS\mX^T \vr]_j \prod\nolimits_{k=1}^L [\mU_k^T \vv_k]_j \right )\notag\\
    &= \frac{d}{dt} |[\mU_{l'}^T \vv_{l'}]_j|^2\quad\text{ for any $l' \in [L]$, }\notag
\end{align}
so for any $j\in [m]$, the squared absolute value of the $j$-th components in $\mU_l^T \vv_l$ grow at the same rate for each layer $l \in [L]$. This means that the gap between any two different layers stays constant for all $t \geq 0$. Combining this with our conditions on initial directions, we have
\begin{equation}
\label{eq:proof-clsf-meta2-11}
\begin{aligned}
    |[\mU_l^T \vv_l(t)]_j|^2 - |[\mU_L^T \vv_L(t)]_j|^2
    &= |[\mU_l^T \vv_l(0)]_j|^2 - |[\mU_L^T \vv_L(0)]_j|^2\\
    &= \alpha^2 |[\mU_l^T \bar \vv_l]_j|^2 - \alpha^2 |[\mU_L^T \bar \vv_L]_j|^2 \geq \alpha^2 \lambda,
\end{aligned}
\end{equation}
for any $j \in [m]$, $l \in [L-1]$, and $t \geq 0$. This inequality also implies
\begin{equation}
\label{eq:proof-clsf-meta2-2}
    |[\mU_l^T \vv_l(t)]_j|^2 
    \geq 
    |[\mU_L^T \vv_L(t)]_j|^2 + \alpha^2 \lambda
    \geq
    \alpha^2 \lambda.
\end{equation}

Let us now consider the time derivative of $\mc L(\bTheta(t))$.
We have the following chain of upper bounds on the time derivative:
\begin{align}
    \frac{d}{dt} \mc L(\bTheta(t))
    &= \nabla_{\bTheta} \mc L(\bTheta(t))^T \dot \bTheta(t)
    = -\ltwos{\nabla_{\bTheta} \mc L(\bTheta(t))}^2\notag\\
    &\leq -\ltwos{\nabla_{\vv_L} \mc L(\bTheta(t))}^2
    = -\ltwos{\dot \vv_L(t)}^2\notag\\
    &\leqwtxt{(a)} -\ltwos{\mU_L^H \dot \vv_L(t)}^2
    \eqwtxt{(b)} -\ltwo{\mS \mX^T \vr(t) \odot \prod\nolimits_{k \neq L}^{\odot} \mU_k^T \vv_k(t)}^2\notag\\
    &=-\sum\nolimits_{j=1}^m |[\mS \mX^T \vr(t)]_j|^2 \prod\nolimits_{k \neq L} |[\mU_k^T \vv_k(t)]_j|^2\notag\\
    &\leqwtxt{(c)} -\alpha^{2L-2} \lambda^{L-1} \sum\nolimits_{j=1}^m |[\mS \mX^T \vr(t)]_j|^2 \notag\\
    &= -\alpha^{2L-2} \lambda^{L-1} \ltwos{\mS \mX^T \vr(t)}^2\notag\\
    &\leqwtxt{(d)} -\alpha^{2L-2} \lambda^{L-1} s_{\min}(\mS)^2 \ltwos{\mX^T \vr(t)}^2,\label{eq:proof-clsf-meta2-3}
\end{align}
where (a) used the fact that $\ltwos{\dot \vv_L(t)}^2 \geq \ltwos{\mU_L \mU_L^H \dot \vv_L(t)}^2$ because it is a projection onto a subspace, and $\ltwos{\mU_L \mU_L^H \dot \vv_L(t)}^2 = \ltwos{\mU_L^H \dot \vv_L(t)}^2$ because $\mU_L^H \mU_L = \mI_{k_L}$; (b) is due to~\eqref{eq:proof-clsf-meta2-1}; (c) is due to~\eqref{eq:proof-clsf-meta2-2}; and (d) used the fact that $\mS \in \C^{m \times d}$ is a matrix that has full column rank, so for any $\vz \in \C^d$, we can use $\ltwos{\mS \vz} \geq s_{\min}(\mS) \ltwos{\vz}$ where $s_{\min}(\mS)$ is the minimum singular value of $\mS$.

We now prove a lower bound on the quantity $\ltwos{\mX^T \vr(t)}^2$. Recall from Section~\ref{sec:tensorform} the definition of $[\vr(t)]_i = -y_i \exp(-y_i f(\vx_i;\bTheta(t)))$ for classification problems. Also, recall the assumption that the dataset is linearly separable, which means that there exists a unit vector $\vz \in \R^d$ such that
\begin{equation*}
    y_i \vx_i^T \vz \geq \gamma > 0
\end{equation*}
holds for all $i \in [n]$, for some $\gamma > 0$.
Using these,
\begin{align*}
    \ltwos{\mX^T \vr(t)}^2 
    &= \ltwos{\sum\nolimits_{i=1}^n y_i \vx_i \exp(-y_i f(\vx_i;\bTheta(t)))}^2 \\
    &\geq [\vz^T \sum\nolimits_{i=1}^n y_i \vx_i \exp(-y_i f(\vx_i;\bTheta(t)))]^2\\
    &\geq \gamma^2 [\sum\nolimits_{i=1}^n \exp(-y_i f(\vx_i;\bTheta(t)))]^2
    = \gamma^2 \mc L(\bTheta(t))^2.
\end{align*}
Combining this with~\eqref{eq:proof-clsf-meta2-3}, we get
\begin{equation*}
    \frac{d}{dt} \mc L(\bTheta(t)) \leq - \alpha^{2L-2} \lambda^{L-1} s_{\min}(\mS)^2 \gamma^2 \mc L(\bTheta(t))^2,
\end{equation*}
which implies
\begin{equation*}
    \mc L(\bTheta(t)) \leq 
    \frac{\mc L(\bTheta(0))}{1+\alpha^{2L-2} \lambda^{L-1} s_{\min}(\mS)^2 \gamma^2 t}.
\end{equation*}
Therefore, $\mc L(\bTheta(t)) \to 0$ as $t \to \infty$.

\subsubsection{Characterizing the limit direction}
Since we have $\mc L(\bTheta(t)) \to 0$, the argument in the proof of Theorem~\ref{thm:meta-thm-clsf-1} applies to this case, and it shows that the parameters $\vv_l$ converge in direction and align with $\dot \vv_l = -\nabla_{\vv_l} \mc L(\bTheta)$.
Let $\vv_l^\infty \defeq \lim_{t \to \infty} \frac{\vv_l(t)}{\ltwos{\vv_l(t)}}$ be the limit direction of $\vv_l$.
Recall also that we additionally assumed that $\mX^T \vr(t)$ converges in direction. Let $\vu^\infty \defeq \lim_{t \to \infty} \frac{\mS \mX^T \vr(t)}{\ltwos{\mS \mX^T \vr(t)}}$, which exists due to the directional convergence of $\mX^T \vr(t)$.

For the remaining steps of the proof, we derive a number of conditions that has to be satisfied by the limit directions of the parameters. Next, we compare these conditions with the KKT conditions of the minimization problem, and finish the proof.

By Assumption~\ref{assm:complex-diag-decomp}, we have
\begin{align*}
    f(\vx; \bTheta) 
    &= \tM(\vx) \circ (\vv_1, \dots, \vv_L)
    = \sum_{j=1}^m [\mS \vx]_j \prod_{l=1}^L [\mU_l^T \vv_l]_j\\
    &= \bigg[ \sum_{j=1}^m \bigg(\prod_{l=1}^L [\mU_l^T \vv_l]_j \bigg) [\mS]_{j,\cdot} \bigg ]\vx
    = \vx^T \mS^T \bigg(\prod\nolimits_{l \in [L]}^{\odot} \mU_l^T \vv_l \bigg)
    = \vx^T \mS^T \vrho.
\end{align*}
Here, we defined $\vrho \defeq \prod\nolimits_{l \in [L]}^{\odot} \mU_l^T \vv_l \in \C^m$.
Since the linear coefficients must be real, we have $\mS^T \vrho \in \R^d$ for any real $\vv_l$'s.
Since $\vv_l$'s converge in direction, $\vrho$ also converges in direction, to $\vrho^\infty \defeq \prod\nolimits_{l \in [L]}^{\odot} \mU_l^T \vv_l^\infty$.
So we can express the limit direction of $\vbeta(\bTheta)$ as
\begin{equation}
\label{eq:proof-clsf-meta2-19}
    \vbeta(\bTheta^\infty) \propto \mS^T \bigg(\prod\nolimits_{l \in [L]}^{\odot} \mU_l^T \vv_l^\infty \bigg) = \mS^T \vrho^\infty.
\end{equation}

Below, we would like to show the following three conditions hold for $\vu^\infty$ and $\vrho^\infty$.
\begin{enumerate}[label=\textup{(\alph*)}]
	\item \label{cond:IBproof1} $|[\vrho^\infty]_j| \neq 0 \implies \Arg(-[\vu^\infty]_j) + \Arg([\vrho^\infty]_j) = 0$,
	\item \label{cond:IBproof2} $|[\vrho^\infty]_j| \neq 0 \implies |[\vu^\infty]_j| \propto |[\vrho^\infty]_j|^{\frac{2}{L}-1}$,
	\item \label{cond:IBproof3} If $L = 2$, then $|[\vrho^\infty]_j| = 0, |[\vrho^\infty]_{j'}| \neq 0 \implies
	|[\vu^\infty]_j| \leq |[\vu^\infty]_{j'}|$,
\end{enumerate}
for any $j, j' \in [m]$.

To prove the first two Conditions~\ref{cond:IBproof1} and \ref{cond:IBproof2}, assume that all the components in $\vrho^\infty$ are nonzero, i.e., $|[\vrho^\infty]_j| \neq 0$. This is without loss of generality because the conditions do not require anything about the case $|[\vrho^\infty]_j| = 0$.
Having $|[\vrho^\infty]_j| \neq 0$ for all $j \in [m]$ implies that $\mU_{l}^T \vv_l^\infty$'s also do not have any zero components. 

From~\eqref{eq:proof-clsf-meta2-1} and alignment of $\vv_l$ and $\dot \vv_l$, we have
\begin{equation}
\label{eq:proof-clsf-meta2-17}
    \lim_{t \to \infty} \mU_{l}^H \vv_l(t)
    = 
    \lim_{t \to \infty} (\mU_{l}^T \vv_l(t))^*
    \propto -\lim_{t \to \infty}
    \mS\mX^T \vr(t) \odot \prod\nolimits_{k \neq l}^{\odot} \mU_k^T \vv_k(t).
\end{equation}

Using $\vu^\infty$, we can rewrite \eqref{eq:proof-clsf-meta2-17} as 
\begin{equation*}
    \mU_{l}^H \vv_l^\infty \propto - \vu^\infty \odot \prod\nolimits_{k \neq l}^{\odot} \mU_k^T \vv_k^\infty,
\end{equation*}
for all $l \in [L]$. Element-wise multiplying $\mU_{l}^T \vv_l^\infty$ to both sides gives
\begin{equation}
\label{eq:proof-clsf-meta2-4}
    \mU_{l}^T \vv_l^\infty \odot \mU_{l}^H \vv_l^\infty = |\mU_{l}^T \vv_l^\infty|^{\odot 2} \propto - \vu^\infty \odot \prod\nolimits_{k \in [L]}^{\odot} \mU_k^T \vv_k^\infty 
    = -\vu^\infty \odot \vrho^\infty,
\end{equation}
where $\va^{\odot b}$ denotes element-wise $b$-th power of the vector $\va$.
Since the LHS of~\eqref{eq:proof-clsf-meta2-4} is a positive real number, we have
\begin{equation*}
	\Arg(|[\mU_l^T \vv_l^\infty]_j|^2) = 0 = \Arg([-\vu^\infty]_j) + \Arg([\vrho^\infty]_j),
\end{equation*}
hence proving Condition~\ref{cond:IBproof1}.
Using this, \eqref{eq:proof-clsf-meta2-4} becomes
\begin{equation}
\label{eq:proof-clsf-meta2-5}
    |\mU_{l}^T \vv_l^\infty|^{\odot 2} \propto  |\vu^\infty| \odot |\vrho^\infty|.
\end{equation}
Now element-wise multiply \eqref{eq:proof-clsf-meta2-5} for all $l \in [L]$, then we get
\begin{equation}
\label{eq:proof-clsf-meta2-6}
    |\vrho^\infty|^{\odot 2} \propto |\vu^\infty|^{\odot L} \odot |\vrho^\infty|^{\odot L},
\end{equation}
from which we conclude
\begin{align*}
|[\vrho^\infty]_j| \neq 0 \implies |[\vu^\infty]_j| \propto |[\vrho^\infty]_j|^{\frac{2}{L}-1},
\end{align*}
for all $j \in [m]$. This proves Condition~\ref{cond:IBproof2}.

Now, we are left with Condition~\ref{cond:IBproof3}, which only concerns the case $L = 2$.
First, consider the time derivative of $[\vrho]_j = [\mU_1^T \vv_1]_j [\mU_2^T \vv_2]_j$.
\begin{align}
    \frac{d}{dt} [\vrho(t)]_j
    &=[\mU_1^T \vv_1(t)]_j \frac{d}{dt} [\mU_2^T \vv_2(t)]_j
    + [\mU_2^T \vv_2(t)]_j \frac{d}{dt} [\mU_1^T \vv_1(t)]_j\notag\\
    &\eqwtxt{(a)}
    -[\mS\mX^T \vr(t)]_j^* (|[\mU_1^T \vv_1(t)]_j|^2 + |[\mU_2^T \vv_2(t)]_j|^2),\label{eq:proof-clsf-meta2-9}
\end{align}
where (a) used \eqref{eq:proof-clsf-meta2-1}.
Since $|[\mU_1^T \vv_1(t)]_j|^2 \geq \alpha^2 \lambda$ \eqref{eq:proof-clsf-meta2-2} by our assumption on initialization, \eqref{eq:proof-clsf-meta2-9} implies that whenever $[\vu^\infty]_j \neq 0$, the derivative $\frac{d}{dt} [\vrho(t)]_j$ is nonzero and has phase equal to $\Arg(-[\vu^\infty]_j^*)$ in the limit $t \to \infty$. This also implies that $[\vrho(t)]_j$ does not stay stuck at zero forever, provided that $[\vu^\infty]_j \neq 0$. 

Now consider
\begin{align}
\label{eq:proof-clsf-meta2-10}
    \frac{\left | \frac{d}{dt} [\vrho(t)]_j \right |}{\ltwos{\mS \mX^T \vr(t)}|[\vrho(t)]_j|} 
    = \frac{|[\mS\mX^T \vr(t)]_j|}{\ltwos{\mS \mX^T \vr(t)}}\frac{|[\mU_1^T \vv_1(t)]_j|^2 + |[\mU_2^T \vv_2(t)]_j|^2}{|[\vrho(t)]_j|}.
\end{align}
We want to compare this quantity for different $j,j' \in [m]$ satisfying $[\vu^\infty]_j \neq 0$ and $[\vu^\infty]_{j'} \neq 0$. Before we do that, we take a look at the last term in the RHS of~\eqref{eq:proof-clsf-meta2-10}. Recall from \eqref{eq:proof-clsf-meta2-11} that 
\begin{equation}
\label{eq:proof-clsf-meta2-12}
|[\mU_1^T \vv_1(t)]_j|^2 = |[\mU_2^T \vv_2(t)]_j|^2+|[\mU_1^T \vv_1(0)]_j|^2- |[\mU_2^T \vv_2(0)]_j|^2.
\end{equation}
For simplicity, let $\delta_j \defeq |[\mU_1^T \vv_1(0)]_j|^2- |[\mU_2^T \vv_2(0)]_j|^2$, which is a fixed positive number due to our assumption on initialization. Then, we can use \eqref{eq:proof-clsf-meta2-12} and $|[\vrho(t)]_j|=|[\mU_1^T \vv_1(t)]_j||[\mU_2^T \vv_2(t)]_j|$ to show that
\begin{align}
	&\frac{|[\mU_1^T \vv_1(t)]_j|^2 + |[\mU_2^T \vv_2(t)]_j|^2}{|[\vrho(t)]_j|}
	=
	\frac{2 |[\mU_2^T \vv_2(t)]_j|^2 + \delta_j}
	{|[\mU_2^T \vv_2(t)]_j| \sqrt{|[\mU_2^T \vv_2(t)]_j|^2+\delta_j}}
	\geq 2,\label{eq:fix1}\\
	&\lim_{t \to \infty}
	\frac{|[\mU_1^T \vv_1(t)]_j|^2 + |[\mU_2^T \vv_2(t)]_j|^2}{|[\vrho(t)]_j|}
	= 2 \text{~~~if } \lim_{t \to \infty} |[\mU_2^T \vv_2(t)]_j| = \infty.\label{eq:fix2}
\end{align}
Recall that we want to prove Condition~\ref{cond:IBproof3}, namely
\begin{align*}
	|[\vrho^\infty]_j| = 0, |[\vrho^\infty]_{j'}| \neq 0 \implies
	|[\vu^\infty]_j| \leq |[\vu^\infty]_{j'}|.
\end{align*}
For the sake of contradiction, suppose that there exists $j \in [m]$ that satisfies $|[\vrho^\infty]_j| = 0$ but $|[\vu^\infty]_j| > |[\vu^\infty]_{j'}|$, for some $j' \in [m]$ satisfying $|[\vrho^\infty]_{j'}| \neq 0$.
Note from Condition~\ref{cond:IBproof2} that $|[\vu^\infty]_{j'}| > 0$.
Having $|[\vrho^\infty]_j| = 0$ and $|[\vrho^\infty]_{j'}| \neq 0$ implies that $|[\vrho(t)]_{j'}| \to \infty$ and $\frac{|[\vrho(t)]_j|}{|[\vrho(t)]_{j'}|} \to 0$.
We now want to compute the ratio of \eqref{eq:proof-clsf-meta2-10} for $j$ and $j'$. First, note that
\begin{align}
	\label{eq:proof-clsf-meta2-13}
	&\lim_{t \to \infty} 
	\frac{|[\mS\mX^T \vr(t)]_j|/\ltwos{\mS \mX^T \vr(t)}}
	{|[\mS\mX^T \vr(t)]_{j'}|/\ltwos{\mS \mX^T \vr(t)}}
	= \frac{|[\vu^\infty]_j|}{|[\vu^\infty]_{j'}|} > 1.
\end{align}
Next, using $|[\vrho(t)]_{j'}| \to \infty$, \eqref{eq:fix1}, and \eqref{eq:fix2}, we have
\begin{align}
	\label{eq:proof-clsf-meta2-14}
	\lim_{t\to\infty} \frac{|[\mU_1^T \vv_1(t)]_{j}|^2 + |[\mU_2^T \vv_2(t)]_{j}|^2}{|[\vrho(t)]_{j}|}
	\geq
	\lim_{t\to\infty} \frac{|[\mU_1^T \vv_1(t)]_{j'}|^2 + |[\mU_2^T \vv_2(t)]_{j'}|^2}{|[\vrho(t)]_{j'}|} = 2.
\end{align}
Combining \eqref{eq:proof-clsf-meta2-13} and \eqref{eq:proof-clsf-meta2-14} to compute the ratio of \eqref{eq:proof-clsf-meta2-10} for $j$ and $j'$, we get that there exists some $t_0' \geq 0$ such that for any $t \geq t_0'$, we have
\begin{align}
	\label{eq:proof-clsf-meta2-15}
	\frac{\left | \frac{d}{dt} [\vrho(t)]_j \right |/|[\vrho(t)]_j|}
	{\left | \frac{d}{dt} [\vrho(t)]_{j'} \right |/|[\vrho(t)]_{j'}|}
	> 1.
\end{align}
This implies that the ratio of the absolute value of time derivative of $[\vrho(t)]_j$ to the absolute value of current value of $[\vrho(t)]_j$ is strictly bigger than that of $[\vrho(t)]_{j'}$. 
Moreover, we saw in \eqref{eq:proof-clsf-meta2-9} that the phase of $\frac{d}{dt} [\vrho(t)]_j$ converges to that of $-[\vu^\infty]_j^*$. Since this holds for all $t \geq t_0'$, \eqref{eq:proof-clsf-meta2-15} results in a growth of $|[\vrho(t)]_j|$ that is exponentially faster than that of $|[\vrho(t)]_{j'}|$, so $[\vrho(t)]_j$ becomes a dominant component in $\vrho(t)$ as $t \to \infty$. This contradicts that $[\vrho^\infty]_j = 0$, hence Condition~\ref{cond:IBproof3} has to be satisfied.

In addition to the three conditioned proven above, one can use the same argument as in Appendix~\ref{sec:proof-cor-clsf-fc}, more specifically \eqref{eq:proof-clsf-fc-2} and \eqref{eq:proof-clsf-fc-4}, to show that $\vu^\infty$ can be written as
\begin{equation}
	\label{eq:proof-clsf-meta2-16}
	\vu^\infty = \lim_{t \to \infty} 
	\frac{\mS \sum_{i=1}^n \vx_i [\vr(t)]_i}
	{\ltwo{\mS \sum_{i=1}^n \vx_i [\vr(t)]_i}}
	= - \mS \sum_{i=1}^n \nu_i y_i \vx_i,
\end{equation}
where $\nu_i \geq 0$ for all $i \in [n]$, and $\nu_j = 0$ for $\vx_j$'s that are \emph{not} support vectors, i.e., those satisfying $y_j \vx_j^T \mS^T \vrho^\infty > \min_{i \in [n]} y_i \vx_i^T \mS^T \vrho^\infty$.

So far, we have shown that Conditions~\ref{cond:IBproof1}, \ref{cond:IBproof2}, and \ref{cond:IBproof3} as well as \eqref{eq:proof-clsf-meta2-16} are satisfied by the limit directions $\vu^\infty$ and $\vrho^\infty$ of $\mS \mX^T \vr$ and $\vrho$. We now consider the following optimization problem and prove that these conditions are in fact the KKT conditions of the optimization problem.
Consider 
\begin{equation}
	\label{eq:proof-clsf-meta2-20}
	\minimize_{\vrho \in \C^m}\quad \norm{\vrho}_{2/L}
	\quad
	\subjectto \quad y_i \vx_i^T\mS^T \vrho \geq 1,~~\forall i\in[n].
\end{equation}
The KKT conditions of this problem are
\begin{equation*}
	\partial \norm{\vrho}_{2/L} \ni \mS^* \sum_{i=1}^n \mu_i y_i \vx_i,~\text{ and }~\mu_i \geq 0,~ \mu_i(1-y_i\vx_i^T \mS^T \vrho) = 0 \text{ for all } i \in [n],
\end{equation*}
where $\mu_1, \dots, \mu_n$ are the dual variables.
The symbol $\partial \norm{\cdot}_{2/L}$ denotes the (local) subdifferential of the $\ell_{2/L}$ norm\footnote{We use the definition of subdifferentials from \citet{gunasekar2018implicit}.}, which can be written as
\begin{align*}
	\partial \norm{\vrho}_{1}
	=
	\{ \vu \in \C^m \mid &|[\vu]_j| \leq 1 \text{ for all } j \in [m], \text{ and }\\ &[\vrho]_j \neq 0 \implies [\vu]_j = \exp(\sqrt{-1} \Arg([\vrho]_j))\},
\end{align*}
if $L = 2$ (in this case $\partial \norm{\vrho}_{1}$ is the global subdifferential), and
\begin{equation*}
	\partial \norm{\vrho}_{2/L}
	=
	\left \{ \vu \in \C^m \mid [\vrho]_j \neq 0 \implies [\vu]_j = \frac{2}{L} |[\vrho]_j|^{\frac{2}{L}-1} \exp(\sqrt{-1} \Arg([\vrho]_j)) \right \}, 
\end{equation*}
if $L > 2$.
By replacing $\mu_i$'s with $\nu_i$'s defined in \eqref{eq:proof-clsf-meta2-16}, we can check from \eqref{eq:proof-clsf-meta2-16}, Conditions~\ref{cond:IBproof1}, \ref{cond:IBproof2}, and \ref{cond:IBproof3} that the that $\vrho^\infty$ and $\vu^\infty$ satisfy the KKT conditions up to scaling. 
Therefore, by \eqref{eq:proof-clsf-meta2-19}, $\vbeta(\bTheta(t))$ converges in direction aligned with $\mS^T \vrho^{\infty}$, where $\vrho^\infty$ is aligned with a stationary point (global minimum in case of $L=2$) of the optimization problem~\eqref{eq:proof-clsf-meta2-20}.

If $\mS$ is invertible, we can get $\mS^{-T} \vbeta(\bTheta^\infty) \propto \vrho^\infty$. Plugging this into the optimization problem~\eqref{eq:proof-clsf-meta2-20} gives the last statement of the theorem.

\subsection{Proof of Corollary~\ref{cor:clsf-diag}}
\label{sec:proof-cor-clsf-diag}
It suffices to prove that linear diagonal networks satisfy Assumption~\ref{assm:complex-diag-decomp}, with $\mS = \mI_d$.
The proof is very straightforward, since $\tMdiag(\vx) \in \R^{d \times \cdots \times d}$ has $[\tMdiag(\vx)]_{j,j,\dots,j} = [\vx]_j$ while all the remaining entries are zero. It is straightforward to verify that $\tMdiag(\vx)$ satisfies Assumption~\ref{assm:complex-diag-decomp} with $\mS = \mU_1 = \dots = \mU_L = \mI_d$.
A direct substitution into Theorem~\ref{thm:meta-thm-clsf-2} gives the corollary.

\subsection{Proof of Corollary~\ref{cor:clsf-conv-full}}
\label{sec:proof-cor-clsf-conv-full}
For full-length convolutional networks ($k_1 = \dots = k_L = d$), we will prove that they satisfy Assumption~\ref{assm:complex-diag-decomp} with
$\mS = d^{\frac{L-1}{2}} \mF$ and $\mU_1 = \dots = \mU_L = \mF^*$, where $\mF \in \C^{d \times d}$ is the matrix of discrete Fourier transform basis $[\mF]_{j,k} = \frac{1}{\sqrt{d}} \exp(-\frac{\sqrt{-1} \cdot 2 \pi (j-1)(k-1)}{d})$ and $\mF^*$ is the complex conjugate of $\mF$.

For simplicity of notation, define $\psi=\exp(-\frac{\sqrt{-1}\cdot2\pi}{d})$.
With matrices $\mS$ and $\mU_1, \dots, \mU_L$ chosen as above, we can write $\tM(\vx)$ as
\begin{align*}
    \tM(\vx) &= \sum_{j=1}^d [\mS \vx]_j ([\mU_1]_{\cdot,j} \otimes [\mU_2]_{\cdot,j} \otimes \cdots \otimes [\mU_L]_{\cdot,j})\\
    &= \sum_{j=1}^d \left [d^{\frac{L-2}{2}} \sum_{k=1}^d [\vx]_k \psi^{(j-1)(k-1)} \right] 
    \begin{bmatrix}
    \psi^0 / \sqrt{d} \\ \psi^{-(j-1)} / \sqrt{d} \\ \psi^{-2(j-1)} / \sqrt{d} \\ \vdots \\ \psi^{-(d-1)(j-1)} / \sqrt{d}
    \end{bmatrix}^{\otimes L},
\end{align*}
where $\va^{\otimes L}$ denotes the $L$-times tensor product of $\va$.
We will show that $\tM(\vx) = \tMconv(\vx)$.

For any $j_1, \dots, j_L \in [d]$,
\begin{align*}
    [\tM(\vx)]_{j_1, \dots, j_L} &= 
    \frac{1}{d} \sum_{l=1}^d \left [\sum_{k=1}^d [\vx]_k \psi^{(l-1)(k-1)} \right] \psi^{-(l-1)(\sum_{q=1}^L j_q - L)}\\
    &= \frac{1}{d} \sum_{k=1}^d [\vx]_k 
    \sum_{l=1}^d \psi^{(l-1)(k-1-\sum_{q=1}^L j_q + L)}.
\end{align*}
Recall that 
\begin{align*}
    \sum_{l=1}^d \psi^{(l-1)(k-1-\sum_{q=1}^L j_q + L)}
    = \begin{cases}
    d & \text{ if } k-1-\sum_{q=1}^L j_q + L \text{ is a multiple of $d$},\\
    0 & \text{ otherwise}.
    \end{cases}
\end{align*}
Using this, we have
\begin{align*}
    [\tM(\vx)]_{j_1, \dots, j_L} 
    &= \frac{1}{d} \sum_{k=1}^d [\vx]_k 
    \sum_{l=1}^d \psi^{(l-1)(k-1-\sum_{q=1}^L j_q + L)}\\
    &= [\vx]_{\sum_{q=1}^L j_q -L+1 \bmod d}
    = [\tMconv(\vx)]_{j_1, \dots, j_L}.
\end{align*}
Hence, linear full-length convolutional networks satisfy Assumption~\ref{assm:complex-diag-decomp} with $\mS = d^{\frac{L-1}{2}} \mF$. A direct substitution into Theorem~\ref{thm:meta-thm-clsf-2} and then using the fact that $|[\mF \vz]_j| = |[\mF^* \vz]_j|$ for any real vector $\vz \in \R^d$ gives the corollary.

\section{Proofs of Theorem~\ref{thm:meta-thm-clsf-3} and Corollary~\ref{cor:clsf-conv-small}}
\label{sec:proof-meta-thm-clsf-3}
\subsection{Proof of Theorem~\ref{thm:meta-thm-clsf-3}}
\subsubsection{Convergence of loss to zero}
We first show that given the conditions on initialization, the training loss $\mc L(\bTheta(t))$ converges to zero. 
Since $L=2$ and $\tM(\vx) = \mU_1\diag(\vs)\mU_2^T$, we can write the gradient flow dynamics from Section~\ref{sec:tensorform} as
\begin{equation}
\label{eq:proof-clsf-meta3-1}
\begin{aligned}
    \dot \vv_1
    &= -\tM(\mX^T \vr) \circ (\mI_{k_1}, \vv_2)
    = -r \mU_1 \diag(\vs) \mU_2^T \vv_2,\\
    \dot \vv_2
    &= -\tM(\mX^T \vr) \circ (\vv_1, \mI_{k_2})
    = -r \mU_2 \diag(\vs) \mU_1^T \vv_1,
\end{aligned}
\end{equation}
where $r(t) = -y \exp(-y f(\vx;\bTheta(t)))$ is the residual of the data point $(\vx,y)$.
From \eqref{eq:proof-clsf-meta3-1} we get
\begin{equation}
\label{eq:proof-clsf-meta3-2}
    \mU_1^T \dot \vv_1 = -r \vs \odot \mU_2^T \vv_2,~~
    \mU_2^T \dot \vv_2 = -r \vs \odot \mU_1^T \vv_1.
\end{equation}

Now consider the rate of growth for the $j$-th component of $\mU_1^T \vv_1$ squared:
\begin{equation}
\label{eq:proof-clsf-meta3-15}
    \frac{d}{dt} [\mU_1^T \vv_1]_j^2
    = 2 [\mU_1^T \vv_1]_j [\mU_1^T \dot \vv_1]_j
    = -2 r [\vs]_j [\mU_1^T \vv_1]_j [\mU_2^T \vv_2]_j
    = \frac{d}{dt} [\mU_{2}^T \vv_{2}]_j^2.
\end{equation}
So for any $j\in [m]$, $[\mU_1^T \vv_1]_j^2$ and $[\mU_2^T \vv_2]_j^2$ grow at the same rate. This means that the gap between the two layers stays constant for all $t \geq 0$. Combining this with our conditions on initial directions,
\begin{equation}
\label{eq:proof-clsf-meta3-9}
\begin{aligned}
    [\mU_1^T \vv_1(t)]_j^2 - [\mU_2^T \vv_2(t)]_j^2
    &= [\mU_1^T \vv_1(0)]_j^2 - [\mU_2^T \vv_2(0)]_j^2\\
    &= \alpha^2 [\mU_1^T \bar \vv_1]_j^2 - \alpha^2 [\mU_2^T \bar \vv_2]_j^2 \geq \alpha^2 \lambda,
\end{aligned}
\end{equation}
for any $j \in [m]$ and $t \geq 0$. This inequality implies
\begin{equation}
\label{eq:proof-clsf-meta3-3}
    [\mU_1^T \vv_1(t)]_j^2 
    \geq 
    [\mU_2^T \vv_2(t)]_j^2 + \alpha^2 \lambda
    \geq
    \alpha^2 \lambda.
\end{equation}

Let us now consider the time derivative of $\mc L(\bTheta(t))$.
We have the following chain of upper bounds on the time derivative:
\begin{align}
    \frac{d}{dt} \mc L(\bTheta(t))
    &= \nabla_{\bTheta} \mc L(\bTheta(t))^T \dot \bTheta(t)
    = -\ltwos{\nabla_{\bTheta} \mc L(\bTheta(t))}^2\notag\\
    &\leq -\ltwos{\nabla_{\vv_2} \mc L(\bTheta(t))}^2
    = -\ltwos{\dot \vv_2(t)}^2\notag\\
    &\leqwtxt{(a)} -\ltwos{\mU_2^T \dot \vv_2(t)}^2
    \eqwtxt{(b)} -r(t)^2 \ltwo{\vs \odot \mU_1^T \vv_1(t)}^2\notag\\
    &=-r(t)^2 \sum\nolimits_{j=1}^m [\vs]_j^2 [\mU_1^T \vv_1(t)]_j^2\notag\\
    &\leqwtxt{(c)} -\alpha^{2} \lambda r(t)^2 \sum\nolimits_{j=1}^m [\vs]_j^2 \notag\\
    &= -\alpha^{2} \lambda \ltwos{\vs}^2 \mc L(\bTheta(t))^2\notag,
\end{align}
where (a) used the fact that $\ltwos{\dot \vv_2(t)}^2 \geq \ltwos{\mU_2 \mU_2^T \dot \vv_2(t)}^2$ because it is a projection onto a subspace, and $\ltwos{\mU_2 \mU_2^T \dot \vv_L(t)}^2 = \ltwos{\mU_2^T \dot \vv_2(t)}^2$ because $\mU_2^T \mU_2 = \mI_{k_2}$; (b) is due to~\eqref{eq:proof-clsf-meta3-2}; (c) is due to~\eqref{eq:proof-clsf-meta3-3}.
From this, we get
\begin{equation*}
    \mc L(\bTheta(t)) \leq 
    \frac{\mc L(\bTheta(0))}{1+\alpha^{2} \lambda \ltwos{\vs}^2 t}.
\end{equation*}
Therefore, $\mc L(\bTheta(t)) \to 0$ as $t \to \infty$.

\subsubsection{Characterizing the limit direction}
Since we proved that $\mc L(\bTheta(t)) \to 0$, the argument in the proof of Theorem~\ref{thm:meta-thm-clsf-1} applies to this case, and shows that the parameters $\vv_l$ converge in direction and align with $\dot \vv_l = -\nabla_{\vv_l} \mc L(\bTheta)$.
Let $\vv_l^\infty \defeq \lim_{t \to \infty} \frac{\vv_l(t)}{\ltwos{\vv_l(t)}}$ be the limit direction of $\vv_l$.
As done in the proof of Theorem~\ref{thm:meta-thm-clsf-2}, define $\vrho(t) = \mU_1^T \vv_1(t) \odot \mU_2^T \vv_2(t)$ and $\vrho^\infty = \mU_1^T \vv_1^\infty \odot \mU_2^T \vv_2^\infty$.

It follows from $r(t) = -y \exp(-y f(\vx;\bTheta(t)))$ that we have $\sign(r(t)) = -\sign(y)$. Using this, \eqref{eq:proof-clsf-meta3-2}, and alignment of $\vv_l$ and $\dot \vv_l$, we have
\begin{equation}
\label{eq:proof-clsf-meta3-14}
    \mU_{1}^T \vv_1^\infty \propto y \vs \odot \mU_2^T \vv_2^\infty,~~
    \mU_{2}^T \vv_2^\infty \propto y \vs \odot \mU_1^T \vv_1^\infty.
\end{equation}

Element-wise multiplying LHSs to both sides gives
\begin{equation}
\label{eq:proof-clsf-meta3-4}
    (\mU_{1}^T \vv_1^\infty)^{\odot 2} \propto y\vs \odot \vrho^\infty,~~
    (\mU_{2}^T \vv_2^\infty)^{\odot 2} \propto y\vs \odot \vrho^\infty.
\end{equation}
Since the LHSs are positive and $\vs$ is positive, the following equations have to be satisfied for all $j \in [m]$:
\begin{equation}
\label{eq:proof-clsf-meta3-16}
    \sign(y) = \sign([\vrho^\infty]_j).
\end{equation}
Now, multiplying both sides of the two equations~\eqref{eq:proof-clsf-meta3-4}, we get
\begin{equation}
\label{eq:proof-clsf-meta3-5}
    (\vrho^{\infty})^{\odot 2} \propto \vs^{\odot 2} \odot (\vrho^\infty)^{\odot 2}.
\end{equation}
From \eqref{eq:proof-clsf-meta3-5}, $\vrho^\infty$ must satisfy that
\begin{align}
\label{eq:proof-clsf-meta3-6}
[\vrho^\infty]_j \neq 0, [\vrho^\infty]_{j'} \neq 0 \implies [\vs]_j = [\vs]_{j'} > 0,
\end{align}
for all $j, j' \in [m]$.
As in the proof of Theorem~\ref{thm:meta-thm-clsf-2}, there is another condition that has to be satisfied:
\begin{align}
\label{eq:proof-clsf-meta3-7}
[\vrho^\infty]_j = 0, [\vrho^\infty]_{j'} \neq 0 &\implies
[\vs]_j \leq [\vs]_{j'},
\end{align}
for any $j, j'\in [m]$; let us prove why. 
First, consider the time derivative of $[\vrho]_j = [\mU_1^T \vv_1]_j [\mU_2^T \vv_2]_j$.
\begin{align}
	\frac{d}{dt} [\vrho(t)]_j
	&=[\mU_1^T \vv_1(t)]_j \frac{d}{dt} [\mU_2^T \vv_2(t)]_j
	+ [\mU_2^T \vv_2(t)]_j \frac{d}{dt} [\mU_1^T \vv_1(t)]_j\notag\\
	&\eqwtxt{(a)}
	-r(t) [\vs]_j ([\mU_1^T \vv_1(t)]_j^2 + [\mU_2^T \vv_2(t)]_j^2),\label{eq:proof-clsf-meta3-17}
\end{align}
where (a) used \eqref{eq:proof-clsf-meta3-2}.
Since $|[\mU_1^T \vv_1(t)]_j|^2 \geq \alpha^2 \lambda$ \eqref{eq:proof-clsf-meta3-3} by our assumption on initialization, \eqref{eq:proof-clsf-meta3-17} implies that whenever $[\vs]_j \neq 0$, the derivative $\frac{d}{dt} [\vrho(t)]_j$ is nonzero and has sign equal to $y$. This also implies that $[\vrho(t)]_j$ does not stay stuck at zero forever, provided that $[\vs]_j \neq 0$. 

Now consider
\begin{align}
\label{eq:proof-clsf-meta3-8}
    \frac{\left | \frac{d}{dt} [\vrho(t)]_j \right |}{|r(t)||[\vrho(t)]_j|} 
    = [\vs]_j \frac{[\mU_1^T \vv_1(t)]_j^2 + [\mU_2^T \vv_2(t)]_j^2}{|[\vrho(t)]_j|}.
\end{align}
We want to compare this quantity for different $j,j' \in [m]$ satisfying $[\vs]_j \neq 0$ and $[\vs]_{j'} \neq 0$. Before we do that, we take a look at the last term in the RHS of~\eqref{eq:proof-clsf-meta3-8}. Recall from \eqref{eq:proof-clsf-meta3-9} that 
\begin{equation}
\label{eq:proof-clsf-meta3-10}
[\mU_1^T \vv_1(t)]_j^2 = [\mU_2^T \vv_2(t)]_j^2+[\mU_1^T \vv_1(0)]_j^2- [\mU_2^T \vv_2(0)]_j^2.
\end{equation}
For simplicity, let $\delta_j \defeq [\mU_1^T \vv_1(0)]_j^2- [\mU_2^T \vv_2(0)]_j^2$, which is a fixed positive number due to our assumption on initialization. Then, we use $|[\vrho(t)]_j|=|[\mU_1^T \vv_1(t)]_j||[\mU_2^T \vv_2(t)]_j|$ and \eqref{eq:proof-clsf-meta3-10} to show that
\begin{align}
	&\frac{[\mU_1^T \vv_1(t)]_j^2 + [\mU_2^T \vv_2(t)]_j^2}{|[\vrho(t)]_j|}
	=
	\frac{2 [\mU_2^T \vv_2(t)]_j^2 + \delta_j}
	{|[\mU_2^T \vv_2(t)]_j| \sqrt{[\mU_2^T \vv_2(t)]_j^2+\delta_j}}
	\geq 2,\label{eq:fix3}\\
	&\lim_{t \to \infty}
	\frac{[\mU_1^T \vv_1(t)]_j^2 + [\mU_2^T \vv_2(t)]_j^2}{|[\vrho(t)]_j|}
	= 2 \text{~~~if } \lim_{t \to \infty} |[\mU_2^T \vv_2(t)]_j| = \infty.\label{eq:fix4}
\end{align}
Recall that we want to prove that \eqref{eq:proof-clsf-meta3-7} should necessarily hold. For the sake of contradiction, suppose that there exists $j \in [m]$ that satisfies $[\vrho^\infty]_j = 0$ but $[\vs]_j > [\vs]_{j'}$, for some $j' \in [m]$ satisfying $[\vrho^\infty]_{j'} \neq 0$. 
Note from \eqref{eq:proof-clsf-meta3-6} that $[\vs]_{j'} > 0$.
Having $[\vrho^\infty]_j = 0$ and $[\vrho^\infty]_{j'} \neq 0$ implies that $|[\vrho(t)]_{j'}| \to \infty$ and $\frac{|[\vrho(t)]_j|}{|[\vrho(t)]_{j'}|} \to 0$.
We now want to compute the ratio of \eqref{eq:proof-clsf-meta3-8} for $j$ and $j'$.
Using $|[\vrho(t)]_{j'}| \to \infty$, \eqref{eq:fix3}, and \eqref{eq:fix4}, we have 
\begin{align}
	\label{eq:proof-clsf-meta3-11}
	\lim_{t\to\infty} \frac{[\mU_1^T \vv_1(t)]_{j}^2 + [\mU_2^T \vv_2(t)]_{j}^2}{|[\vrho(t)]_{j}|}
	\geq
	\lim_{t\to\infty} \frac{[\mU_1^T \vv_1(t)]_{j'}^2 + [\mU_2^T \vv_2(t)]_{j'}^2}{|[\vrho(t)]_{j'}|} = 2.
\end{align}
Combining $\frac{[\vs]_j}{[\vs]_{j'}} > 1$ and \eqref{eq:proof-clsf-meta3-11} to compute the ratio of \eqref{eq:proof-clsf-meta3-8} for $j$ and $j'$, there exists some $t_0 \geq 0$ such that for any $t \geq t_0$, we have
\begin{align}
\label{eq:proof-clsf-meta3-12}
    \frac{\left | \frac{d}{dt} [\vrho(t)]_j \right |/|[\vrho(t)]_j|}
    {\left | \frac{d}{dt} [\vrho(t)]_{j'} \right |/|[\vrho(t)]_{j'}|}
    > 1.
\end{align}
This implies that the ratio of the absolute value of time derivative of $[\vrho(t)]_j$ to the absolute value of current value of $[\vrho(t)]_j$ is strictly bigger than that of $[\vrho(t)]_{j'}$. 
Moreover, by the definition of $r(t)$, $\frac{d}{dt} [\vrho(t)]_j$ always has sign equal to $y$~\eqref{eq:proof-clsf-meta3-17}. Since this holds for all $t \geq t_0$, \eqref{eq:proof-clsf-meta3-12} results in a growth of $|[\vrho(t)]_j|$ that is exponentially faster than that of $|[\vrho(t)]_{j'}|$, so $[\vrho(t)]_j$ becomes a dominant component in $\vrho(t)$ as $t \to \infty$. This contradicts that $[\vrho^\infty]_j = 0$, hence the condition~\eqref{eq:proof-clsf-meta3-7} has to be satisfied.

So far, we have characterized some conditions~\eqref{eq:proof-clsf-meta3-16}, \eqref{eq:proof-clsf-meta3-6}, \eqref{eq:proof-clsf-meta3-7} that have to be satisfied by the limit direction $\vrho^\infty$ of $\vrho$. We now consider the following optimization problem and prove that these conditions are in fact the KKT conditions of the optimization problem.
Consider 
\begin{equation}
\label{eq:proof-clsf-meta3-13}
    \minimize_{\vrho \in \R^m}\quad \norm{\vrho}_{1}
    \quad
    \subjectto \quad y \vs^T \vrho \geq 1.
\end{equation}
The KKT condition of this problem is
\begin{equation*}
    \partial \norm{\vrho}_{1} \ni y \vs,
\end{equation*}
where the global subdifferential $\partial \norm{\cdot}_{1}$ is defined as
\begin{equation*}
    \partial \norm{\vrho}_{1}
    =
    \{ \vu \in \R^m \mid |[\vu]_j| \leq 1 \text{ for all } j \in [m], \text{ and } [\vrho]_j \neq 0 \implies [\vu]_j = \sign([\vrho]_j)\}.
\end{equation*}
We can check from~\eqref{eq:proof-clsf-meta3-16}, \eqref{eq:proof-clsf-meta3-6}, \eqref{eq:proof-clsf-meta3-7} that the that $\vrho^\infty$ satisfies the KKT condition up to scaling.

Now, how do we characterize $\vv_1^\infty$ and $\vv_2^\infty$ in terms of $\vrho^\infty$?
Let $\veta_1^\infty \defeq \mU_1^T \vv_1^\infty$ and $\veta_2^\infty \defeq \mU_2^T \vv_2^\infty$.
Then, $\vv_l^\infty = \mU_l \veta_l^\infty = \mU_l \mU_l^T \vv_l^\infty$ holds because any component orthogonal to the column space of $\mU_l$ stays unchanged while the component in the column space of $\mU_l$ diverges to infinity.
By \eqref{eq:proof-clsf-meta3-15}, $|\veta_1^\infty| = |\veta_2^\infty| = |\vrho^\infty|^{\odot 1/2}$.
By \eqref{eq:proof-clsf-meta3-14}, we have $\sign(\veta_1^\infty) = \sign(y)\odot\sign(\veta_2^\infty)$.

\subsection{Proof of Corollary~\ref{cor:clsf-conv-small}}
The proof of Corollary~\ref{cor:clsf-conv-small} boils down to characterizing the SVD of $\tMconv(\vx)$.

\subsubsection{The $k_1 = 1$ case}
First, it is straightforward to check that for $L=2$ and $k_1 = 1$, we have
\begin{equation*}
    \vbetaconv(\bThetaconv) = \vv_1 \vv_2.
\end{equation*}
Note that the parameter $\vv_1$ is now a scalar although we use a boldface letter.
For $k_1 = 1$, the data tensor is simply $\tMconv(\vx) = \vx^T$. Thus, we have $\mU_1 = 1$, $\mU_2 = \frac{\vx}{\ltwo{\vx}}$, and $\vs = \ltwos{\vx}$.
Substituting $\mU_1$ and $\mU_2$ to the theorem gives the condition on initial directions in Corollary~\ref{cor:clsf-conv-small}. Also, the theorem implies us that the limit direction $\vv_2^\infty$ of $\vv_2$ satisfies $\vv_2^\infty \propto y \vv_1^\infty \vx$. Using this, it is easy to check that
\begin{equation*}
    \vbetaconv(\bThetaconv^\infty) \propto \vv_1^\infty \vv_2^\infty \propto y\vx.
\end{equation*}

\subsubsection{The $k_1 = 2$ case}
First, it is straightforward to check that for $L=2$ and $k_1 = 2$, we have
\begin{equation}
\label{eq:proof-cor-clsf-conv-small-1}
    \vbetaconv(\bThetaconv)
    = \begin{bmatrix}
    [\vv_1]_1 & 0 & 0 & \cdots & 0 & [\vv_1]_2\\
    [\vv_1]_2 & [\vv_1]_1 & 0 & \cdots & 0 & 0\\
    0 & [\vv_1]_2 & [\vv_1]_1 & \cdots & 0 & 0 \\
    \vdots & \vdots & \vdots & \ddots & \vdots & \vdots\\
    0 & 0 & 0 & \cdots & [\vv_1]_1 & 0\\
    0 & 0 & 0 & \cdots & [\vv_1]_2 & [\vv_1]_1
    \end{bmatrix}
    \vv_2.
\end{equation}

For $k_1 = 2$, by definition, the data tensor is
\begin{align*}
    \tMconv(\vx) = \begin{bmatrix}
    \vx^T \\ \overleftarrow\vx^T
    \end{bmatrix},
\end{align*}
and it is straightforward to check that the SVD of this matrix is
\begin{align*}
    \tMconv(\vx) = \begin{bmatrix}
    \vx^T \\ \overleftarrow\vx^T
    \end{bmatrix}
    =
    \begin{bmatrix}
    \nicefrac{1}{\sqrt 2} & \nicefrac{1}{\sqrt 2}\\
    \nicefrac{1}{\sqrt 2} & -\nicefrac{1}{\sqrt 2}
    \end{bmatrix}
    \begin{bmatrix}
    \sqrt{\ltwo{\vx}^2 + \vx^T \overleftarrow \vx} & 0\\
    0 & \sqrt{\ltwo{\vx}^2 - \vx^T \overleftarrow \vx}
    \end{bmatrix}
    \begin{bmatrix}
    \frac{\vx^T + \overleftarrow\vx^T}{\sqrt 2 \sqrt{\ltwo{\vx}^2 + \vx^T \overleftarrow \vx}}\\
    \frac{\vx^T - \overleftarrow\vx^T}{\sqrt 2 \sqrt{\ltwo{\vx}^2 - \vx^T \overleftarrow \vx}}
    \end{bmatrix},
\end{align*}
so
\begin{equation*}
    \mU_1 = \begin{bmatrix}
    \nicefrac{1}{\sqrt 2} & \nicefrac{1}{\sqrt 2}\\
    \nicefrac{1}{\sqrt 2} & -\nicefrac{1}{\sqrt 2}
    \end{bmatrix},
    \mU_2 = \begin{bmatrix}
    \frac{\vx + \overleftarrow\vx}{\sqrt 2 \sqrt{\ltwo{\vx}^2 + \vx^T \overleftarrow \vx}} &
    \frac{\vx - \overleftarrow\vx}{\sqrt 2 \sqrt{\ltwo{\vx}^2 - \vx^T \overleftarrow \vx}}
    \end{bmatrix},
    \vs = \begin{bmatrix}
    \sqrt{\ltwo{\vx}^2 + \vx^T \overleftarrow \vx}\\
    \sqrt{\ltwo{\vx}^2 - \vx^T \overleftarrow \vx}
    \end{bmatrix}.
\end{equation*}
Substituting $\mU_1$ and $\mU_2$ to the theorem gives the conditions on initial directions.
Also, note that the maximum singular value depends on the sign of $\vx^T \overleftarrow \vx$.
Consider the optimization problem in the theorem statement:
\begin{equation*}
    \minimize\nolimits_{\vrho \in \R^m}\quad \norm{\vrho}_{1}
    \quad
    \subjectto \quad y \vs^T \vrho \geq 1.
\end{equation*}
If $\vx^T \overleftarrow \vx > 0$, then the solution $\vrho^\infty$ to this problem is in the direction of $[y~~0]$. Therefore, the limit directions $\vv_1^\infty$ and $\vv_2^\infty$ will be of the form
\begin{equation*}
    \vv_1^\infty \propto c_1 \begin{bmatrix}1\\1\end{bmatrix},~~
    \vv_2^\infty \propto c_2 (\vx+\overleftarrow \vx),
\end{equation*}
where $\sign(c_1)\sign(c_2) = \sign(y)$. Using \eqref{eq:proof-cor-clsf-conv-small-1}, it is straightforward to check that
\begin{equation*}
    \vbetaconv(\bThetaconv^\infty)
    \propto
    y
    \begin{bmatrix}
    1 & 0 & 0 & \cdots & 0 & 1\\
    1 & 1 & 0 & \cdots & 0 & 0\\
    0 & 1 & 1 & \cdots & 0 & 0 \\
    \vdots & \vdots & \vdots & \ddots & \vdots & \vdots\\
    0 & 0 & 0 & \cdots & 1 & 0\\
    0 & 0 & 0 & \cdots & 1 & 1
    \end{bmatrix}
    (\vx + \overleftarrow \vx)
    = y(2\vx + \overleftarrow \vx + \overrightarrow \vx).
\end{equation*}
Similarly, if $\vx^T \overleftarrow \vx < 0$, then the solution $\rho^\infty$ is in the direction of $[0~~y]$. Using \eqref{eq:proof-cor-clsf-conv-small-1}, we have
\begin{equation*}
    \vbetaconv(\bThetaconv^\infty)
    \propto
    y
    \begin{bmatrix}
    1 & 0 & 0 & \cdots & 0 & -1\\
    -1 & 1 & 0 & \cdots & 0 & 0\\
    0 & -1 & 1 & \cdots & 0 & 0 \\
    \vdots & \vdots & \vdots & \ddots & \vdots & \vdots\\
    0 & 0 & 0 & \cdots & 1 & 0\\
    0 & 0 & 0 & \cdots & -1 & 1
    \end{bmatrix}
    (\vx - \overleftarrow \vx)
    = y(2\vx - \overleftarrow \vx - \overrightarrow \vx).
\end{equation*}
\section{Proofs of Theorem~\ref{thm:meta-thm-reg-1}, Corollaries~\ref{cor:reg-diag}, \ref{cor:reg-conv-full} \& \ref{cor:reg-matrix-fac}, and Lemma~\ref{lem:ode-real}}
\label{sec:proof-meta-thm-reg-1}
\subsection{Proof of Lemma~\ref{lem:ode-real}}
In this subsection, we restate Lemma~\ref{lem:ode-real} and prove it.
\lemodereal*

\begin{proof}
For the proof, we omit the subscript $L$ for simplicity.
First, continuity (and also continuous differentiability) of $p(t)$ and $q(t)$ is straightforward because the RHSs of the ODEs are differentiable in $p$ and $q$. Next, define $\tilde p(t) = p(-t)$ and $\tilde q(t) = -q(-t)$. Then, one can show that $\tilde p$ and $\tilde q$ are also the solution of the ODE because
\begin{align*}
    \frac{d}{dt} \tilde p(t) &= \frac{d}{dt} p(-t) = -\dot p(-t) = -p(-t)^{L-2}q(-t) = \tilde p(t)^{L-2} \tilde q(t),\\
    \frac{d}{dt} \tilde q(t) &= - \frac{d}{dt} q(-t) = \dot q(-t) = p(-t)^{L-1} = \tilde p(t)^{L-1}.
\end{align*}
However, by the Picard-Lindel\"{o}f theorem, the solution has to be unique; this means that $p(t) = \tilde p(t) = p(-t)$ and $q(t) = \tilde q(t) = -q(-t)$, which proves that $p$ is even and $q$ is odd and also implies that the domain of $p$ and $q$ has to be of the form $(-c,c)$ (i.e. symmetric around the origin) and $h = p^{L-1}q$ is odd.

To show that $h$ is strictly increasing, it suffices to show that $p$ and $q$ are both strictly increasing on $[0,c)$. To this end, we show that $p(t) \geq 1$ for all $t \in [0,c)$.
First, due to the initial condition $p(0) = 1$ and continuity of $p$, there exists $\epsilon_1 > 0$ such that $p(t) > 0$ for all $t \in [0, \epsilon_1) \eqdef I_1$. 
This implies that $\dot q(t) = p(t)^{L-1} > 0$ for $t \in I_1 \setminus \{0 \}$, so $q$ is strictly increasing on $I_1$. Since $q(0)=0$, we have $q(t) > 0$ for $t \in I_1 \setminus \{0 \}$, which then implies that $\dot p(t) = p(t)^{L-2} q(t) > 0$. Therefore, $p$ is also strictly increasing on $I_1$; this then means $p(t) \geq 1$ for $t \in [0,\epsilon_1]$ because $p(0) = 1$. Now, due to $p(\epsilon_1) \geq 1$ and continuity of $p$, there exists $\epsilon_2 > \epsilon_1$ such that $p(t) > 0$ for all $t \in [\epsilon_1, \epsilon_2) \eqdef I_2$. Using the argument above for $I_2$ results in $p(t) \geq 1$ for $t \in [0, \epsilon_2]$. Repeating this until the end of the domain, we can show that $p(t) \geq 1$ holds for all $t \in [0, c)$.
By $p \geq 1$, we have $\dot q = p^{L-1} \geq 1$ on $[0,c)$, so $q$ is strictly increasing on $[0,c)$. Also, $q(t) > 0$ on $(0,c)$, so $\dot p = p^{L-2} q > 0$ on $(0,c)$ and $p$ is also strictly increasing on $[0,c)$. This proves that $h$ is strictly increasing on $[0,c)$, and also on $(-c,c)$ by oddity of $h$. 

Finally, it is left to show $\lim_{t \uparrow c} h(t) = \infty$ and $\lim_{t \downarrow -c} h(t)= -\infty$. If $c < \infty$, then this together with monotonicity implies that the limits hold. To see why, suppose $c<\infty$ and $\lim_{t \uparrow c} h(t) < \infty$. Then, $p$ and $q$ can be extended beyond $t\geq c$, which contradicts the fact that $(-c,c)$ is the maximal interval of existence of the solution. 
Next, consider the case $c = \infty$. From $p(t) \geq 1$, we have $\dot q(t) \geq 1$ for $t \geq 0$. This implies that $q(t) \geq t$ for $t \geq 0$. Now, $\dot p(t) \geq p(t)^{L-2} q(t) \geq t$, which gives $p(t) \geq \frac{t^2}{2}+1$ for $t \geq 0$. Therefore, we have 
\begin{align*}
\lim_{t \to \infty} h(t) = \lim_{t \to\infty} p(t)^{L-1}q(t) \geq \lim_{t \to \infty} \left ( \frac{t^2}{2}+1 \right )^{L-1} t = \infty,
\end{align*}
hence finishing the proof.
\end{proof}

\subsection{Proof of Theorem~\ref{thm:meta-thm-reg-1}}
\subsubsection{Convergence of loss to zero}
\label{sec:proof-meta-thm-reg-1-conv}
We first show that given the conditions on initialization, the training loss $\mc L(\bTheta(t))$ converges to zero. 
Recall from Section~\ref{sec:tensorform} that
\begin{align*}
    \dot \vv_l 
    = -\nabla_{\vv_l} \mc L(\bTheta)
    = \tM(-\mX^T \vr) \circ (\vv_1, \dots, \vv_{l-1}, \mI_{k_l}, \vv_{l+1}, \dots, \vv_L).
\end{align*}
Applying the structure~\eqref{eq:tensor-orth-decomp} in Assumption~\ref{assm:complex-diag-decomp}, we get
\begin{align*}
    \dot \vv_l 
    &= \tM(-\mX^T \vr) \circ (\vv_1, \dots, \vv_{l-1}, \mI_{k_l}, \vv_{l+1}, \dots, \vv_L)\\
    &= -\!\sum_{j=1}^m [\mS\mX^T\vr]_j (\vv_1^T[\mU_1]_{\cdot,j} \!\otimes \cdots \otimes \vv_{l-1}^T[\mU_{l-1}]_{\cdot,j} \!\otimes [\mU_l]_{\cdot, j} \!\otimes \vv_{l+1}^T[\mU_{l+1}]_{\cdot, j} \!\otimes \cdots \otimes \vv_{L}^T[\mU_{L}]_{\cdot, j})\\
    &= -\sum_{j=1}^m [\mS\mX^T\vr]_j \bigg(\prod_{k\neq l} [\mU_k^T \vv_k]_{j} \bigg) [\mU_l]_{\cdot, j}.
\end{align*}
Left-multiplying $\mU_l^T$ to both sides, we get
\begin{equation}
\label{eq:proof-reg-meta1-1}
    \mU_l^T \dot \vv_l = - \mS \mX^T \vr \odot \prod\nolimits_{k \neq l}^{\odot} \mU_k^T \vv_k,
\end{equation}
where $\prod^\odot$ denotes the product using entry-wise multiplication $\odot$.

Now consider the rate of growth for the second power of the $j$-th component of $\mU_l^T \vv_l$:
\begin{align*}
    \frac{d}{dt} [\mU_l^T \vv_l]_j^2
    = 2 [\mU_l^T \dot \vv_l]_j [\mU_l^T \vv_l]_j
    = -2 [\mS\mX^T \vr]_j \prod\nolimits_{k=1}^L [\mU_k^T \vv_k]_j
    = \frac{d}{dt} [\mU_{l'}^T \vv_{l'}]_j^2
\end{align*}
for any $l' \in [L]$. Thus, for any $j\in [m]$, the second power of the $j$-th components in $\mU_l^T \vv_l$ grow at the same rate for each layer $l \in [L]$. This means that the gap between any two different layers stays constant for all $t \geq 0$. Combining this with our conditions on initial directions, we have
\begin{equation*}
    [\mU_l^T \vv_l(t)]_j^2 - [\mU_L^T \vv_L(t)]_j^2
    = [\mU_l^T \vv_l(0)]_j^2 - [\mU_L^T \vv_L(0)]_j^2
    = \alpha^2 [\bar \veta]_j^2 \geq \alpha^2 \lambda,
\end{equation*}
for any $j \in [m]$, $l \in [L-1]$, and $t \geq 0$. This inequality also implies
\begin{equation}
\label{eq:proof-reg-meta1-2}
    [\mU_l^T \vv_l(t)]_j^2 
    \geq 
    [\mU_L^T \vv_L(t)]_j^2 + \alpha^2 \lambda
    \geq
    \alpha^2 \lambda.
\end{equation}

Let us now consider the time derivative of $\mc L(\bTheta(t))$.
We have the following chain of upper bounds on the time derivative:
\begin{align}
    \frac{d}{dt} \mc L(\bTheta(t))
    &= \nabla_{\bTheta} \mc L(\bTheta(t))^T \dot \bTheta(t)
    = -\ltwos{\nabla_{\bTheta} \mc L(\bTheta(t))}^2\notag\\
    &\leq -\ltwos{\nabla_{\vv_L} \mc L(\bTheta(t))}^2
    = -\ltwos{\dot \vv_L(t)}^2\notag\\
    &\leqwtxt{(a)} -\ltwos{\mU_L^T \dot \vv_L(t)}^2
    \eqwtxt{(b)} -\ltwo{\mS \mX^T \vr(t) \odot \prod\nolimits_{k \neq L}^{\odot} \mU_k^T \vv_k(t)}^2\notag\\
    &=-\sum\nolimits_{j=1}^m [\mS \mX^T \vr(t)]_j^2 \prod\nolimits_{k \neq L} [\mU_k^T \vv_k(t)]_j^2\notag\\
    &\leqwtxt{(c)} -\alpha^{2L-2} \lambda^{L-1} \sum\nolimits_{j=1}^m [\mS \mX^T \vr(t)]_j^2 \notag\\
    &= -\alpha^{2L-2} \lambda^{L-1} \ltwos{\mS \mX^T \vr(t)}^2\notag\\
    &\leqwtxt{(d)} -\alpha^{2L-2} \lambda^{L-1} s_{\min}(\mS)^2 s_{\min}(\mX)^2
    \ltwos{\vr(t)}^2,\notag\\
    &= -2 \alpha^{2L-2} \lambda^{L-1} s_{\min}(\mS)^2 s_{\min}(\mX)^2 \mc L(\bTheta(t)),\label{eq:proof-reg-meta1-3}
\end{align}
where (a) used the fact that $\ltwos{\dot \vv_L(t)}^2 \geq \ltwos{\mU_L \mU_L^T \dot \vv_L(t)}^2$ because it is a projection onto a subspace, and $\ltwos{\mU_L \mU_L^T \dot \vv_L(t)}^2 = \ltwos{\mU_L^T \dot \vv_L(t)}^2$ because $\mU_L^T \mU_L = \mI_{k_L}$; (b) is due to~\eqref{eq:proof-reg-meta1-1}; (c) is due to~\eqref{eq:proof-reg-meta1-2}; and (d) used the fact that $\mS \in \R^{m \times d}$ and $\mX^T \in \R^{d \times n}$ are matrices that have full column rank, so for any $\vz \in \C^n$, we can use $\ltwos{\mS \mX^T \vz} \geq s_{\min}(\mS) s_{\min}(\mX) \ltwos{\vz}$ where $s_{\min}(\cdot)$ denotes the minimum singular value of a matrix.

From~\eqref{eq:proof-reg-meta1-3}, we get
\begin{equation}
\label{eq:proof-reg-meta1-7}
    \mc L(\bTheta(t)) \leq \mc L(\bTheta(0)) \exp(-2 \alpha^{2L-2} \lambda^{L-1} s_{\min}(\mS)^2 s_{\min}(\mX)^2 t),
\end{equation}
so that $\mc L(\bTheta(t)) \to 0$ as $t \to \infty$.

\subsubsection{Characterizing the limit point}
Now, we move on to characterize the limit points of the gradient flow. First, by defining a ``transformed'' version of the parameters $\veta_l(t) \defeq \mU_l^T \vv_l(t)$ and using \eqref{eq:proof-reg-meta1-1}, one can define an equivalent system of ODEs:
\begin{equation}
\label{eq:proof-reg-meta1-4}
\begin{aligned}
    \dot \veta_l &= -\mS \mX^T \vr \odot \prod\nolimits_{k \neq l}^{\odot} \veta_k \text{ for } l \in [L],\\
    \veta_l(0) &= \alpha \bar \veta \text { for } l \in [L-1],~~\veta_L(0) = \zeros.
\end{aligned}
\end{equation}

Using Lemma~\ref{lem:ode-real}, it is straightforward to verify that the solution to \eqref{eq:proof-reg-meta1-4} has the following form. For odd $L$, we have
\begin{equation}
\label{eq:proof-reg-meta1-5}
\begin{aligned}
    \veta_l(t) &= \alpha \bar \veta \odot p_L \left (-\alpha^{L-2} |\bar \veta|^{\odot L-2} \odot \mS \mX^T \int_0^t \vr(\tau)d\tau \right) \text{ for } l \in [L-1],\\
    \veta_L(t) &= \alpha |\bar \veta| \odot q_L\left (- \alpha^{L-2} |\bar \veta|^{\odot L-2} \odot \mS \mX^T \int_0^t \vr(\tau)d\tau \right).
\end{aligned}
\end{equation}

Similarly, for even $L$, the solution for \eqref{eq:proof-reg-meta1-4} satisfies
\begin{equation}
\label{eq:proof-reg-meta1-6}
\begin{aligned}
    \veta_l(t) &= \alpha \bar \veta \odot p_L\left (-\alpha^{L-2} \bar \veta^{\odot L-2} \odot \mS \mX^T \int_0^t \vr(\tau)d\tau \right) \text{ for } l \in [L-1],\\
    \veta_L(t) &= \alpha \bar \veta \odot q_L\left (-\alpha^{L-2} \bar \veta^{\odot L-2} \odot \mS \mX^T \int_0^t \vr(\tau)d\tau \right).
\end{aligned}
\end{equation}
Now that we know how the solutions $\veta_l$ look like, let us see how these relate to the linear coefficients of the network. 
By Assumption~\ref{assm:complex-diag-decomp}, we have
\begin{align*}
    f(\vx; \bTheta) 
    &= \tM(\vx) \circ (\vv_1, \dots, \vv_L)
    = \sum_{j=1}^m [\mS \vx]_j \prod_{l=1}^L [\mU_l^T \vv_l]_j\\
    &= \bigg[ \sum_{j=1}^m \bigg(\prod_{l=1}^L [\veta_l]_j \bigg) [\mS]_{j,\cdot} \bigg ]\vx
    = \vx^T \mS^T \bigg(\prod\nolimits_{l \in [L]}^{\odot} \veta_l \bigg)
    = \vx^T \mS^T \vrho.
\end{align*}
Here, we defined $\vrho \defeq \prod\nolimits_{l \in [L]}^{\odot} \veta_l \in \R^m$.
Therefore, the linear coefficients of the network can be written as $\vbeta(\bTheta(t)) = \mS^T \vrho(t)$.
From the solutions~\eqref{eq:proof-reg-meta1-5} and \eqref{eq:proof-reg-meta1-6}, we can write
\begin{equation*}
    \vrho(t) = \prod_{i=1}^L \veta_l(t)
    = \alpha^{L} |\bar \veta|^{\odot L} \odot h_L \left (
    -\alpha^{L-2} |\bar \veta|^{\odot L-2} \odot \mS \mX^T \int_0^t \vr(\tau)d\tau
    \right ),
\end{equation*}
where $h_L \defeq p_L^{L-1} q_L$, defined in Lemma~\ref{lem:ode-real}.
By the convergence of the loss to zero~\eqref{eq:proof-reg-meta1-7}, we have $\lim_{t \to \infty} \mX \vbeta(\bTheta(t)) = \vy$. Therefore,
\begin{align}
\label{eq:proof-reg-meta1-9}
    \mX \mS^T \underbrace{\left ( 
    \alpha^L |\bar \veta|^{\odot L} \odot h_L \left (
    -\alpha^{L-2} |\bar \veta|^{\odot L-2} \odot \mS \mX^T \int_0^\infty \vr(\tau)d\tau
    \right )
    \right )}_{\eqdef \vrho^\infty}
    = \vy.
\end{align}
Next, we will show that $\vrho^\infty$ is in fact the solution of the following optimization problem
\begin{align}
\label{eq:proof-reg-meta1-8}
    \minimize_{\vrho \in \R^m}\quad Q_{L, \alpha, \bar \veta}(\vrho)
    \quad
    \subjectto \quad \mX \mS^T \vrho= \vy,
\end{align}
where $Q_{L, \alpha, \bar \veta} : \R^m \to \R$ is a norm-like function  defined using $H_L (t) \defeq \int_0^t h_L^{-1} (\tau) d\tau$:
\begin{align*}
    Q_{L, \alpha, \bar \veta}(\vrho) = \alpha^2  \sum_{j=1}^m [\bar \veta]_j^2 H_L \left ( 
    \frac{[\vrho]_j}{\alpha^L |[\bar \veta]_j|^L}
    \right ).
\end{align*}
Note that the KKT conditions for \eqref{eq:proof-reg-meta1-8} are
\begin{align*}
    \mX \mS^T \vrho = \vy, \quad \nabla_\vrho Q_{L, \alpha, \bar \veta}(\vrho) = \mS \mX^T \vnu,
\end{align*}
for some $\vnu \in \R^n$. It is clear from \eqref{eq:proof-reg-meta1-9} that $\vrho^\infty$ satisfies the first condition (primal feasibility), so let us check the other one.
Through a straightforward calculation, we get
\begin{align*}
    \nabla_\vrho Q_{L, \alpha, \bar \veta}(\vrho)
    = \alpha^{2-L} |\bar \veta|^{\odot 2-L} \odot
    h_L^{-1} \left ( \alpha^{-L} |\bar \veta|^{\odot (-L)} \odot \vrho \right ).
\end{align*}
Equating this with $\mS \mX^T \vnu$ gives
\begin{align*}
    &\alpha^{2-L} |\bar \veta|^{\odot 2-L} \odot
    h_L^{-1} \left ( \alpha^{-L} |\bar \veta|^{\odot(-L)} \odot \vrho \right ) = \mS \mX^T \vnu\\
    \iff~&
    h_L^{-1} \left ( \alpha^{-L} |\bar \veta|^{\odot(-L)} \odot \vrho \right )
    = \alpha^{L-2} |\bar \veta|^{\odot L-2} \odot \mS \mX^T \vnu\\
    \iff~&\vrho = \alpha^L |\bar \veta|^{\odot L} \odot
    h_L \left ( \alpha^{L-2} |\bar \veta|^{\odot L-2} \odot \mS \mX^T \vnu \right ).
\end{align*}
Hence, by setting $\vnu = - \int_0^\infty \vr(\tau) d\tau$, $\vrho^\infty$ satisfies the second KKT condition as well.
Also, if $\mS$ is invertible, we can substitute $\vrho = \mS^{-T} \vz$ to \eqref{eq:proof-reg-meta1-8} to get the last statement of the theorem.
This finishes the proof.

\subsection{Proof of Corollary~\ref{cor:reg-diag}}
The proof is a direct consequence of the fact that Assumption~\ref{assm:complex-diag-decomp} holds with $\mS = \mU_1 = \dots = \mU_L = \mI_d$ for linear diagonal networks. Hence, the proof is the same as Corollary~\ref{cor:clsf-diag}, proved in Appendix~\ref{sec:proof-cor-clsf-diag}.

\subsection{Proof of Corollary~\ref{cor:reg-conv-full}}
\label{sec:proof-reg-conv-full}
We start by showing the DFT of a real and even vector is also real and even. Suppose that $\vx \in \reals^d$ is real and even. First,
\begin{align*}
    [\mF \vx]_j &= \frac{1}{\sqrt{d}} \sum_{k=1}^d [\vx]_k \exp \left (-\frac{\sqrt{-1} \cdot 2 \pi (j-1)(k-1)}{d} \right)\\
    &=
    \frac{1}{\sqrt{d}} \sum_{k=1}^d [\vx]_k \cos \left (-\frac{2 \pi (j-1)(k-1)}{d} \right)
    + 
    \frac{\sqrt{-1}}{\sqrt{d}} \sum_{k=1}^d [\vx]_k \sin \left (-\frac{2 \pi (j-1)(k-1)}{d} \right)\\
    &=
    \frac{1}{\sqrt{d}} \sum_{k=1}^d [\vx]_k \cos \left (-\frac{2 \pi (j-1)(k-1)}{d} \right) \in \reals,
\end{align*}
for all $j \in [d]$.
To prove that $\mF\vx$ is even, for $j = 0, \dots, \lfloor \frac{d-3}{2} \rfloor$, we have
\begin{align*}
    [\mF \vx]_{j+2} &= \frac{1}{\sqrt{d}} \sum_{k=1}^d [\vx]_k \cos \left (-\frac{2 \pi (j+1)(k-1)}{d} \right)\\
    &=
    \frac{1}{\sqrt{d}} \sum_{k=1}^d [\vx]_k \cos \left (2\pi(k-1)-\frac{2 \pi (j+1)(k-1)}{d} \right)\\
    &=
    \frac{1}{\sqrt{d}} \sum_{k=1}^d [\vx]_k \cos \left (\frac{2 \pi (d-j-1)(k-1)}{d} \right)\\
    &=
    \frac{1}{\sqrt{d}} \sum_{k=1}^d [\vx]_k \cos \left (-\frac{2 \pi (d-j-1)(k-1)}{d} \right)\\
    &= [\mF \vx]_{d-j}.
\end{align*}

It is proved in Appendix~\ref{sec:proof-cor-clsf-conv-full} that linear full-length convolutional networks ($k_1 = \dots = k_L = d$) satisfy Assumption~\ref{assm:complex-diag-decomp} with
$\mS = d^{\frac{L-1}{2}} \mF$ and $\mU_1 = \dots = \mU_L = \mF^*$, where $\mF \in \C^{d \times d}$ is the matrix of discrete Fourier transform basis $[\mF]_{j,k} = \frac{1}{\sqrt{d}} \exp(-\frac{\sqrt{-1} \cdot 2 \pi (j-1)(k-1)}{d})$ and $\mF^*$ is the complex conjugate of $\mF$.

The proof of convergence of loss to zero in Appendix~\ref{sec:proof-meta-thm-reg-1-conv} is written for real matrices $\mS, \mU_1, \dots, \mU_L$, but we can actually apply the same argument as in Appendix~\ref{sec:proof-meta-thm-clsf-2-conv} and prove that the loss converges to zero, even in the case where $\mS, \mU_1, \dots, \mU_L$ are complex.

Next, since $\mU_l$'s are complex, we can write the system of ODE as (see \eqref{eq:proof-clsf-meta2-1} for its derivation)
\begin{equation}
\label{eq:proof-cor-reg-conv-full-1}
    \mF \dot \vw_l = - d^{\frac{L-1}{2}} \mF \mX^T \vr \odot \prod\nolimits_{k \neq l}^{\odot} \mF^* \vw_k,
\end{equation}
Since all data points $\vx_i$ and initialization $\vw_l(0)$ are real and even, we have that $\mF \mX^T \vr$ is real and even, and $\mF^* \vw_l(0) = \mF \vw_l(0)$'s are real and even. By \eqref{eq:proof-cor-reg-conv-full-1}, we see that the time derivatives of $\mF \vw_l$ are also real and even. Thus, the parameters $\vw_l(t)$ are all real and even for all $t \geq 0$.
From this observation, we can define
$\veta_l(t) \defeq \mF \vw_l(t)$, $\bar \veta \defeq \mF \bar \vw$, and $\mS \defeq d^{\frac{L-1}{2}} \Re(\mF)$, which are all real by the even symmetry.
Then, starting from \eqref{eq:proof-reg-meta1-4}, the proof goes through.

\subsection{Proof of Corollary~\ref{cor:reg-matrix-fac}}
Since the sensor matrices $\mA_1, \dots, \mA_n$ commute, they are simultaneously diagonalizable with a real unitary matrix $\mU \in \reals^{d \times d}$, i.e., $\mU^T \mA_i \mU$'s are diagonal matrices for all $i \in [n]$.
From the deep matrix sensing problem~\eqref{eq:matrix-sensing}, we can compute $\nabla_{\mW_l} \mc L_{\rm ms}$, which gives the gradient flow dynamics of $\mW_l$.
\begin{equation*}
    \dot \mW_l = -\nabla_{\mW_l} \mc L_{\rm ms}
    = - \mW_{l-1}^T \cdots \mW_1^T (\sum\nolimits_{i=1}^n r_i \mA_i) \mW_L^T \cdots \mW_{l+1}^T,
\end{equation*}
where $r_i = \< \mA_i, \mW_1\cdots \mW_L \> - y_i$ is the residual for the $i$-th sensor matrix.
If we left-multiply $\mU^T$ and right-multiply $\mU$ to both sides, we get
\begin{equation}
\label{eq:proof-cor-reg-matrix-fac-1}
    \mU^T \dot \mW_l \mU 
    = - \mU^T \mW_{l-1}^T \mU \cdots \mU^T \mW_1^T \mU (\sum\nolimits_{i=1}^n r_i \mU^T \mA_i \mU) \mU^T \mW_L^T \mU \cdots \mU^T \mW_{l+1}^T \mU.
\end{equation}
If $\mU^T \mW_{k}^T \mU$ is a diagonal matrix for all $k \neq l$, then $\mU^T \dot \mW_l \mU$ is also a diagonal matrix.
Note also that, since $\mW_l(0) = \alpha \mI_d = \alpha \mU \mU^T$ for $l \in [L-1]$, the product $\mU^T\mW_l\mU$ is a diagonal matrix at initialization. 
These observations imply that $\mW_l(t)$'s are all diagonalizable with $\mU$ for all $t \geq 0$.

Now, define $\vv_l(t) = \eig(\mW_l(t))$, i.e., $\mU^T \mW_l \mU = \diag(\vv_l)$. Also, let $\vx_i = \eig(\mA_i)$. Then, \eqref{eq:proof-cor-reg-matrix-fac-1} can be written as
\begin{equation*}
    \dot \vv_l = -(\sum\nolimits_{i=1}^n r_i \vx_i) \odot \prod\nolimits_{k\neq l}^{\odot} \vv_k.
\end{equation*}
Therefore, this is equivalent to the regression problem with linear diagonal networks, initialized at $\vv_l(0) = \alpha \ones$ for $l \in [L-1]$ and $\vv_L(0) = \zeros$.
Given this equivalence, Corollary~\ref{cor:reg-matrix-fac} can be implied from Corollary~\ref{cor:reg-diag}.

\section{Proof of Theorem~\ref{thm:meta-thm-reg-2}}
\label{sec:proof-meta-thm-reg-2}
\subsection{Convergence of loss to zero}
We first show that given the conditions on initialization, the training loss $\mc L(\bTheta(t))$ converges to zero.
Since $L=2$ and $\tM(\vx) = \mU_1\diag(\vs)\mU_2^T$, we can write the gradient flow dynamics from Section~\ref{sec:tensorform} as
\begin{equation}
\label{eq:proof-reg-meta2-1}
\begin{aligned}
    \dot \vv_1
    &= -\tM(\mX^T \vr) \circ (\mI_{k_1}, \vv_2)
    = -r \mU_1 \diag(\vs) \mU_2^T \vv_2,\\
    \dot \vv_2
    &= -\tM(\mX^T \vr) \circ (\vv_1, \mI_{k_2})
    = -r \mU_2 \diag(\vs) \mU_1^T \vv_1,
\end{aligned}
\end{equation}
where $r(t) = f(\vx;\bTheta(t))-y$ is the residual of the data point $(\vx,y)$.
From \eqref{eq:proof-reg-meta2-1} we get
\begin{equation}
\label{eq:proof-reg-meta2-2}
    \mU_1^T \dot \vv_1 = -r \vs \odot \mU_2^T \vv_2,~~
    \mU_2^T \dot \vv_2 = -r \vs \odot \mU_1^T \vv_1.
\end{equation}

Now consider the rate of growth for the $j$-th component of $\mU_1^T \vv_1$ squared:
\begin{equation*}
    \frac{d}{dt} [\mU_1^T \vv_1]_j^2
    = 2 [\mU_1^T \vv_1]_j [\mU_1^T \dot \vv_1]_j
    = -2 r [\vs]_j [\mU_1^T \vv_1]_j [\mU_2^T \vv_2]_j
    = \frac{d}{dt} [\mU_{2}^T \vv_{2}]_j^2.
\end{equation*}
So for any $j\in [m]$, $[\mU_1^T \vv_1]_j^2$ and $[\mU_2^T \vv_2]_j^2$ grow at the same rate. This means that the gap between the two layers stays constant for all $t \geq 0$. Combining this with our conditions on initial directions,
\begin{equation*}
\begin{aligned}
    [\mU_1^T \vv_1(t)]_j^2 - [\mU_2^T \vv_2(t)]_j^2
    &= [\mU_1^T \vv_1(0)]_j^2 - [\mU_2^T \vv_2(0)]_j^2\\
    &= \alpha^2 [\mU_1^T \bar \vv_1]_j^2 - \alpha^2 [\mU_2^T \bar \vv_2]_j^2 \geq \alpha^2 \lambda,
\end{aligned}
\end{equation*}
for any $j \in [m]$ and $t \geq 0$. This inequality implies
\begin{equation}
\label{eq:proof-reg-meta2-3}
    [\mU_1^T \vv_1(t)]_j^2 
    \geq 
    [\mU_2^T \vv_2(t)]_j^2 + \alpha^2 \lambda
    \geq
    \alpha^2 \lambda.
\end{equation}

Let us now consider the time derivative of $\mc L(\bTheta(t))$.
We have the following chain of upper bounds on the time derivative:
\begin{align}
    \frac{d}{dt} \mc L(\bTheta(t))
    &= \nabla_{\bTheta} \mc L(\bTheta(t))^T \dot \bTheta(t)
    = -\ltwos{\nabla_{\bTheta} \mc L(\bTheta(t))}^2\notag\\
    &\leq -\ltwos{\nabla_{\vv_2} \mc L(\bTheta(t))}^2
    = -\ltwos{\dot \vv_2(t)}^2\notag\\
    &\leqwtxt{(a)} -\ltwos{\mU_2^T \dot \vv_2(t)}^2
    \eqwtxt{(b)} -r(t)^2 \ltwo{\vs \odot \mU_1^T \vv_1(t)}^2\notag\\
    &=-r(t)^2 \sum\nolimits_{j=1}^m [\vs]_j^2 [\mU_1^T \vv_1(t)]_j^2\notag\\
    &\leqwtxt{(c)} -\alpha^{2} \lambda r(t)^2 \sum\nolimits_{j=1}^m [\vs]_j^2 \notag\\
    &= -2 \alpha^{2} \lambda \ltwos{\vs}^2 \mc L(\bTheta(t))\notag,
\end{align}
where (a) used the fact that $\ltwos{\dot \vv_2(t)}^2 \geq \ltwos{\mU_2 \mU_2^T \dot \vv_2(t)}^2$ because it is a projection onto a subspace, and $\ltwos{\mU_2 \mU_2^T \dot \vv_L(t)}^2 = \ltwos{\mU_2^T \dot \vv_2(t)}^2$ because $\mU_2^T \mU_2 = \mI_{k_2}$; (b) is due to~\eqref{eq:proof-reg-meta2-2}; (c) is due to~\eqref{eq:proof-reg-meta2-3}.
From this, we get
\begin{equation}
\label{eq:proof-reg-meta2-9}
    \mc L(\bTheta(t)) \leq 
    \mc L(\bTheta(0)) \exp(-2\alpha^{2} \lambda \ltwos{\vs}^2 t).
\end{equation}
Therefore, $\mc L(\bTheta(t)) \to 0$ as $t \to \infty$.

\subsection{Characterizing the limit point}
Now, we move on to characterize the limit points of the gradient flow. First, note that any changes made in $\vv_l$ over time are in the subspace spanned by the columns of $\mU_l$. Therefore, any component in the initialization $\vv_l(0) = \alpha \bar \vv_l$ that is orthogonal to the column space of $\mU_l$ stays constant.

So, we can focus on the evolution of $\vv_l$ in the column space of $\mU_l$; this can be done by defining a ``transformed'' version of the parameters $\veta_l(t) \defeq \mU_l^T \vv_l(t)$ and using \eqref{eq:proof-reg-meta2-2}, one can define an equivalent system of ODEs:
\begin{equation}
\label{eq:proof-reg-meta2-4}
\begin{aligned}
    \dot \veta_1 = -r \vs \odot \veta_2,~&~
    \dot \veta_2 = -r \vs \odot \veta_1,\\
    \veta_1(0) = \alpha \bar \veta_1,~&~
    \veta_2(0) = \alpha \bar \veta_2,
\end{aligned}
\end{equation}
where $\bar \veta_1 \defeq \mU_1^T \bar \vv_1$, $\bar \veta_2 \defeq \mU_2^T \bar \vv_2$.
It is straightforward to verify that the solution to \eqref{eq:proof-reg-meta2-4} has the following form.
\begin{equation}
\label{eq:proof-reg-meta2-7}
\begin{aligned}
    \veta_1(t) &= 
    \alpha \bar \veta_1 \odot \cosh \left (-\vs \int_0^t r(\tau)d\tau \right) 
    + \alpha \bar \veta_2 \odot \sinh \left (-\vs \int_0^t r(\tau)d\tau \right)
    ,\\
    \veta_2(t) &= 
    \alpha \bar \veta_1 \odot \sinh \left (-\vs \int_0^t r(\tau)d\tau \right) 
    + \alpha \bar \veta_2 \odot \cosh \left (-\vs \int_0^t r(\tau)d\tau \right).
\end{aligned}
\end{equation}
By the convergence of the loss to zero~\eqref{eq:proof-reg-meta2-9}, we have $\lim_{t \to \infty} f(\vx; \bTheta(t)) = y$. Note that $f(\vx; \bTheta(t))$ can be written as
\begin{align*}
    f(\vx; \bTheta(t))
    &= \tM(\vx) \circ (\vv_1(t), \vv_2(t))
    = \vv_1(t)^T \tM(\vx) \vv_2(t)\\
    &= \vv_1(t)^T \mU_1 \diag(\vs) \mU_2^T \vv_2(t)
    = \vs^T (\veta_1(t) \odot \veta_2(t)).
\end{align*}
Therefore,
\begin{align}
    &~\lim_{t \to \infty} f(\vx; \bTheta(t)) = \lim_{t \to \infty} \vs^T (\veta_1(t) \odot \veta_2(t))\notag \\
    =&~ \alpha^2 \vs^T \bigg[ 
    (\bar \veta_1^{\odot 2} + \bar \veta_2^{\odot 2}) 
    \odot \cosh \left (-\vs \int_0^\infty r(\tau)d\tau \right) 
    \odot \sinh \left (-\vs \int_0^\infty r(\tau)d\tau \right)\notag\\
    &\quad\quad\quad+(\bar \veta_1 \odot \bar \veta_2) \odot \left ( \cosh^{\odot 2} \left (-\vs \int_0^\infty r(\tau)d\tau \right) + \sinh^{\odot 2} \left (-\vs \int_0^\infty r(\tau)d\tau \right)\right) \bigg ]\notag\\
    =&~ \alpha^2 \vs^T \bigg [
    \frac{\bar \veta_1^{\odot 2} + \bar \veta_2^{\odot 2}}{2}
    \odot \sinh \left (-2 \vs \int_0^\infty r(\tau)d\tau \right) 
    +(\bar \veta_1 \odot \bar \veta_2) 
    \odot \cosh \left (-2 \vs \int_0^\infty r(\tau)d\tau \right) 
    \bigg ]\notag\\
    =&~\alpha^2 \sum_{j=1}^m 
    [\vs]_j \left (\frac{[\bar \veta_1]_j^2+[\bar \veta_2]_j^2}{2} \sinh\left (2[\vs]_j \nu \right ) 
    + [\bar \veta_1]_j [\bar \veta_2]_j \cosh\left (2[\vs]_j \nu \right) \right)\notag\\
    =&~y,\label{eq:proof-reg-meta2-6}
\end{align}
where we defined $\nu \defeq -\int_0^\infty r(\tau) d \tau$.
Consider the function $\nu \mapsto a \sinh(\nu) + b\cosh(\nu)$. This is a strictly increasing function if $a>|b|$. Note also that 
\begin{equation}
\label{eq:proof-reg-meta2-5}
    \frac{[\bar \veta_1]_j^2 + [\bar \veta_2]_j^2}{2} \geq |[\bar \veta_1]_j [\bar \veta_2]_j|,
\end{equation}
which holds with equality if and only if $|[\bar \veta_1]_j| = |[\bar \veta_2]_j|$. However, recall from our assumptions on initialization that
$[\bar \veta_1]_j^2-[\bar \veta_2]_j^2 \geq \lambda > 0$, so \eqref{eq:proof-reg-meta2-5} can only hold with strict inequality. Therefore,
\begin{equation*}
    g(\nu) \defeq \sum_{j=1}^m [\vs]_j \left (\frac{[\bar \veta_1]_j^2+[\bar \veta_2]_j^2}{2} \sinh(2[\vs]_j \nu) + [\bar \veta_1]_j [\bar \veta_2]_j \cosh(2[\vs]_j \nu) \right)
\end{equation*}
is a strictly increasing (hence invertible) function because it is a sum of $m$ strictly increasing function. Using this $g(\nu)$, \eqref{eq:proof-reg-meta2-6} can be written as $\alpha^2 g(\nu) = y$, and by using the inverse of $g$, we have \begin{equation}
\label{eq:proof-reg-meta2-8}
    \nu = - \int_0^\infty r(\tau) d\tau = g^{-1} \left( \frac{y}{\alpha^2} \right ).
\end{equation}
Plugging \eqref{eq:proof-reg-meta2-8} into \eqref{eq:proof-reg-meta2-7}, we get
\begin{align*}
    &~\lim_{t\to\infty} \vv_1(t) \\
    =&~ \mU_1 \lim_{t \to \infty} \veta_1(t) + \alpha (\mI_{k_1} - \mU_1 \mU_1^T ) \bar \vv_1\\
    =&~ \alpha \mU_1 \left ( 
    \bar \veta_1 \odot \cosh \left ( g^{-1} \left( \frac{y}{\alpha^2} \right ) \vs \right) 
    +  \bar \veta_2 \odot \sinh \left ( g^{-1} \left( \frac{y}{\alpha^2} \right ) \vs \right ) 
    \right )
    + \alpha (\mI_{k_1} - \mU_1 \mU_1^T )\bar \vv_1,\\
    &~\lim_{t\to\infty} \vv_2(t) \\
    =&~ \mU_2 \lim_{t \to \infty} \veta_2(t) + \alpha (\mI_{k_2} - \mU_2 \mU_2^T ) \bar \vv_2\\
    =&~ \alpha \mU_2 \left ( 
    \bar \veta_1 \odot \sinh \left ( g^{-1} \left( \frac{y}{\alpha^2} \right ) \vs \right) 
    +  \bar \veta_2 \odot \cosh \left ( g^{-1} \left( \frac{y}{\alpha^2} \right ) \vs \right ) 
    \right )
    + \alpha (\mI_{k_2} - \mU_2 \mU_2^T )\bar \vv_2.
\end{align*}
This finishes the proof.

\section{Proof of Theorem~\ref{thm:fc-reg}}
\label{sec:proof-thm-fc-reg}
\subsection{Proof of Theorem~\ref{thm:fc-reg}.\ref{item:thm-fc-reg-1}: convergence of loss to zero}
We first show that given the conditions on initialization, the training loss $\mc L(\bTheta(t))$ converges to zero.
Recall from \eqref{eq:fcnet-linear} that the linear fully-connected network can be written as
\begin{equation*}
    \ffc(\vx; \bThetafc) = \vx^T \mW_1 \mW_2 \cdots \mW_{L-1} \vw_L.
\end{equation*}
From the definition of the training loss $\mc L$, it is straightforward to check that the gradient flow dynamics read
\begin{equation}
\label{eq:proof-fc-reg-1}
\begin{aligned}
    \dot \mW_l &= - \nabla_{\mW_l} \mc L(\bThetafc) = - \mW_{l-1}^T \cdots \mW_1^T \mX^T \vr \vw_L^T \mW_{L-1}^T \cdots \mW_{l+1}^T \text { for } l \in [L-1],\\
    \dot \vw_L &= - \nabla_{\vw_L} \mc L(\bThetafc) = - \mW_{L-1}^T \cdots \mW_1^T \mX^T \vr,\\
    \mW_l(0) &= \alpha \bar \mW_l \text { for } l \in [L-1],\\
    \vw_L(0) &= \alpha \bar \vw_L,
\end{aligned}    
\end{equation}
where $\vr \in \R^n$ is the residual vector satisfying $[\vr]_i = \ffc(\vx_i;\bThetafc) - y_i$, as defined in Section~\ref{sec:tensorform}.
From~\eqref{eq:proof-fc-reg-1}, we have
\begin{align*}
    &\mW_l^T \dot \mW_l = \dot \mW_{l+1} \mW_{l+1}^T = - \mW_l^T \cdots \mW_1^T \mX^T \vr \vw_L^T \mW_{L-1}^T \cdots \mW_{l+1}^T,\\
    &\dot \mW_l^T \mW_l = \mW_{l+1} \dot \mW_{l+1}^T = - \mW_{l+1} \cdots \mW_{L-1} \vw_L \vr^T \mX \mW_1 \cdots \mW_l,
\end{align*}
for any $l \in [L-2]$. From this, we have
\begin{align*}
    \frac{d}{dt} \mW_l^T \mW_l = \frac{d}{dt} \mW_{l+1} \mW_{l+1}^T,
\end{align*}
and thus
\begin{equation}
\label{eq:proof-fc-reg-3}
\begin{aligned}
    \mW_l(t)^T \mW_l(t) - \mW_{l+1}(t) \mW_{l+1}(t)^T 
    &= \mW_l(0)^T \mW_l(0) - \mW_{l+1}(0) \mW_{l+1}(0)^T \\
    &= \alpha^2 \bar \mW_l^T \bar \mW_l - \alpha^2 \bar \mW_{l+1} \bar \mW_{l+1}^T,
\end{aligned}    
\end{equation}
for any $l \in [L-2]$.
Similarly, we have
\begin{equation}
\label{eq:proof-fc-reg-4}
\begin{aligned}
    \mW_{L-1}(t)^T \mW_{L-1}(t) - \vw_{L}(t) \vw_{L}(t)^T 
    &= \mW_{L-1}(0)^T \mW_{L-1}(0) - \vw_{L}(0) \vw_{L}(0)^T\\ 
    &= \alpha^2 \bar \mW_{L-1}^T \bar \mW_{L-1} - \alpha^2 \bar \vw_{L} \bar \vw_{L}^T.
\end{aligned}
\end{equation}
Let us now consider the time derivative of $\mc L(\bThetafc(t))$.
We have the following chain of upper bounds on the time derivative:
\begin{align}
    \frac{d}{dt} \mc L(\bThetafc(t))
    &= \nabla_{\bThetafc} \mc L(\bThetafc(t))^T \dot \bTheta_{\rm fc}(t)
    = -\ltwos{\nabla_{\bThetafc} \mc L(\bThetafc(t))}^2\notag\\
    &\leq -\ltwos{\nabla_{\vw_L} \mc L(\bThetafc(t))}^2
    = -\ltwos{\dot \vw_L(t)}^2\notag\\
    &= -\ltwos{\mW_{L-1}^T \cdots \mW_1^T \mX^T \vr}^2.\label{eq:proof-fc-reg-2}
\end{align}
Note from \eqref{eq:proof-fc-reg-2} that if $\mW_{L-1}^T \cdots \mW_1^T$ is full-rank, its minimum singular value is positive, and one can bound
\begin{equation}
\label{eq:proof-fc-reg-6}
    \ltwos{\mW_{L-1}^T \cdots \mW_1^T \mX^T \vr} \geq \sigma_{\min}(\mW_{L-1}^T \cdots \mW_1^T) \ltwos{\mX^T \vr}.
\end{equation}
We now prove that the matrix $\mW_{L-1}^T \cdots \mW_1^T$ is full-rank, and its minimum singular value is bounded from below by $\alpha^{L-1} \lambda^{(L-1)/2}$ for any $t \geq 0$. To show this, it suffices to show that
\begin{align}
\label{eq:proof-fc-reg-5}
    \mW_{L-1}^T \cdots \mW_1^T \mW_1 \cdots \mW_{L-1} \succeq \alpha^{2L-2} \lambda^{L-1} \mI_d.
\end{align}
Now,
\begin{align*}
    &~\mW_{L-1}^T \cdots \mW_2^T \mW_1^T \mW_1 \mW_2 \cdots \mW_{L-1}\\
    \eqwtxt{(a)}&~\mW_{L-1}^T \cdots \mW_2^T (\mW_2 \mW_2^T + \alpha^2 \bar \mW_1^T \bar \mW_1 - \alpha^2 \bar \mW_2 \bar \mW_2^T) \mW_2 \cdots \mW_{L-1}\\
    \succeqwtxt{(b)}&~  
    \mW_{L-1}^T \cdots \mW_3^T \mW_2^T \mW_2 \mW_2^T \mW_2 \mW_3 \cdots \mW_{L-1}\\
    \eqwtxt{(a)}&~ \mW_{L-1}^T \cdots \mW_3^T (\mW_3 \mW_3^T + \alpha^2 \bar \mW_2^T \bar \mW_2 - \alpha^2 \bar \mW_3 \bar \mW_3^T )^2 \mW_3 \cdots \mW_{L-1}\\
    \succeqwtxt{(b)}&~
    \mW_{L-1}^T \cdots \mW_3^T (\mW_3 \mW_3^T)^2 \mW_3 \cdots \mW_{L-1}\\
    =&~ \cdots \succeq (\mW_{L-1}^T \mW_{L-1})^{L-1},
\end{align*}
where equalities marked in (a) used \eqref{eq:proof-fc-reg-3}, and inequalities marked in (b) used the initialization conditions $\bar \mW_l^T \bar \mW_l \succeq \bar \mW_{l+1} \bar \mW_{l+1}^T$.
Next, it follows from \eqref{eq:proof-fc-reg-4} that
\begin{align*}
    (\mW_{L-1}^T \mW_{L-1})^{L-1} 
    &= (\vw_L \vw_L^T + \alpha^2 \bar \mW_{L-1}^T \bar \mW_{L-1} - \alpha^2 \bar \vw_L \bar \vw_L^T)^{L-1}\\
    &\succeq \alpha^{2L-2} (\bar \mW_{L-1}^T \bar \mW_{L-1} - \bar \vw_L \bar \vw_L^T)^{L-1}\\
    &\succeqwtxt{(c)} \alpha^{2L-2} \lambda^{L-1} \mI_d.
\end{align*}
where (c) used the assumption that $\bar \mW_{L-1}^T \bar \mW_{L-1} - \bar \vw_L \bar \vw_L^T \succeq \lambda \mI_d$. This proves \eqref{eq:proof-fc-reg-5}. Applying \eqref{eq:proof-fc-reg-5} to \eqref{eq:proof-fc-reg-2} then gives
\begin{align*}
    \frac{d}{dt} \mc L(\bThetafc(t)) 
    &\leq -\ltwos{\mW_{L-1}^T \cdots \mW_1^T \mX^T \vr}^2\\
    &\leq -\sigma_{\min}(\mW_{L-1}^T \cdots \mW_1^T)^2 \ltwos{\mX^T \vr}^2\\
    &\leq -\alpha^{2L-2} \lambda^{L-1} \ltwos{\mX^T \vr}^2\\
    &\leqwtxt{(d)} -\alpha^{2L-2} \lambda^{L-1} \sigma_{\min}(\mX)^2 \ltwos{\vr}^2\\
    &=-\alpha^{2L-2} \lambda^{L-1} \sigma_{\min}(\mX)^2 \mc L(\bThetafc(t)),
\end{align*}
where (d) used the fact that $\mX^T$ is a full column rank matrix to apply a bound similar to \eqref{eq:proof-fc-reg-6}. From this, we get
\begin{equation*}
    \mc L(\bThetafc(t)) \leq \mc L(\bThetafc(0)) \exp(-\alpha^{2L-2} \lambda^{L-1} \sigma_{\min}(\mX)^2 t),
\end{equation*}
hence proving $\mc L(\bThetafc(t)) \to 0$ as $t \to \infty$.

\subsection{Proof of Theorem~\ref{thm:fc-reg}.\ref{item:thm-fc-reg-2}: characterizing the limit point when $\alpha \to 0$}
Now, under the assumption that the initial directions $\bar \mW_1, \dots, \bar \mW_{L-1}, \bar \vw_L$ lead to $\mc L(\bTheta(t)) \to 0$ (which we proved in the previous subsection for certain sufficient conditions), we move on to characterize the limit points of the gradient flow, for the ``active regime'' case $\alpha \to 0$. This part of the proof is motivated from the analysis in \citet{ji2019gradient}.

Let $\vu_l$ and $\vv_l$ be the top left and right singular vectors of $\mW_l$, for $l \in [L-1]$. Note that since $\mW_l$ varies over time, the singular vectors and singular value also vary over time. Similarly, let $s_l$ be the largest singular value of $\mW_l$. We will show that the linear coefficients $\vbetafc(\bThetafc) = \mW_1\cdots\mW_{L-1}\vw_L$ align with $\vu_1$ as $\alpha \to 0$, and $\vu_1$ is in the subspace of $\rowsp(\mX)$ in the limit $\alpha \to 0$, hence proving that $\vbetafc(\bThetafc)$ is the minimum $\ell_2$ norm solution in the limit $\alpha \to 0$.

First, note from \eqref{eq:proof-fc-reg-3} and \eqref{eq:proof-fc-reg-4} that if we take trace of both sides, we get
\begin{align*}
    \nfro{\mW_l}^2 - \nfro{\mW_{l+1}}^2 &= \alpha^2 (\nfro{\bar \mW_l}^2 - \nfro{\bar \mW_{l+1}}^2)~~\text{ for } l \in [L-2],\\
    \nfro{\mW_{L-1}}^2 - \ltwo{\vw_L}^2 &= \alpha^2 (\nfro{\bar \mW_{L-1}}^2 - \ltwo{\bar \vw_L}^2).
\end{align*}
Summing the equations above for $l, l+1, \dots, L-1$, we get
\begin{equation}
\label{eq:proof-fc-reg-7}
    \nfro{\mW_l}^2 - \ltwo{\vw_L}^2 = \alpha^2(\nfro{\bar \mW_l}^2 - \ltwo{\bar \vw_L}^2).
\end{equation}
Next, consider the operator norms (i.e., the maximum singular values), denoted as $\ltwo{\cdot}$, of the matrices.
\begin{align*}
    \ltwo{\mW_l}^2 
    &\geq \vu_{l+1}^T \mW_l^T \mW_l \vu_{l+1}\\
    &\eqwtxt{(e)} \vu_{l+1}^T \mW_{l+1} \mW_{l+1}^T \vu_{l+1}
    + \alpha^2 \vu_{l+1}^T (\bar \mW_{l}^T \bar \mW_{l}- \bar \mW_{l+1} \bar \mW_{l+1}^T) \vu_{l+1}\\
    &= \ltwo{\mW_{l+1}}^2 + \alpha^2 \vu_{l+1}^T (\bar \mW_{l}^T \bar \mW_{l}- \bar \mW_{l+1} \bar \mW_{l+1}^T) \vu_{l+1}\\
    &\geq \ltwo{\mW_{l+1}}^2 - \alpha^2 \ltwos{\bar \mW_{l}^T \bar \mW_{l}- \bar \mW_{l+1} \bar \mW_{l+1}^T}~~\text{ for } l \in [L-2],\\
    \ltwo{\mW_{L-1}}^2
    &\geq \frac{\vw_L}{\ltwo{\vw_L}} \mW_{L-1}^T \mW_{L-1} \frac{\vw_L}{\ltwo{\vw_L}}\\
    &\eqwtxt{(f)} \frac{\vw_L}{\ltwo{\vw_L}} \vw_L \vw_L^T \frac{\vw_L}{\ltwo{\vw_L}} + \alpha^2 \frac{\vw_L}{\ltwo{\vw_L}} (\bar \mW_{L-1}^T \bar \mW_{L-1} - \bar \vw_L \bar \vw_L^T) \frac{\vw_L}{\ltwo{\vw_L}}\\
    &\geq \ltwo{\vw_L}^2 - \alpha^2 \ltwos{\bar \mW_{L-1}^T \bar \mW_{L-1} - \bar \vw_L \bar \vw_L^T}.
\end{align*}
where (e) used \eqref{eq:proof-fc-reg-3} and (f) used \eqref{eq:proof-fc-reg-4}.
Summing the inequalities for $l, l+1, \dots, L-1$ gives
\begin{equation}
	\label{eq:proof-fc-reg-8}
	\begin{aligned}
		\ltwo{\mW_l}^2 \geq \ltwo{\vw_L}^2 
		&- \alpha^2 \sum_{k=l}^{L-2} \ltwos{\bar \mW_{k}^T \bar \mW_{k}- \bar \mW_{k+1} \bar \mW_{k+1}^T}\\
		&- \alpha^2 \ltwos{\bar \mW_{L-1}^T \bar \mW_{L-1}- \bar \vw_{L} \bar \vw_{L}^T}
	\end{aligned}
\end{equation}
From \eqref{eq:proof-fc-reg-7} and \eqref{eq:proof-fc-reg-8}, we get a bound on the gap between the second powers of the Frobenius norm (or the $\ell_2$ norm of singular values) and operator norm (or the maximum singular value $s_l$) of $\mW_l$:
\begin{align}
	\nfro{\mW_l(t)}^2 - \ltwo{\mW_l(t)}^2 
	&\leq 
	\alpha^2(\nfro{\bar \mW_l}^2 - \ltwo{\bar \vw_L}^2) 
	+ \alpha^2 \sum_{k=l}^{L-2} \ltwos{\bar \mW_{k}^T \bar \mW_{k}- \bar \mW_{k+1} \bar \mW_{k+1}^T}\notag\\
	&\quad+ \alpha^2 \ltwos{\bar \mW_{L-1}^T \bar \mW_{L-1}- \bar \vw_{L} \bar \vw_{L}^T},\label{eq:proof-fc-reg-9}
\end{align}
which holds for any $t \geq 0$.
The gap~\eqref{eq:proof-fc-reg-9} implies that each $\mW_l$, for $l \in [L-1]$, can be written as
\begin{equation}
\label{eq:proof-fc-reg-16}
    \mW_l(t) = s_l(t) \vu_l(t) \vv_l(t)^T + O(\alpha^2).
\end{equation}
Next, we show that the ``adjacent'' singular vectors $\vv_l$ and $\vu_{l+1}$ align with each other as $\alpha \to 0$. 
To this end, we will get lower and upper bounds for a quantity $\vv_l^T \mW_{l+1} \mW_{l+1}^T \vv_l$, where $l \in [L-2]$.
\begin{align}
    \vv_l^T \mW_{l+1} \mW_{l+1}^T \vv_l
    &= \vv_l^T \mW_l^T \mW_l \vv_l - \alpha^2 \vv_l^T \bar \mW_l^T \bar \mW_l \vv_l + \alpha^2 \vv_l^T \bar \mW_{l+1} \bar \mW_{l+1}^T \vv_l\notag\\
    &\geq \ltwo{\mW_l}^2 - \alpha^2 \ltwo{\bar \mW_l^T \bar \mW_l -  \bar \mW_{l+1} \bar \mW_{l+1}^T}\notag\\
    &= s_l^2 - \alpha^2 \ltwo{\bar \mW_l^T \bar \mW_l -  \bar \mW_{l+1} \bar \mW_{l+1}^T},\label{eq:proof-fc-reg-10}\\
    \vv_l^T \mW_{l+1} \mW_{l+1}^T \vv_l
    &= \vv_l^T (s_{l+1}^2 \vu_{l+1} \vu_{l+1}^T + \mW_{l+1} \mW_{l+1}^T - s_{l+1}^2 \vu_{l+1} \vu_{l+1}^T) \vv_l\notag\\
    &= s_{l+1}^2 (\vv_l^T \vu_{l+1})^2 + \vv_l^T (\mW_{l+1} \mW_{l+1}^T - s_{l+1}^2 \vu_{l+1} \vu_{l+1}^T) \vv_l\notag\\
    &\leq s_{l+1}^2 (\vv_l^T \vu_{l+1})^2 + \nfro{\mW_{l+1}}^2 - \ltwo{\mW_{l+1}}^2.\label{eq:proof-fc-reg-11}
\end{align}
Combining \eqref{eq:proof-fc-reg-10}, \eqref{eq:proof-fc-reg-11}, and \eqref{eq:proof-fc-reg-9}, we get
\begin{align}
    s_l^2 
    &\leq s_{l+1}^2(\vv_l^T \vu_{l+1})^2 
    + \alpha^2 \ltwo{\bar \mW_l^T \bar \mW_l -  \bar \mW_{l+1} \bar \mW_{l+1}^T}
    + \nfro{\mW_{l+1}}^2 - \ltwo{\mW_{l+1}}^2\notag\\
    &\leq s_{l+1}^2(\vv_l^T \vu_{l+1})^2
    + \alpha^2 (\nfro{\bar \mW_{l+1}}^2 - \ltwo{\bar \vw_L}^2)
    + \alpha^2 \sum_{k=l}^{L-2} \ltwos{\bar \mW_{k}^T \bar \mW_{k}- \bar \mW_{k+1} \bar \mW_{k+1}^T}\notag\\
    &
    \quad+ \alpha^2 \ltwos{\bar \mW_{L-1}^T \bar \mW_{L-1}- \bar \vw_{L} \bar \vw_{L}^T}.
    \label{eq:proof-fc-reg-12}
\end{align}
Next, by a similar reasoning as \eqref{eq:proof-fc-reg-10}, we have
\begin{align}
    s_l^2 \geq \vu_{l+1}^T \mW_{l}^T \mW_{l} \vu_{l+1} \geq s_{l+1}^2 - \alpha^2 \ltwo{\bar \mW_l^T \bar \mW_l -  \bar \mW_{l+1} \bar \mW_{l+1}^T}.\label{eq:proof-fc-reg-13}
\end{align}
Combining \eqref{eq:proof-fc-reg-12} and \eqref{eq:proof-fc-reg-13} and dividing both sides by $s_{l+1}^2$, we get
\begin{equation}
\label{eq:proof-fc-reg-14}
    (\vv_l(t)^T \vu_{l+1}(t))^2 \geq 1 - \alpha^2 
    \frac{G_l}{s_{l+1}(t)^2}
\end{equation}
for $t \geq 0$, where
\begin{align*}
	G_l &\defeq \ltwo{\bar \mW_l^T \bar \mW_l -  \bar \mW_{l+1} \bar \mW_{l+1}^T}
	+ \nfro{\bar \mW_{l+1}}^2 - \ltwo{\bar \vw_L}^2\\
	&\quad + \sum_{k=l}^{L-2} \ltwos{\bar \mW_{k}^T \bar \mW_{k}- \bar \mW_{k+1} \bar \mW_{k+1}^T}
	+\ltwos{\bar \mW_{L-1}^T \bar \mW_{L-1}- \bar \vw_{L} \bar \vw_{L}^T}.
\end{align*}
Applying a similar argument to $l = L-1$, we can also get
\begin{equation}
	\label{eq:proof-fc-reg-15}
	\frac{(\vv_{L-1}(t)^T \vw_L(t))^2}{\ltwo{\vw_L(t)}^2} \geq 1 - \alpha^2 \frac{G_{L-1}}{\ltwo{\vw_L(t)}^2},
\end{equation}
where
$G_{L-1} \defeq 2 \ltwo{\bar \mW_{L-1}^T \bar \mW_{L-1} -  \bar \vw_{L} \bar \vw_{L}^T}$.

From \eqref{eq:proof-fc-reg-14} and \eqref{eq:proof-fc-reg-15}, we can note that as $\alpha \to 0$, the inner product between the adjacent singular vectors converges to $\pm 1$, unless $s_{2}, \dots, s_{L-1}, \ltwo{\vw_L}$ also diminish to zero. So it is left to show that the singular values do not diminish to zero as $\alpha \to 0$.
To this end, recall that we assume for this part that
\begin{equation*}
 \lim_{t \to \infty} \mX \mW_1(t) \cdots \mW_{L-1}(t) \vw_L(t) = \vy.   
\end{equation*}
A necessary condition for this to hold is that 
\begin{equation*}
    \frac{\ltwo{\vy}}{\ltwo{\mX}} \leq \lim_{t \to \infty} \ltwo{\mW_1(t) \cdots \mW_{L-1}(t)\vw_L(t)} \leq \lim_{t \to \infty} \prod_{l=1}^{L-1} s_l(t) \ltwo{\vw_L(t)}.
\end{equation*}
This means that after converging to the global minimum solution of the problem (i.e., $t \to \infty$), the product of the singular values must be at least greater than some constant independent of $\alpha$. Moreover, we can see from \eqref{eq:proof-fc-reg-10} and \eqref{eq:proof-fc-reg-13} that the gap between singular values squared of adjacent layers is bounded by $O(\alpha^2)$, for all $t \geq 0$; so the maximum singular values become closer and closer to each other as $\alpha$ diminishes. 
This implies that 
\begin{equation*}
    \lim_{\alpha \to 0} \lim_{t \to \infty} s_l(t) \geq \frac{\ltwo{\vy}^{1/L}}{\ltwo{\mX}^{1/L}}~~\text{ for } l \in [L-1],~~
    \lim_{\alpha \to 0} \lim_{t \to \infty} \ltwo{\vw_L(t)} \geq \frac{\ltwo{\vy}^{1/L}}{\ltwo{\mX}^{1/L}}.
\end{equation*}
Therefore, we have the alignment of singular vectors at convergence as $\alpha \to 0$:
\begin{equation}
\label{eq:proof-fc-reg-17}
    \lim_{\alpha \to 0} \lim_{t \to \infty} (\vv_l(t)^T \vu_{l+1}(t))^2 = 1,~~\text{ for } l \in [L-2],~~
    \lim_{\alpha \to 0} \lim_{t \to \infty} \frac{(\vv_{L-1}(t)^T \vw_L(t))^2}{\ltwo{\vw_L(t)}^2} = 1.
\end{equation}

So far, we saw from \eqref{eq:proof-fc-reg-16} that $\mW_l(t)$'s become rank-1 matrices as $\alpha \to 0$, and from \eqref{eq:proof-fc-reg-17} that the top singular vectors align with each other as $t \to \infty$ and $\alpha \to 0$. These imply that, as $t \to \infty$ and $\alpha \to 0$, $\vbetafc(\bThetafc)$ is a scalar multiple of $\vu_1$, the top left singular vector of $\mW_1$:
\begin{equation}
\label{eq:proof-fc-reg-18}
    \lim_{\alpha \to 0} \lim_{t \to \infty} \vbetafc(\bThetafc(t))
    = c \cdot \lim_{\alpha \to 0} \lim_{t \to \infty} \vu_1(t),
\end{equation}
for some $c \in \reals$.

In light of \eqref{eq:proof-fc-reg-18}, it remains to take a close look at $\vu_1(t)$.
Note from the gradient flow dynamics of $\mW_1$ that $\dot \mW_1$ is always a rank-1 matrix whose columns are in the row space of $\mX$, since $\mX ^T \vr \in \rowsp(\mX)$. This implies that, if we decompose $\mW_1$ into two orthogonal components $\mW_1^\perp$ and $\mW_1^\parallel$ so that the columns in $\mW_1^\parallel$ are in $\rowsp(\mX)$ and the columns in $\mW_1^\perp$ are in the orthogonal subspace $\rowsp(\mX)^\perp$, we have
\begin{equation*}
    \dot \mW_1^\perp = \zeros,~~
    \dot \mW_1^\parallel = \dot \mW_1.
\end{equation*}
That is, any component $\mW_1^\perp(0)$ orthogonal to $\rowsp(\mX)$ remains unchanged for all $t \geq 0$, while the component $\mW_1^\parallel$ changes by the gradient flow.
Since we have 
\begin{equation*}
    \nfro{\mW_1^\perp(t)} = \nfro{\mW_1^\perp(0)} \leq \alpha \nfro{\bar \mW_l},
\end{equation*} 
the component in $\mW_1$ that is orthogonal to $\rowsp(\mX)$ diminishes to zero as $\alpha \to 0$. This means that in the limit $\alpha \to 0$, the columns of $\mW_1$ are entirely from $\rowsp(\mX)$, which also means that
\begin{equation*}
    \lim_{\alpha \to 0} \lim_{t \to \infty} \vbetafc(\bThetafc(t)) \in \rowsp(\mX).
\end{equation*}
However, recall that there is only one unique global minimum $\vz$ satisfying $\mX \vz = \vy$ that is in $\rowsp(\mX)$: namely, $\vz = \mX^T(\mX \mX^T)^{-1} \vy$, the minimum $\ell_2$ norm solution. This finishes the proof.

\end{document}